\documentclass[journal]{IEEEtran}

\usepackage{algorithm,algorithmic,amsmath,amssymb,amsthm,amsfonts}
\usepackage{bm,color,subfig,dsfont,pifont,hyperref}
\usepackage{graphicx}
\usepackage{array}
\usepackage{comment}
\usepackage{enumitem}


\hyphenation{op-tical net-works semi-conduc-tor IEEE-Xplore}

\providecommand{\Vector}[1]{\boldsymbol{#1}}
\providecommand{\Matrix}[1]{\boldsymbol{\mathbf{#1}}}
\providecommand{\fnorm}[1]{\left\Vert#1\right\Vert_{\mathrm{F}}}
\providecommand{\twoinfnorm}[1]{\left\|#1\right\|_{2,\infty}}
\providecommand{\twonorm}[1]{\left\|#1\right\|_{2}}
\providecommand{\opnorm}[1]{\left\|#1\right\|_{\mathrm{op}}}
\providecommand{\freg}[0]{f_{\text{reg}}}
\providecommand{\fbal}[0]{f_{\text{bal}}}
\providecommand{\dist}[2]{\mathrm{dist}(#1, #2)}

\newtheorem{assumption}{Assumption}
\newtheorem{hypothesis}{Hypothesis}

\newtheorem{lemma}{Lemma}
\newtheorem{theorem}{Theorem}

\newcommand{\norm}[1]{\left\lVert#1\right\rVert}

\begin{document}

\title{Global Convergence Analysis of Vanilla Gradient Descent for Asymmetric Matrix Completion}

\author{Xu Zhang,~\IEEEmembership{Member,~IEEE,} Shuo Chen, Jinsheng Li, Xiangying Pang, Maoguo Gong,~\IEEEmembership{Fellow,~IEEE}
\thanks{This work was supported by the Postdoctoral Fellowship Program of CPSF under Grant No. GZC20232038 and the China Postdoctoral Science Foundation under Grant No. 2024M762521. \textit{(Corresponding author: Xiangying Pang.)}}
\thanks{X.~Zhang is with School of Artificial Intelligence, Xidian University, Xi'an 710126, China (e-mail: zhang.xu@xidian.edu.cn).}
\thanks{S.~Chen is with Department of Architecture and Design, Huawei Cloud, Hangzhou 310051, China (e-mail: chenshuo51@huawei.com).}
\thanks{J.~Li is with the Future Technology Research Center, China Telecom Research
 Institute, Beijing 102209, China (e-mail: lijs45@chinatelecom.cn). }
\thanks{X.~Pang is with Department of Mathematics, The Chinese University of Hong Kong, Hong Kong SAR of China (e-mail: xypang@math.cuhk.edu.hk).}
\thanks{M.~Gong is with the Key Laboratory of Collaborative Intelligence Systems, Ministry of Education, School of Electronic Engineering, Xidian University, Xi'an, China, and the Academy of Artificial Intelligence, College of Mathematics Science, Inner Mongolia Normal University, Hohhot, China (e-mail: mggong@mail.xidian.edu.cn).}

}
\markboth{Journal of \LaTeX\ Class Files,~Vol.~14, No.~8, August~2021}%
{Shell \MakeLowercase{\textit{et al.}}: A Sample Article Using IEEEtran.cls for IEEE Journals}


\maketitle

\begin{abstract}
This paper investigates the asymmetric low-rank matrix completion problem, which can be formulated as an unconstrained non-convex optimization problem with a nonlinear least-squares objective function, and is solved via gradient descent methods. Previous gradient descent approaches typically incorporate regularization terms into the objective function to guarantee convergence. However, numerical experiments and theoretical analysis of the gradient flow both demonstrate that the elimination of regularization terms in gradient descent algorithms does not adversely affect convergence performance. By introducing the leave-one-out technique, we inductively prove that the vanilla gradient descent with spectral initialization achieves a linear convergence rate with high probability. Besides, we demonstrate that the balancing regularization term exhibits a small norm during iterations, which reveals the implicit regularization property of gradient descent. Empirical results show that our algorithm has a lower computational cost while maintaining comparable completion performance compared to other gradient descent algorithms.
\end{abstract}

\begin{IEEEkeywords}
Matrix completion, vanilla gradient descent, regularization-free, global convergence
\end{IEEEkeywords}

\section{Introduction}

Low-rank matrix completion focuses on how to recover the remaining unknown elements of a matrix based on its partial elements under the low-rank assumption \cite{candes2010power,candes2012exact}, which is widely used in applications such as recommender systems \cite{ramlatchan2018survey, chen2022review}, image inpainting \cite{xue2017depth,cai2024restoration}, and network localization \cite{xiao2017noise, kim2021deep}. Specifically, given a target matrix $\Vector{M}_{\star}\in\mathbb{R}^{d_{1}\times d_{2}}$ with rank $r$, only partial elements $\mathcal{P}_{\Omega}(\Vector{M}_{\star})$ are observed, where $r\ll \min\{d_{1}, d_{2}\}$, $\Omega\subset [d_{1}]\times[d_{2}]$ denote the set of observable elements and $\mathcal{P}_{\Omega}(\cdot)$ is a projection operator defined as
\begin{equation}
\label{eqn-projection-defn}
\left[\mathcal{P}_{\Omega}\left(\Vector{M}_{\star}\right)\right]_{ij} \triangleq \begin{cases}
\left[\Vector{M}_{\star}\right]_{ij}, & (i, j)\in\Omega, \\
0, & (i, j)\not\in\Omega.
\end{cases}
\end{equation}
The goal of matrix completion is to recover $\Vector{M}_{\star}$ from the partial measurements $\mathcal{P}_{\Omega}\left(\Vector{M}_{\star}\right)$. 

Suppose that the rank of the target matrix $\Vector{M}_{\star}$ is known beforehand, then $\Vector{M}_{\star}$ can be decomposed into the product of two low-rank matrices, and can be modeled as a non-linear least-squares problem
\begin{equation}
\label{eqn-matrix-completion}
\min_{\Vector{X},\Vector{Y}} 
f(\Vector{X}, \Vector{Y})\triangleq \frac{1}{2p}\norm{\mathcal{P}_{\Omega}\left(\Vector{X}\Vector{Y}^\top - \Vector{M}_{\star}\right)}^{2}_{\mathrm{F}},
\end{equation}
where $\Vector{X}\in\mathbb{R}^{d_{1}\times r}$, $\Vector{Y}\in\mathbb{R}^{d_{2}\times r}$, and $p$ denotes the sampling probability. Considering $r\ll\min\{d_{1}, d_{2}\}$, this model significantly alleviates the computational difficulty by reducing the number of variables from $d_{1}\times d_{2}$ to $r\times(d_{1} + d_{2})$. 

The non-convexity of the model prevents us from guaranteeing that the iterative sequence $\left\{\Vector{X}_{k}\Vector{Y}_{k}^\top\right\}_{k=0}^{+\infty}$ converges to $\Vector{M}_{\star}$. During the iterative process, there might be an ill-conditioned situation where the magnitudes of $\Vector{X}_{k}$ and $\Vector{Y}_{k}$ are asymmetric, i.e., the norm of one is too large while the norm of the other is too small. This asymmetry might harm the convergence of the algorithm. To ensure convergence, regularization terms are introduced to prevent $\Vector{X}$ and $\Vector{Y}$ from differing significantly in the sense of norms \cite{raghunandan2010matrix}. A common regularization term is $\fnorm{\Vector{X}}^{2}+\fnorm{\Vector{Y}}^{2}$ \cite{koren2009matrix,chen2015fast, sun2016guaranteed, chen2020noisy}, and the related problem becomes
\begin{multline}
\label{eqn-mc-regularization}
\min_{\Vector{X}, \Vector{Y}}~\freg(\Vector{X}, \Vector{Y}) = \frac{1}{2p}\fnorm{\mathcal{P}_{\Omega}\left(\Vector{X}\Vector{Y}^\top- \Vector{M}_{\star}\right)}^{2} \\ + \frac{\lambda}{2} \big(\fnorm{\Vector{X}}^{2} + \fnorm{\Vector{Y}}^{2}\big),
\end{multline}
where $\lambda>0$ is a regularization parameter.

Another common regularization term is the balancing term $f_{\mathrm{diff}}(\Vector{X}, \Vector{Y}) = \fnorm{\Vector{X}^\top\Vector{X} - \Vector{Y}^\top\Vector{Y}}^{2}$  \cite{chen2020nonconvex}. The idea is also very intuitive: when the norms of $\Vector{X}$ and $\Vector{Y}$ differ significantly, the value of the balancing term will increase, thus acting as a penalty function. After introducing the balancing term, the problem becomes
\begin{multline}
    \label{eqn-mc-balancing}
\min_{\Vector{X}, \Vector{Y}} \fbal(\Vector{X}, \Vector{Y}) \triangleq \frac{1}{2p}\fnorm{\mathcal{P}_{\Omega}\left(\Vector{X}\Vector{Y}^\top- \Vector{M}_{\star}\right)}^{2} \\ + \frac{1}{8}\fnorm{\Vector{X}^\top\Vector{X} - \Vector{Y}^\top\Vector{Y}}^{2}.
\end{multline}

\subsection{Motivations}

The incorporation of regularization terms inherently increases the computational cost of gradient computation while simultaneously introducing additional hyperparameters that require careful tuning. However, numerical experiments in Fig. \ref{fig: matrix-completion-via-gd} show that the elimination of regularization terms does not adversely affect the convergence speed of the gradient descent (GD) algorithm under spectral initialization. In particular, we compare the convergence rates of vanilla GD (VGD) for problem \eqref{eqn-matrix-completion},  regularized GD (RGD) for problem \eqref{eqn-mc-regularization}, and balancing GD (BGD) for problem \eqref{eqn-mc-balancing} in Fig. \ref{fig: matrix-completion-via-gd}. Two randomly generated target matrices $\Vector{M}_{\star}\in\mathbb{R}^{1200\times 800}$ have a rank of $10$, and the condition number $\kappa$ is 1 and 3, respectively. The sampling probability is $p=0.2$, the step size is $s=0.5$, and $\lambda$ in problem \eqref{eqn-mc-regularization} is chosen in $\{10^{-3}, 10^{-6}, 10^{-10}\}$.  It can be observed that VGD and BGD converge almost identically, with linear convergence rates. As for RGD, the convergence curves settle into some fixed errors, and the smaller the parameter $\lambda$, the lower the fixed error. This also confirms that the regularization term is not necessary for asymmetric matrix completion.

\begin{figure}[htb]
\centering
 \subfloat[$\kappa=1$]{
     \includegraphics[scale=0.32]{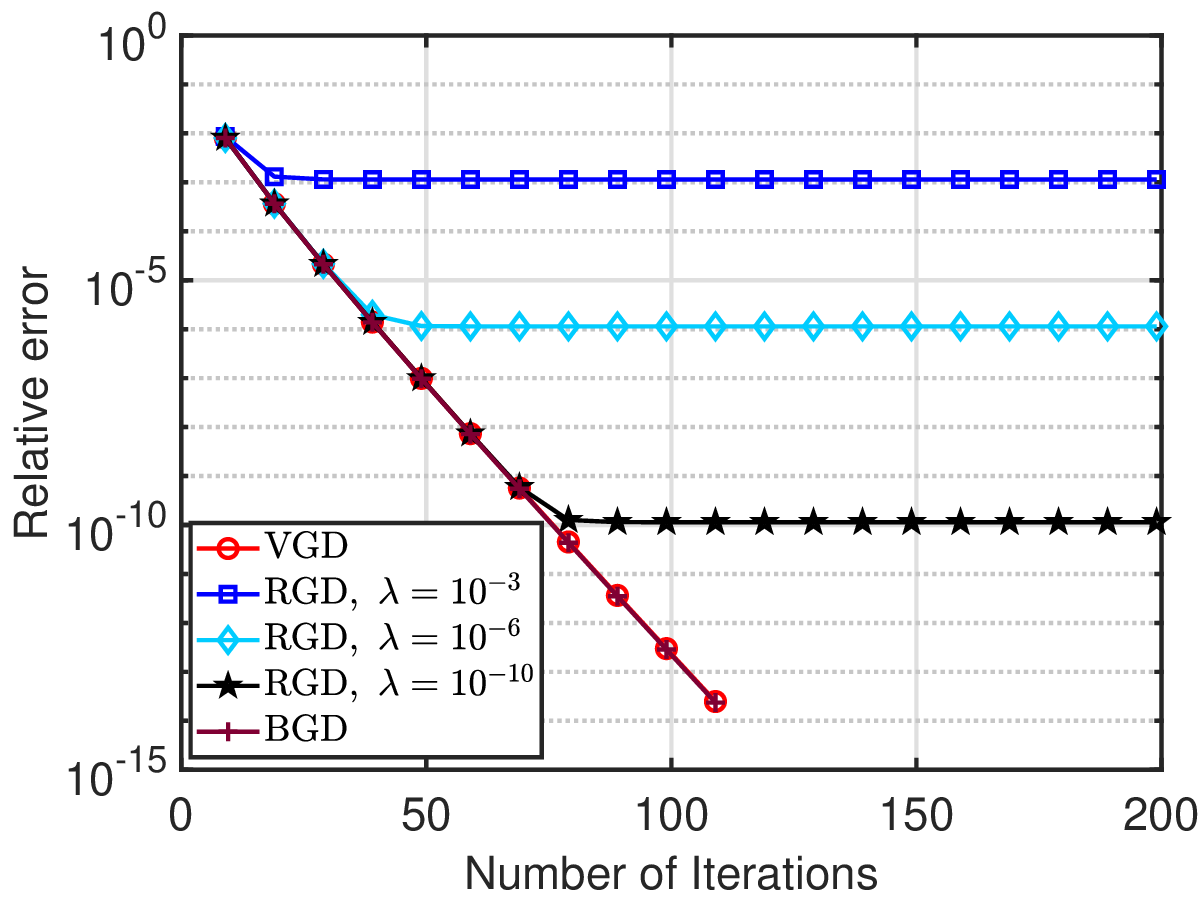}}
    \hfill
\subfloat[$\kappa=3$]{
    \includegraphics[scale=0.32]{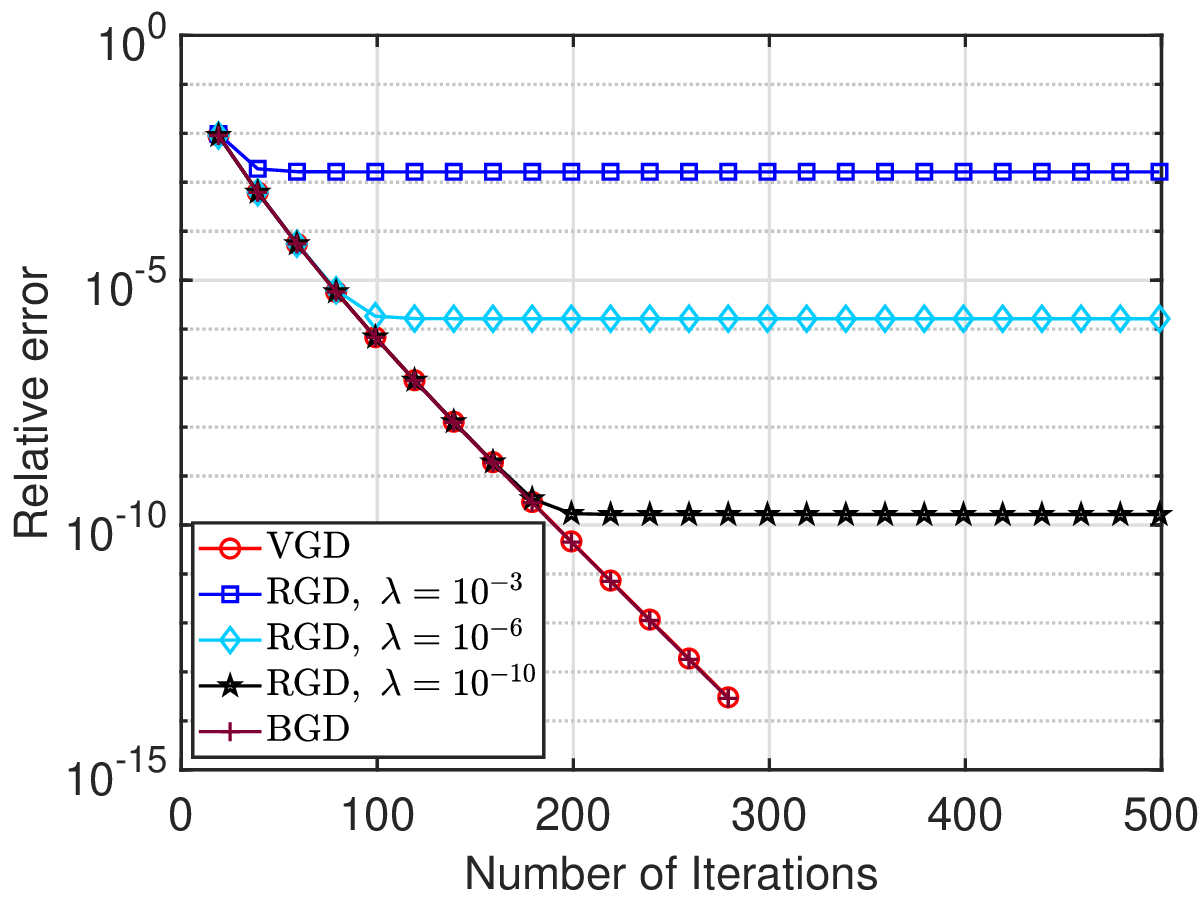}}
\caption{Convergence results of VGD for \eqref{eqn-matrix-completion}, RGD for \eqref{eqn-mc-regularization} and BGD for \eqref{eqn-mc-balancing} under $d_1=1200$, $d_2=800$, $r=10$ and $p=0.2$.}
\label{fig: matrix-completion-via-gd}
\end{figure}

The above numerical results demonstrate that eliminating the balancing term preserves convergence performance. Next, we further validate this finding through differential equation analysis. The gradient flow corresponding to the gradient method of problem \eqref{eqn-matrix-completion} (c.f. \eqref{eqn-gd-mc_1} and \eqref{eqn-gd-mc_2}) is
\begin{equation}
\label{eqn-gradient-flow-mc}
\begin{cases}
\Dot{\Vector{X}}(t) = - \frac{1}{p}\mathcal{P}_{\Omega}\left(\Vector{X}(t)\Vector{Y}(t)^\top- \Vector{M}_{\star}\right)\Vector{Y}(t), \\
\Dot{\Vector{Y}}(t) = - \frac{1}{p}\mathcal{P}_{\Omega}\left(\Vector{X}(t)\Vector{Y}(t)^\top- \Vector{M}_{\star}\right)^\top\Vector{X}(t).
\end{cases}
\end{equation}

Denote the solutions of Eq. \eqref{eqn-gradient-flow-mc} as $\Vector{X} = \Vector{X}(t)$, $\Vector{Y} = \Vector{Y}(t)$. Then we calculate the derivative of the balancing term $f_{\text{diff}}(\Vector{X}, \Vector{Y})$ with respect to time $t$
\begin{multline}
\frac{\text{d}}{\text{d}t}f_{\text{diff}}(\Vector{X}, \Vector{Y}) =  \frac{1}{2}\left\langle\Vector{X}\left(\Vector{X}^\top\Vector{X} -  \Vector{Y}^\top\Vector{Y}\right), \Dot{\Vector{X}}\right\rangle \\- \frac{1}{2} \left\langle\Vector{Y}\left(\Vector{X}^\top\Vector{X} - \Vector{Y}^\top\Vector{Y}\right), \Dot{\Vector{Y}}\right\rangle.
\end{multline}
Notice that
\begin{align*}
&\left\langle\Vector{X}\left(\Vector{X}^\top\Vector{X} - \Vector{Y}^\top\Vector{Y}\right), \Dot{\Vector{X}}\right\rangle \\
= & \left\langle \Vector{X}\left(\Vector{X}^\top\Vector{X} - \Vector{Y}^\top\Vector{Y}\right), - \frac{1}{p}\mathcal{P}_{\Omega}\left(\Vector{X}\Vector{Y}^\top- \Vector{M}_{\star}\right)\Vector{Y}\right\rangle \\
= & \left\langle \Vector{X}\left(\Vector{X}^\top\Vector{X} - \Vector{Y}^\top\Vector{Y}\right)\Vector{Y}^\top, - \frac{1}{p}\mathcal{P}_{\Omega}\left(\Vector{X}\Vector{Y}^\top- \Vector{M}_{\star}\right)\right\rangle, 
\end{align*}
and
\begin{align*}
&\left\langle\Vector{Y}\left(\Vector{X}^\top\Vector{X} - \Vector{Y}^\top\Vector{Y}\right), \Dot{\Vector{Y}}\right\rangle \\
= & \left\langle \Vector{Y}\left(\Vector{X}^\top\Vector{X} - \Vector{Y}^\top\Vector{Y}\right)\Vector{X}^\top, - \frac{1}{p}\mathcal{P}_{\Omega}\left(\Vector{X}\Vector{Y}^\top- \Vector{M}_{\star}\right)^\top\right\rangle \\
= & \left\langle \Vector{X}\left(\Vector{X}^\top\Vector{X} - \Vector{Y}^\top\Vector{Y}\right)\Vector{Y}^\top, - \frac{1}{p}\mathcal{P}_{\Omega}\left(\Vector{X}\Vector{Y}^\top- \Vector{M}_{\star}\right)\right\rangle,
\end{align*}
where means 
\begin{equation}
\frac{\text{d}}{\text{d}t}f_{\text{diff}}(\Vector{X}, \Vector{Y}) =  0.
\end{equation}
This indicates that in the continuous sense, the balancing term is a constant, and thus it does not affect the convergence of the solution.

\subsection{Contributions}
This paper studies vanilla gradient descent for low-rank asymmetric matrix completion. Our contributions are twofold:
\begin{itemize}
    \item[1)] This paper establishes the theoretical analysis for the linear convergence rate of the vanilla gradient descent method based on spectral initialization. This result provides the first convergence rate result for the asymmetric matrix completion problem without regularization terms, which concludes the theoretical framework of the equivalence between regularized and non-regularized matrix recovery problems.
    \item[2)] This paper reveals the implicit regularization property of the vanilla gradient descent with spectral initialization. By introducing an auxiliary leave-one-out completion problem and its corresponding sequence, theoretical analysis demonstrates that the norm of the balancing term remains small during the iterative process, thereby demonstrating that gradient descent exhibits implicit regularization properties.
\end{itemize}

\subsection{Related Work}
Matrix completion is a fundamental subclass of matrix recovery problems \cite{davenport2016overview}, which has been widely studied over the past two decades due to its ability to exploit low-dimensional structure in high-dimensional data. The seminal work of Candès and Recht established nuclear norm minimization (NNM) as a convex surrogate for rank minimization, which guarantees exact recovery under uniform sampling and incoherence conditions \cite{candes2012exact,recht2011simpler}. Despite its theoretical elegance, NNM suffers from computational intractability in large-scale applications, rendering it impractical for modern datasets with millions of rows and columns. To overcome these limitations, researchers turned to non-convex matrix factorization methods, which reduce storage and enable gradient-based optimization.

Early non-convex approaches relied on explicit regularizers to ensure identifiability and control parameter norms, e.g., the regularization term in problem \eqref{eqn-mc-regularization} and the balancing term in problem \eqref{eqn-mc-balancing}. Jain et al. \cite{jain2013low} provided convergence guarantees for alternating minimization with a penalty on $\ell_2$ row norm. Sun and Luo \cite{sun2016guaranteed} demonstrated that RGD for regularized objectives in problem \eqref{eqn-mc-regularization} avoids spurious local minima, and Chen et al. \cite{chen2020noisy} analyzed the statistical guarantees for RGD of problem \eqref{eqn-mc-regularization} in the noisy case.  Nie et al.\cite{nie2018matrix} employed a parameter-free logarithmic regularizer and proposed an efficient reweighted optimization algorithm with a convergence guarantee. Chen et al. \cite{chen2020nonconvex} established the sampling rate requirements for problem \eqref{eqn-mc-balancing} by using BGD with spectral initialization. 

A growing body of research questions the necessity of explicit regularization in matrix recovery problems. For symmetric positive semidefinite matrix completion, Ma et al. \cite{ma2020implicit} demonstrated that VGD with spectral initialization converges to the global optimality without regularization, while Ma and Fattahi \cite{ma2024convergence} proved that VGD with small initialization converges globally without any explicit regularization, even in overparameterized cases. For asymmetric matrices, global convergence without regularization terms was established only in mantrix factorization with fully observed settings or matrix sensing with restricted isometry property (RIP) measurements. In particular, Ye and Du \cite{ye2021global} presented that VGD with small initialization converges globally for asymmetric low-rank mantrix factorization without regularization terms on a fully observed matrix. Ma et al. \cite{ma20121beyond} showed that VGD with spectral initialization converges linearly to the optimality in matrix sensing with RIP assumptions. Soltanolkotabi et al. \cite{soltanolkotabi2025implicit} establish linear convergence for implicit balancing and regularization in overparameterized asymmetric matrix sensing.
However, asymmetric matrix completion without regularization terms remains challenging. The sparse sampling operator $\mathcal{P}_{\Omega}$ violates RIP, which weakens concentration bounds and necessitates incoherence condition. Besides, the norms of $\Vector{X}$ and $\Vector{Y}$ can diverge without regularization, and the sparse sampling might exacerbate the imbalance. 

\subsection{Organization}
The remainder of this paper is organized as follows. Section \ref{sec:algorithm} presents a vanilla gradient descent algorithm tailored for asymmetric matrix completion. Section \ref{sec:convergence} establishes global convergence guarantees for the proposed algorithm and provides a proof roadmap to elucidate key technical insights. Section \ref{sec:simulations} makes simulations to validate our theoretical results and Section \ref{sec:conclusion} provides the conclusion.

\section{Algorithms} \label{sec:algorithm}
This section introduces the gradient descent algorithm for the asymmetric matrix completion problem \eqref{eqn-matrix-completion}.

First of all, we leverage the spectral initialization method to initialize the iteration sequence. Denote the truncated rank-$r$ singular value decomposition (SVD) of $\frac{1}{p}\mathcal{P}_{\Omega}(\Vector{M}_{\star})$ as 
\begin{equation}
    \mathcal{T}_r\left(\frac{1}{p}\mathcal{P}_{\Omega}(\Vector{M}_{\star})\right)=\Vector{U}_{0}\Vector{\Sigma}_{0}\Vector{V}_{0}^\top,
\end{equation}
where $\Vector{U}_{0}\in\mathbb{R}^{d_{1}\times r}$ and $\Vector{V}_{0}\in\mathbb{R}^{d_{2}\times r}$ are orthonormal matrices, and $\Vector{\Sigma}_{0}\in\mathbb{R}^{r\times r}$ is a diagonal matrix. We initialize the iteration sequence as follows
\begin{equation}
    \Vector{X}_{0} = \Vector{U}_{0}\Vector{\Sigma}_{0}^{1/2}, \quad \Vector{Y}_{0} = \Vector{V}_{0}\Vector{\Sigma}_{0}^{1/2},
\end{equation}

Next. we explore the use of the gradient descent method to solve this problem in a parallel manner. The gradient of $f(\Vector{X}, \Vector{Y})$ is
\begin{align}
\nabla_{\Vector{X}}f(\Vector{X}, \Vector{Y}) = & \frac{1}{p}\mathcal{P}_{\Omega}\left(\Vector{X}\Vector{Y}^\top - \Vector{M}_{\star}\right)\Vector{Y}, \\
\nabla_{\Vector{Y}}f(\Vector{X}, \Vector{Y}) = & \frac{1}{p}\mathcal{P}_{\Omega}\left(\Vector{X}\Vector{Y}^\top - \Vector{M}_{\star}\right)^\top\Vector{X}.
\end{align}
Therefore, the update rule of the gradient descent method is
\begin{align}
\label{eqn-gd-mc_1}
\Vector{X}_{k+1} &= \Vector{X}_{k} - \frac{s}{p}\mathcal{P}_{\Omega}\left(\Vector{X}_{k}\Vector{Y}_{k}^\top - \Vector{M}_{\star}\right)\Vector{Y}_{k}, \\
\Vector{Y}_{k+1} &= \Vector{Y}_{k} - \frac{s}{p}\mathcal{P}_{\Omega}\left(\Vector{X}_{k}\Vector{Y}_{k}^\top - \Vector{M}_{\star}\right)^\top\Vector{X}_{k},
\label{eqn-gd-mc_2}
\end{align}
where $s>0$ denotes the step size. We summarize the above process in Algorithm \ref{alg:alg1}, where $K$ denotes the largest number of iterations.

\begin{algorithm}[H]
\caption{Vanilla Gradient Descent (VGD) for Asymmetric Matrix Completion} \label{alg:alg1}
\begin{algorithmic}
\STATE \textbf{Initialization:} $\Vector{U}_{0}\Vector{\Sigma}_{0}\Vector{V}_{0}^\top =\mathcal{T}_r (\frac{1}{p}\mathcal{P}_{\Omega}(\Vector{M}_{\star}))$, $\Vector{X}_{0} = \Vector{U}_{0}\Vector{\Sigma}_{0}^{1/2}, \Vector{Y}_{0} = \Vector{V}_{0}\Vector{\Sigma}_{0}^{1/2}$ 
\FOR{$k=0,\ldots,K-1$}
\STATE $\Vector{X}_{k+1} = \Vector{X}_{k} - \frac{s}{p}\mathcal{P}_{\Omega}\left(\Vector{X}_{k}\Vector{Y}_{k}^\top - \Vector{M}_{\star}\right)\Vector{Y}_{k}$
\STATE $\Vector{Y}_{k+1} = \Vector{Y}_{k} - \frac{s}{p}\mathcal{P}_{\Omega}\left(\Vector{X}_{k}\Vector{Y}_{k}^\top - \Vector{M}_{\star}\right)^\top\Vector{X}_{k}$
\ENDFOR
\STATE \textbf{Ouuput:} $\Vector{M}_K=\Vector{X}_{K}\Vector{Y}_{K}^\top$
\end{algorithmic}
\label{alg1}
\end{algorithm}

\section{Convergence Guarantees} \label{sec:convergence}
This section provides the convergence rate of Algorithm \ref{alg:alg1}. Before that, we first provide some important definitions and assumptions. 
Let $\sigma_{\max}$ be the largest singular value of $\Vector{M}_{\star}$ and $\sigma_{\min}$ be the smallest non-zero singular value. The condition number is denoted as $\kappa\triangleq\sigma_{\max}/\sigma_{\min}$. 

Assume that the sampling set $\Omega$ is generated by independent Bernoulli sampling.
\begin{assumption}[Bernoulli Sampling] \label{assumption-iid}
    For any $i\in[d_{1}]$ and $j\in [d_{2}]$, the element $[\Vector{M}_{\star}]_{ij}$ is observed with probability $p$, where and $0 < p \leq 1$.
\end{assumption}

To prevent the nonzero elements of $\Vector{M}_{\star}$ from being concentrated in a few positions, it is necessary to introduce the assumption of the $\mu$-incoherence property of $\Vector{M}_{\star}$.
\begin{assumption}[Incoherence Condition, \cite{candes2012exact}]
\label{assumption-incoherence}
Let the SVD of $\Vector{M}_{\star}$ be $\Vector{M}_{\star} = \Vector{U}_{\star}\Vector{\Sigma}_{\star}\Vector{V}_{\star}^\top$, where $\Vector{U}_{\star}\in\mathbb{R}^{d_{1}\times r}$ and $\Vector{V}_{\star}\in\mathbb{R}^{d_{2}\times r}$ are orthonormal matrices, and $\Vector{\Sigma}_{\star}\in\mathbb{R}^{r\times r}$ is a diagonal matrix. If $\Vector{U}_{\star}$ and $\Vector{V}_{\star}$ satisfy
\begin{equation}
\left\|\Vector{U}_{\star}\right\|_{2,\infty} \leq \sqrt{\frac{\mu r}{d_{1}}}~, \quad \left\|\Vector{V}_{\star}\right\|_{2,\infty} \leq \sqrt{\frac{\mu r}{d_{2}}}~,
\end{equation}
then $\Vector{M}_{\star}$ is $\mu$-incoherent, where
$\|\Vector{A}\|_{2,\infty}$ the largest $\ell_{2}$-norm of all the rows of $\Vector{A}$.
\end{assumption}

It is worth noting that, under Assumption \ref{assumption-incoherence}, we have $\mu\geq1$. Otherwise, we have
\begin{equation}
\fnorm{\Vector{U}_{\star}}^{2} \leq d_{1}\twoinfnorm{\Vector{U}_{\star}}^{2} \leq \mu r < r,
\end{equation}
which contradicts the fact that $\Vector{U}_{\star}$ is an orthogonal matrix.

In addition, if Assumptions \ref{assumption-iid} and \ref{assumption-incoherence} hold, the projection operator $p^{-1}\mathcal{P}_{\Omega}$ satisfies the RIP to some extent, that is, its behavior is close to that of the identity operator $\mathcal{I}$ from $\mathbb{R}^{d_{1}\times d_{2}}$ to $\mathbb{R}^{d_{1}\times d_{2}}$, which makes it possible to complete the matrix for undersampled elements. Please refer to Lemmas \ref{lemma-rip-subspace} and \ref{lemma-rip-all-space} in the Supplementary Material for more information.

Define $\Vector{F}_k=[\Vector{X}_k^\top, \Vector{Y}_k^\top]^\top$ and the optimal solution as
\begin{align}
    \Vector{F}_{\star} \triangleq \begin{bmatrix}
 \Vector{X}_{\star} \\
  \Vector{Y}_{\star}
\end{bmatrix} = \begin{bmatrix}
 \Vector{U}_{\star}\Vector{\Sigma}_{\star}^{1/2} \\
  \Vector{V}_{\star}\Vector{\Sigma}_{\star}^{1/2}
\end{bmatrix}\in\mathbb{R}^{(d_{1} + d_{2})\times r},
\end{align}
where $\Vector{X}_{\star}=\Vector{U}_{\star}\Vector{\Sigma}_{\star}^{1/2}$, $\Vector{Y}_{\star}=\Vector{V}_{\star}\Vector{\Sigma}_{\star}^{1/2}$. Note that the above term is an optimal solution to problem \eqref{eqn-matrix-completion}. However, due to the non-uniqueness of optimal solutions, we formally define the distance between $\Vector{F}_k$ and $\Vector{F}_{\star}$ as follows
\begin{multline}
\label{eqn-defn-distance}
\dist{\Vector{F}_{k}}{\Vector{F}_{\star}} \triangleq \\ \sqrt{\inf_{\Vector{Q}\in\text{GL}(r)}\left(\fnorm{\Vector{X}_{k}\Vector{Q} - \Vector{X}_{\star}}^{2} + \fnorm{\Vector{Y}_{k}\Vector{Q}^{-\rm{T}} - \Vector{Y}_{\star}}^{2}\right)},
\end{multline}
where $\text{GL}(r)=\{Q\in\mathbb{R}^{r\times r}: Q \text{ is invertible}\}$ is general linear group of $r$ degree.

Based on the distance metric \eqref{eqn-defn-distance}, we present the main theorem.
\begin{theorem}
\label{thm-linear-convergence}
Suppose that $\Vector{M}_{\star}$ is $\mu$-incoherent. If the sampling rate $p$ and the step size $s$ satisfy
\begin{equation}
\label{eqn-assumption-p-s}
\begin{split}
& p\geq \frac{C_{3}\mu^{3}r^{3}\kappa^{16}\max\{d_{1}, d_{2}\}\log \left(\max\{d_{1}, d_{2}\}\right)}{\min\{d_{1}, d_{2}\}^{2}} \\
& 0<s\leq\frac{\min\{d_{1}, d_{2}\}}{C_{4}\max\{d_{1}, d_{2}\}^{3/2}\sqrt{\mu r}\kappa^{4}\sigma_{\max}}
\end{split}
\end{equation}
for some constants $C_{3}, C_{4}>0$, then for $0\leq k\leq K\triangleq (d_{1} + d_{2})^{4}$, the iteration sequences  $\{\Vector{F}_{k}\}_{k=0}^{K}$ of Algorithm  \ref{alg:alg1} satisfy the following inequality with probability no less than $1 - (d_{1} + d_{2})^{-5}$:
\begin{equation}
\label{eqn-thm-linear-convergence}
\dist{\Vector{F}_{k}}{\Vector{F}_{\star}} \leq \left(1 - \frac{s\sigma_{\min}}{100}\right)^{k}\dist{\Vector{F}_{0}}{\Vector{F}_{\star}}.
\end{equation}
\end{theorem}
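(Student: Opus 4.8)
The plan is to prove \eqref{eqn-thm-linear-convergence} by induction on $k$, simultaneously maintaining a small collection of ``nice'' properties for both the true iterate $\Vector{F}_k$ and the $2\max\{d_1,d_2\}$ leave-one-out iterates. For each row index, one defines an auxiliary sequence $\Vector{F}_k^{(m)}$ generated by the same gradient descent recursion \eqref{eqn-gd-mc_1}–\eqref{eqn-gd-mc_2} but with the $m$-th row of the residual zeroed out (equivalently, replacing $\mathcal{P}_\Omega$ by the sampling operator that forces the $m$-th row/column to be fully observed), so that $\Vector{F}_k^{(m)}$ is statistically independent of the randomness in that row of $\Omega$. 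The induction hypothesis will bundle together: (i) the contraction bound \eqref{eqn-thm-linear-convergence} on $\dist{\Vector{F}_k}{\Vector{F}_\star}$; (ii) an incoherence/row-norm bound of the form $\twoinfnorm{\Vector{X}_k \Vector{Q}_k - \Vector{X}_\star}$ and $\twoinfnorm{\Vector{Y}_k \Vector{Q}_k^{-\top} - \Vector{Y}_\star}$ staying $\lesssim (1-s\sigma_{\min}/100)^k \sqrt{\mu r / \min\{d_1,d_2\}}\,\sqrt{\sigma_{\max}}$ after the optimal alignment $\Vector{Q}_k$; (iii) a closeness bound between $\Vector{F}_k$ and each $\Vector{F}_k^{(m)}$, again with the same geometric decay; and (iv) a small-norm bound on the balancing term $\fnorm{\Vector{X}_k^\top \Vector{X}_k - \Vector{Y}_k^\top \Vector{Y}_k}$, which is what yields the implicit-regularization conclusion advertised in the introduction. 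The base case $k=0$ is handled by standard spectral-initialization estimates: Davis–Kahan plus the RIP-type bounds (Lemmas \ref{lemma-rip-subspace}, \ref{lemma-rip-all-space}) control $\dist{\Vector{F}_0}{\Vector{F}_\star}$ and its leave-one-out analogues, and the incoherence of $\Vector{U}_0,\Vector{V}_0$ follows from perturbation bounds on the singular subspaces of $\frac{1}{p}\mathcal{P}_\Omega(\Vector{M}_\star)$.

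For the inductive step I would first handle the alignment matrix: choose $\Vector{Q}_{k+1}$ (or a convenient surrogate, e.g.\ the minimizer for $\Vector{F}_k$ reused for $\Vector{F}_{k+1}$) and show the alignment changes slowly, so that error terms can be measured against a fixed rotation. Then write one step of the recursion for the aligned error $\Vector{E}_{k+1} := \Vector{F}_{k+1}\widehat{\Vector{Q}}_{k+1} - \Vector{F}_\star$ and expand it into a ``population'' term plus a ``fluctuation'' term: the population part is what one gets by replacing $\frac{1}{p}\mathcal{P}_\Omega$ with the identity (full observation), for which the asymmetric matrix factorization analysis à la Ye–Du \cite{ye2021global} / Tu et al.\ gives a one-step contraction $\|\Vector{E}_{k+1}^{\mathrm{pop}}\|_{\mathrm F} \le (1 - s\sigma_{\min}/50)\|\Vector{E}_k\|_{\mathrm F}$ modulo higher-order terms controlled by the induction hypotheses on $\|\Vector{E}_k\|_{\mathrm F}$, $\opnorm{\Vector{F}_k}$ and the balancing term; the fluctuation part is bounded using the RIP-type concentration. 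Crucially, the term $\frac1p\mathcal{P}_\Omega(\Vector{X}_k\Vector{Y}_k^\top - \Vector{M}_\star) - (\Vector{X}_k\Vector{Y}_k^\top - \Vector{M}_\star)$ acting on $\Vector{Y}_k$ cannot be controlled by operator-norm RIP alone (it would lose a $\sqrt{d}$), so one decomposes $\Vector{Y}_k = \Vector{Y}_k^{(m)} + (\Vector{Y}_k - \Vector{Y}_k^{(m)})$ row-wise: the leave-one-out part is independent of the $m$-th sampling row and handled by a vector Bernstein inequality using the incoherence hypothesis (ii), while the correction is tiny by hypothesis (iii). This is the standard leave-one-out decoupling and it is the crux of getting the $\twoinfnorm{\cdot}$ bounds to propagate.

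The main obstacle, and where most of the work concentrates, is propagating the incoherence bound (ii) and the leave-one-out closeness (iii) together through the asymmetric, unbalanced dynamics. Unlike the symmetric PSD case of Ma et al.\ \cite{ma2020implicit}, here $\Vector{X}_k$ and $\Vector{Y}_k$ evolve under different (coupled) updates and there is no exact balancing constraint; the alignment matrix $\Vector{Q}_k$ is a genuine $\mathrm{GL}(r)$ element rather than an orthogonal matrix, so one must simultaneously show it stays well-conditioned (both $\opnorm{\Vector{Q}_k}$ and $\opnorm{\Vector{Q}_k^{-1}}$ bounded by $O(1)$ times powers of $\kappa$) — this is where the gradient-flow invariance of the balancing term, made approximate in discrete time, is used to argue the imbalance never blows up, feeding back into property (iv). Keeping all the constants consistent so that the recursions for (i)–(iv) close — in particular so the leave-one-out error does not accumulate over the $K = (d_1+d_2)^4$ iterations — forces the somewhat heavy sampling-rate requirement $p \gtrsim \mu^3 r^3 \kappa^{16}\,\frac{\max\{d_1,d_2\}\log\max\{d_1,d_2\}}{\min\{d_1,d_2\}^2}$ and the small step size in \eqref{eqn-assumption-p-s}, and a careful union bound over the $O(d_1+d_2)$ leave-one-out sequences and over all $k \le K$ gives the stated $1-(d_1+d_2)^{-5}$ success probability.
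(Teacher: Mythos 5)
Your high-level plan — induction over $k$ coupling (i) Frobenius contraction, (ii) $\ell_{2,\infty}$ incoherence, (iii) leave-one-out closeness, (iv) smallness of the balancing term, together with a population-plus-fluctuation split, RIP-type concentration (Lemmas~\ref{lemma-rip-subspace}, \ref{lemma-rip-all-space}), and a union bound over $O(d_1+d_2)$ auxiliary sequences and $K$ iterations — matches the structure of the paper's proof (Hypothesis~\ref{hypothesis-induction}, Lemmas~\ref{lemma-relation-o-and-r}--\ref{lemma-alignment-matrix}). However, there is one genuine gap in your proposal that the paper takes pains to address, and a couple of secondary mismatches worth flagging.

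The gap: you run the leave-one-out sequences under the \emph{same} vanilla recursion \eqref{eqn-gd-mc_1}--\eqref{eqn-gd-mc_2} with the sampling operator modified. The paper does not. Its auxiliary problem \eqref{eqn-leave-one-out-opt-row} is built from the \emph{balanced} objective $\fbal^{(l)}$ (explicit $\frac18\fnorm{\Vector{X}^\top\Vector{X}-\Vector{Y}^\top\Vector{Y}}^2$ penalty), and the leave-one-out iterates \eqref{eqn-gd-mc-loo1}--\eqref{eqn-gd-mc-loo2} include the balancing gradient. This is not a cosmetic choice: the step that closes Hypothesis~\ref{hypothesis-induction}\ref{induction-origin-loo-relation} (Lemma~\ref{lemma-loo-perturbation}) contracts $\fnorm{\Vector{F}_k\Vector{O}_k - \Vector{F}_k^{(l)}\Vector{R}_k^{(l)}}$ by applying the Hessian bounds of Lemma~\ref{lemma-hessian} to $\nabla^2\fbal$, using the restricted strong convexity $\succeq \frac{\sigma_{\min}}{5}\Vector{I}$. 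The vanilla objective $f$ has a flat $\mathrm{GL}(r)$ gauge direction (rescaling $\Vector{X}\mapsto\Vector{X}\Vector{P},\ \Vector{Y}\mapsto\Vector{Y}\Vector{P}^{-\top}$), and the alignment used for the leave-one-out comparison is only an orthogonal $\Vector{R}_k^{(l)}\in\mathcal O_r$, which does not quotient out that flat direction. So the analogous one-step contraction for the vanilla leave-one-out iterates does not follow from a Hessian argument of this type; your induction on (iii) would not close. The paper's resolution is exactly the asymmetry you set up in (iv) but did not exploit to the end: the \emph{true} sequence is vanilla, the \emph{auxiliary} sequences are balanced, and the discrepancy between the two updates is $s\nabla f_{\mathrm{diff}}(\Vector{F}_k\Vector{O}_k)$, which is small by Lemma~\ref{lemma-small-balance}. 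That coupling — vanilla iterate tracked against balanced leave-one-out companions — is the main new idea, and your proposal as written would miss it.

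Two secondary points. First, for the population contraction you cite Ye--Du \cite{ye2021global}; that analysis is for small random initialization and does not immediately give what is needed under spectral initialization with a general-linear alignment. The paper instead invokes Ma--Li--Chi \cite{ma20121beyond} (Lemma~\ref{lemma-mf-linear-convergence}), which is tailored to the alignment matrix $\Vector{Q}_k\in\mathrm{GL}(r)$ and the distance \eqref{eqn-defn-distance}; one also needs the auxiliary orthogonal rotations $\Vector{O}_k$, $\Vector{O}_k^{(l)}$, $\Vector{R}_k^{(l)}$ and the existence and spectral-norm closeness of $\Vector{Q}_k$ to $\Vector{O}_k$ (Hypothesis~\ref{hypothesis-induction}\ref{induction-bam-exist}, Lemmas~\ref{lemma-alignment-bound}, \ref{lemma-bam-exist-sufficient}, \ref{lemma-alignment-matrix}), a bookkeeping layer your proposal underweights. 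Second, you stipulate that the $\ell_{2,\infty}$ incoherence error should decay geometrically like $(1-s\sigma_{\min}/100)^k$; the paper instead carries uniform (non-decaying) $\ell_{2,\infty}$ and spectral bounds (Hypothesis~\ref{hypothesis-induction}\ref{induction-op-norm}--\ref{induction-origin-loo-relation}) and lets only the Frobenius distance \ref{induction-linear-convergence} decay. Either flavor can in principle work, but you should be aware that the geometrically decaying version is harder to push through the fluctuation bounds and is not what the paper does.
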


Theorem \ref{thm-linear-convergence} demonstrates that the gradient descent method with spectral initialization in Algorithm \ref{alg:alg1} for solving asymmetric matrix completion problems is linearly convergent with high probability. Notably, the step size condition reveals that the convergence rate becomes slower as the condition number $\kappa$ increases, which aligns with numerical results in Fig. \ref{fig: matrix-completion-via-gd}. To the best of our knowledge, this constitutes the first convergence rate result for the vanilla gradient descent algorithm of asymmetric matrix completion.

The theorem extends four key prior works in the following way:
\begin{itemize}
    \item[1)] Building upon the linear convergence results for vanilla gradient descent in symmetric matrix completion \cite{ma2020implicit} and asymmetric matrix sensing \cite{ma20121beyond}, we extend these theoretical guarantees to the asymmetric matrix completion setting. This generalization encompasses both rectangular matrix structures and structured sampling operators.

    \item[2)] The linear convergence guarantees for regularized gradient descent in the regularized model \eqref{eqn-mc-regularization} \cite{sun2016guaranteed,chen2020noisy} and the balancing model \eqref{eqn-mc-balancing} \cite{chen2020nonconvex} are further extended to the regularization-free model \eqref{eqn-matrix-completion}. Besides, we demonstrate the implicit regularization effect of VGD by rigorously establishing that the norm of the balancing term maintains a bounded magnitude throughout the iterative process.
\end{itemize}
This result finalizes the theoretical bridge between regularization-based and regularization-free formulations in low-rank matrix recovery.

\subsection{Proof Roadmap}

This subsection outlines the proof roadmap for Theorem \ref{thm-linear-convergence}, primarily employing the leave-one-out technique and mathematical induction. The full proof is delayed in the Appendices.

\textbf{Leave-one-out technique.} To employ the leave-one-out technique, we first define the following projection operators
\begin{itemize}
\item $\mathcal{P}_{\Omega_{-i,\cdot}}$ represents the projection operator that removes all elements in $\Omega$ whose row indices are $i$;
\item $\mathcal{P}_{i,\cdot}$ represents the projection operator that only preserves the elements in the $i$-th row of the matrix.
\end{itemize}

Building on these definitions, we define the leave-one-out matrix completion problem corresponding to problem \eqref{eqn-matrix-completion}. When $0\leq l\leq d_{1}$, the problem is 
\begin{align}
\label{eqn-leave-one-out-opt-row}
&\min_{\Vector{X}, \Vector{Y}} \fbal^{(l)}(\Vector{X}, \Vector{Y})\triangleq \frac{1}{2p}\fnorm{\left(\mathcal{P}_{\Omega_{-l,\cdot}}+p\mathcal{P}_{l,\cdot}\right)\left(\Vector{X}\Vector{Y}^\top- \Vector{M}_{\star}\right)}^{2}  \nonumber\\
& \qquad \qquad \qquad \qquad \qquad + \frac{1}{8}\fnorm{\Vector{X}^\top\Vector{X} - \Vector{Y}^\top\Vector{Y}}^{2}.
\end{align}
In problem \eqref{eqn-leave-one-out-opt-row}, it is assumed that all elements in the $l$-th row of the target $\Vector{M}_{\star}$ are observable, which eliminates the influence of the randomness of the observation operator on this row.

It is worth noting that the objective function of problem \eqref{eqn-leave-one-out-opt-row} is modified from $\fbal(\Vector{X}, \Vector{Y})$ rather than $f(\Vector{X}, \Vector{Y})$, since in the subsequent inductive proof, the inductive hypothesis of linear convergence can ensure that the balancing terms of the sequences $\{\Vector{X}_{k}\}$ and $\{\Vector{Y}_{k}\}$ corresponding to the original problem \eqref{eqn-matrix-completion} have a relatively small upper bound. 

Then we provide the update rule and the initialization method for  problem \eqref{eqn-leave-one-out-opt-row}. The update rule through gradient descent is
\begin{align}
\label{eqn-gd-mc-loo1}
\Vector{X}_{k+1}^{(l)} = &\Vector{X}_{k}^{(l)}  - \frac{s}{p}\mathcal{P}_{\Omega_{-l,\cdot}}\left(\Vector{X}_{k}^{(l)}\left(\Vector{Y}_{k}^{(l)}\right)^\top- \Vector{M}_{\star}\right)\Vector{Y}_{k}^{(l)} \nonumber\\
&  - s\mathcal{P}_{l,\cdot}\left(\Vector{X}_{k}^{(l)}\left(\Vector{Y}_{k}^{(l)}\right)^\top- \Vector{M}_{\star}\right)\Vector{Y}_{k}^{(l)} \nonumber\\
&- \frac{s}{2}\Vector{X}_{k}^{(l)}\left(\left(\Vector{X}_{k}^{(l)}\right)^\top\Vector{X}_{k}^{(l)} - \left(\Vector{Y}_{k}^{(l)}\right)^\top\Vector{Y}_{k}^{(l)}\right), 
\end{align}
and
\begin{align} \label{eqn-gd-mc-loo2}
\Vector{Y}_{k+1}^{(l)} =  &\Vector{Y}_{k}^{(l)}  - \frac{s}{p}\mathcal{P}_{\Omega_{-l,\cdot}}\left(\Vector{X}_{k}^{(l)}\left(\Vector{Y}_{k}^{(l)}\right)^\top- \Vector{M}_{\star}\right)^\top\Vector{X}_{k}^{(l)} \nonumber\\
&- s\mathcal{P}_{l,\cdot}\left(\Vector{X}_{k}^{(l)}\left(\Vector{Y}_{k}^{(l)}\right)^\top- \Vector{M}_{\star}\right)^\top\Vector{X}_{k}^{(l)} \nonumber\\
&- \frac{s}{2}\Vector{Y}_{k}^{(l)}\left(\left(\Vector{Y}_{k}^{(l)}\right)^\top\Vector{Y}_{k}^{(l)} - \left(\Vector{X}_{k}^{(l)}\right)^\top\Vector{X}_{k}^{(l)}\right).
\end{align}
Accordingly, we define  $\Vector{F}_k^{(l)}=\left[(\Vector{X}_k^{(l)})^\top, (\Vector{Y}_k^{(l)})^\top\right]^\top$. The initial point is generated by the spectral decomposition of the observed matrix
\begin{equation}
\Vector{M}_{0}^{(l)} \triangleq 
\left(\frac{1}{p}\mathcal{P}_{\Omega_{-l,\cdot}} + \mathcal{P}_{l,\cdot}\right)\left(\Vector{M}_{\star}\right).
\end{equation}
Similarly we can define the leave-one-out matrix completion problem for $d_{1} + 1 \leq l \leq d_{1} + d_{2}$.

\textbf{Mathematical induction.} To apply mathematical induction, we should make some hypotheses for the bounds of $\Vector{F}_k$, $\Vector{F}_k^{(l)}$, and $\Vector{F}^\star$. However, noting that we cannot guarantee the existence of the best alignment matrix $\Vector{Q}_{k}$ that takes the infimum in \eqref{eqn-defn-distance} for $\Vector{F}_k$ and $\Vector{F}^\star$, we need to introduce some well-defined best rotation matrices for matrices $\Vector{F}_k$, $\Vector{F}_k^{(l)}$, and $\Vector{F}^\star$:
\begin{align}
\Vector{O}_{k} \triangleq & \mathop{\arg\min}_{\Vector{O}\in\mathcal{O}_r}\fnorm{\Vector{F}_{k}\Vector{O} - \Vector{F}_{\star}}, \\
\Vector{O}_{k}^{(l)} \triangleq & \mathop{\arg\min}_{\Vector{O}\in\mathcal{O}_r}\fnorm{\Vector{F}_{k}^{(l)}\Vector{O} - \Vector{F}_{\star}}, 1\leq l\leq d_{1} + d_{2},\\
\Vector{R}_{k}^{(l)} \triangleq & \mathop{\arg\min}_{\Vector{O}\in\mathcal{O}_r}\fnorm{\Vector{F}_{k}\Vector{O}_{k} - \Vector{F}_{k}^{(l)}\Vector{O}},  1\leq l\leq d_{1} + d_{2}.
\end{align}
It can be shown that the existence of $\Vector{Q}_{k}$ can be derived from the above matrices under certain conditions. Moreover, the distance between $\Vector{Q}_{k}$ and $\Vector{O}_{k}$ is very close under the spectral norm. It should be noted that in prior works such as \cite{ma2020implicit, chen2020noisy}, which study the convergence of gradient methods for matrix completion, the conclusion is that $\Vector{F}_{k}$ converges linearly to $\Vector{F}_{\star}$ up to rotation—meaning that in the distance metric \eqref{eqn-defn-distance}, $\Vector{Q}$ is strictly required to be an orthogonal matrix. In this section, we ensure that the spectral norm, Frobenius norm, and $\ell_{2,\infty}$-norm of difference among $\Vector{F}_{k}$, $\Vector{F}_{k}^{(l)}$ and $\Vector{F}_{\star}$ remain bounded via the optimal rotation matrix, thereby proving that gradient descent achieves linear convergence in the sense of optimal alignment.

In the induction proof, we hypothesize that whenever $0\leq t\leq k$, the distance between $\Vector{F}_{t}$ and $\Vector{F}_{\star}$, $(\Vector{F}_{t}^{(l)})_{l,\cdot}$ and $(\Vector{F}_{\star})_{l,\cdot}$, $\Vector{F}_{t}$ and $\Vector{F}_{t}^{(l)}$, as well as $\Vector{Q}_{t}$ and $\Vector{O}_{t}$ are bounded by sufficiently small quantities under various norms, and $\Vector{F}_{t}$ converges to $\Vector{F}_{\star}$ linearly, as Hypothesis \ref{hypothesis-induction} shows.

\begin{hypothesis}[Induction Hypothesis]
\label{hypothesis-induction}
With high probability, the following statements hold for all $0\leq t\leq k$:
\begin{enumerate}[label=(\alph*)]
\item \label{induction-op-norm} $\Vector{F}_{t}$ satisfies
\begin{multline}
    \opnorm{\Vector{F}_{t}\Vector{O}_{t} - \Vector{F}_{\star}} \\ \leq \left(s\sigma_{\min} + \sqrt{\frac{\mu r\kappa^{6}\log d_{1}}{pd_{2}}}\right)\sqrt{\sigma_{\max}}~;
\end{multline}

\item \label{induction-l-2-norm} For $1\leq l\leq d_{1}+d_{2}$, $\Vector{F}_{t}^{(l)}$ satisfies
\begin{multline}
\twonorm{\left(\Vector{F}_{t}^{(l)}\Vector{O}_{t}^{(l)} - \Vector{F}_{\star}\right)_{l,\cdot}} \\ \leq \left(10^{3}s\kappa^{2}\sigma_{\min} + 10^{2}\sqrt{\frac{\mu^{2}r^{2}\kappa^{14}\log d_{1}}{pd_{2}}}\right)\sqrt{\frac{\mu r\sigma_{\max}}{d_{2}}}~;
\end{multline}

\item \label{induction-origin-loo-relation} $\Vector{F}_{t}$ and $\Vector{F}_{t}^{(l)}$ satisfy
\begin{multline}
\fnorm{\Vector{F}_{t}\Vector{O}_{t} - \Vector{F}_{t}^{(l)}\Vector{R}_{t}^{(l)}} \\ \leq \left(\frac{s\sigma_{\min}}{\kappa} + \sqrt{\frac{\mu^{2}r^{2}\kappa^{10}\log d_{1}}{pd_{2}^{2}}}\right)\sqrt{\sigma_{\max}}~;
\end{multline}
\item \label{induction-linear-convergence} $\Vector{F}_{t}$ converges linearly to $\Vector{F}_{\star}$, which satisfies
\begin{align}
\dist{\Vector{F}_{t}}{\Vector{F}_{\star}}  \leq \left(1 - \frac{s\sigma_{\min}}{100}\right)^{t}\dist{\Vector{F}_{0}}{\Vector{F}_{\star}}~;
\end{align}

\item \label{induction-bam-exist} The optimal alignment matrix $\Vector{Q}_{t}$ between $\Vector{F}_{t}$ and $\Vector{F}_{\star}$ exists and satisfies
\begin{equation}
\opnorm{\Vector{Q}_{t} - \Vector{O}_{t}} \leq \frac{1}{400\kappa}~.
\end{equation}
\end{enumerate}
\end{hypothesis}

Spectral initialization ensures that the initial matrix $\Vector{F}_{0}$ is sufficiently close to the target matrix $\Vector{F}_{\star}$. Consequently, this proximity enables Hypothesis \ref{hypothesis-induction}\ref{induction-op-norm}-\ref{induction-origin-loo-relation} to be satisfied at the initial iteration $k=0$, thereby guaranteeing that \ref{induction-bam-exist} also holds. As a result, Hypothesis \ref{hypothesis-induction} is valid at the initial point.
Building upon the induction hypothesis, we first establish the incoherence properties of $\Vector{X}_{k}$ and $\Vector{Y}_{k}$ in Lemma \ref{coro-bounded-2-inf-norm}. This subsequently ensures a small upper bound on the balancing term $\fnorm{\Vector{X}_{k}^{\top}\Vector{X}_{k} - \Vector{Y}_{k}^{\top}\Vector{Y}_{k}}$ in Lemma \ref{lemma-small-balance}, which is consistent with our observation of gradient flow \eqref{eqn-gradient-flow-mc}. These two properties collectively guarantee that the induction hypothesis remains valid at step $k+1$ with high probability. By combining the properties of the initial point with a union bound argument, we conclude that for all steps $k$ not exceeding a sufficiently large threshold dependent on $d_{1}$ and $d_{2}$, the linear convergence guarantee holds as stated in Theorem \ref{thm-linear-convergence}. 

\section{Simulations} \label{sec:simulations}

In this section, we compare the performance of the vanilla gradient descent algorithm VGD with two regularized algorithms RGD and BGD. Experiments were conducted on an Intel Core Ultra 5 125H processor with a base clock frequency of 1.2 GHz, accompanied by 32 GB of RAM.

The ground truth matrix $\Vector{X}^\star \in \mathbb{R}^{d_1 \times d_2}$ of rank $r$ is generated as follows: we first generate random matrices $\Vector{U}^\star \in \mathbb{R}^{d_1 \times r}, \Vector{V}^\star \in \mathbb{R}^{d_2 \times r}$ with orthonormal columns through QR decomposition of i.i.d. Bernoulli $\pm 1$ matrices.  The singular values $\{\sigma_i\}_{i=1}^r$ are linearly spaced between 1 and $1/\kappa$, yielding $\Vector{X}^\star = \Vector{U}^\star \mathrm{diag}(\Vector{\sigma}) (\Vector{V}^{\star})^\top$, where $\Vector{\sigma}=[\sigma_1,\sigma_2,\ldots,\sigma_r]^\top$. For each combination of sampling rate $p$  and rank $r$, we generate a binary sampling mask $\Vector{\Omega}$, where each entry is independently set to 1 with probability $p$. The observed matrix $ \Vector{Y} = \Vector{\Omega}\odot \Vector{X}^\star$ contains measurements of the ground truth at the sampled locations. For all gradient descent algorithms, we set the learning rate as $ s = 0.5 $. Relative error is used to compare the performance, which is defined as 
\begin{align}
    \mbox{Relative~error} = \frac{\fnorm{\Vector{M}_K-\Vector{M}_\star}}{\fnorm{\Vector{M}_\star}}.
\end{align}
And the algorithm stops when the relative error is below $10^{-14}$.

\begin{figure}
\centering
\subfloat[$\kappa=1$]{
    \includegraphics[scale=0.32]{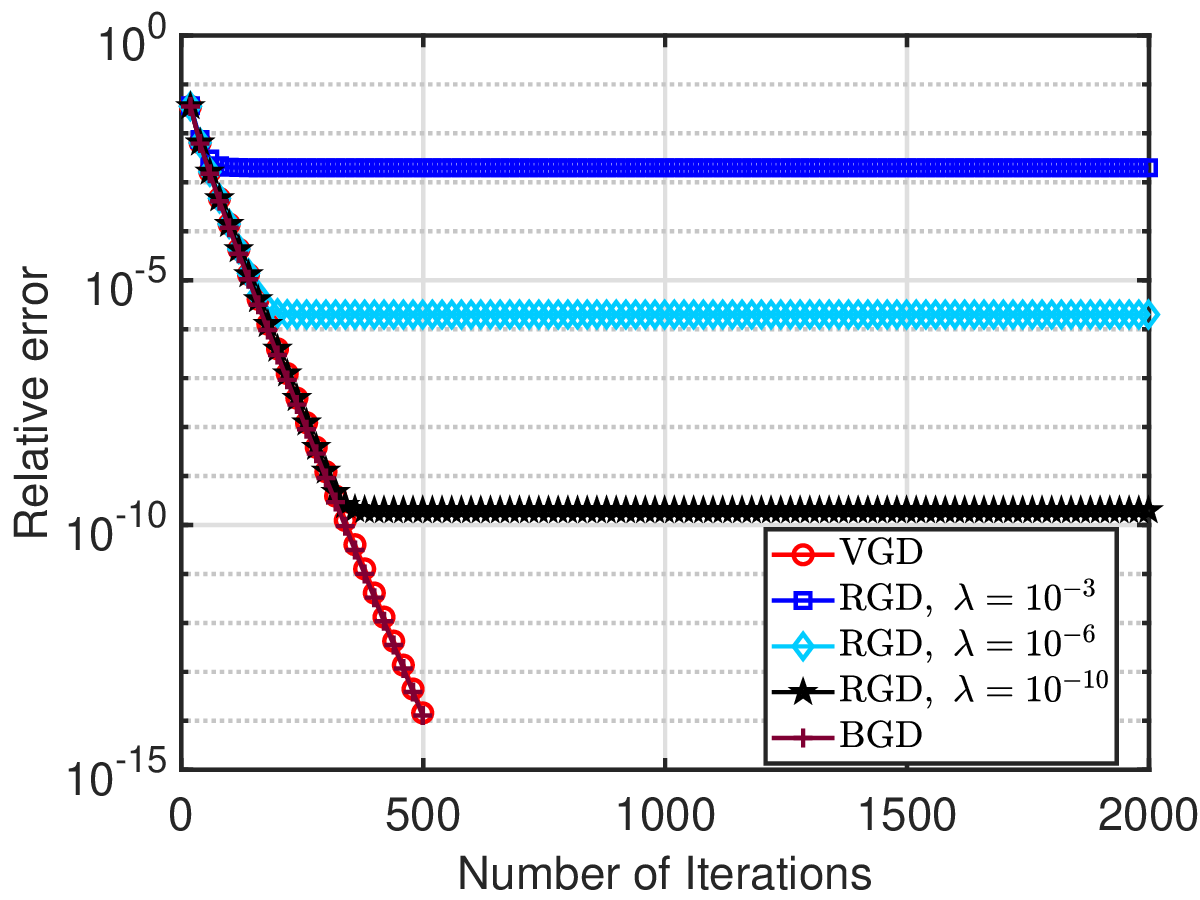}}
\hfill
\subfloat[$\kappa=3$]{
    \includegraphics[scale=0.32]{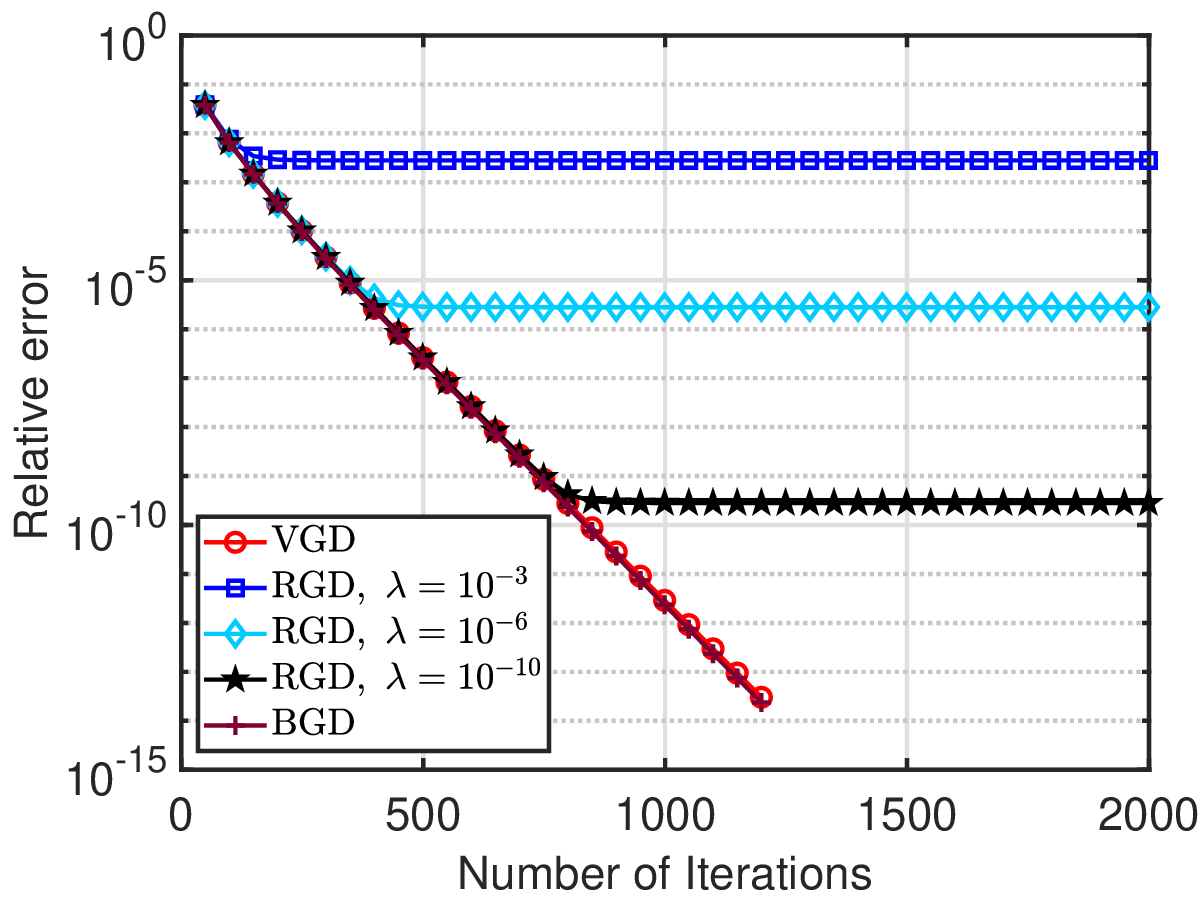}}
\hfill
\subfloat[$\kappa=5$]{
    \includegraphics[scale=0.32]{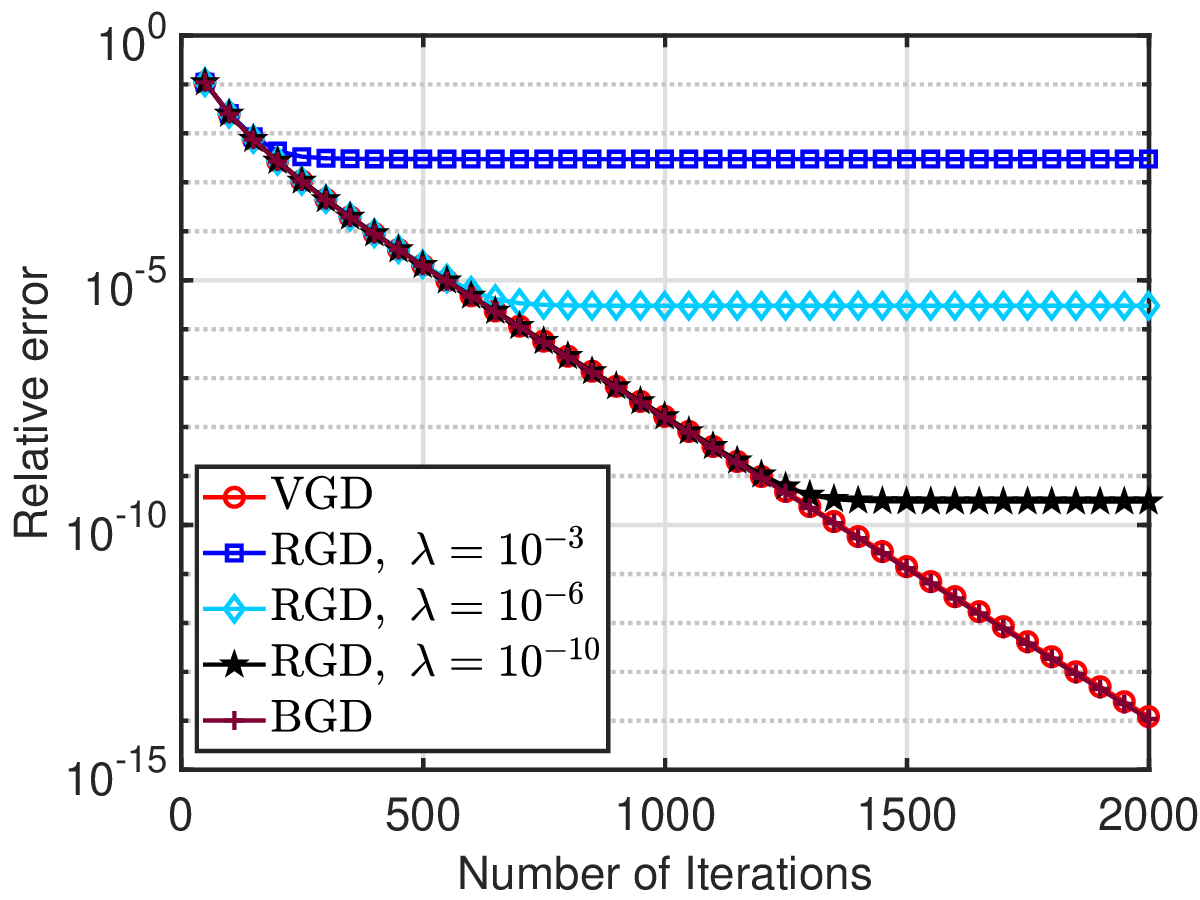}}
\caption{Convergence results for three gradient methods under $d_1=160, d_2=100$, $r=5$ and $p=0.2$.}
\label{fig: matrix-completion-via-gd_2}
\end{figure}

First, we choose a different kind of setting from Fig. \ref{fig: matrix-completion-via-gd} to present the convergence performance when $d_1$ and $d_2$ are relatively small. In Fig. \ref{fig: matrix-completion-via-gd_2}, we set $d_1=160$, $d_2=100$, $p=0.2$ and $r=5$. We vary $\kappa$ from $1$ to $5$ in steps of $2$. The results demonstrate that the convergence curves under the same $\kappa$ are almost the same for VGD and BGD, which exhibit linear convergence for all condition numbers. In addition, the curves of RGD converge to a constant error, and the error gets smaller as $\lambda$ decreases. Notice that RGD degrades to VGD when $\lambda=0$ and the performance becomes the best, which means VGD is a better choice to have a smaller relative error. Furthermore, the convergence speeds of VGD and BGD slow down as the condition number $\kappa$ increases, which coincides with Theorem \ref{thm-linear-convergence}.

Then we plot the phase transition of VGD, RGD and BGD for different $p$ and $r$ under $d_1=400$, $d_2=300$, and $\kappa=3$. We set $s=0.5$ for all algorithms and $\lambda=10^{-6}$ and $\lambda=10^{-10}$ for RGD, respectively. We increase the sampling rate $p$ from $0.05$ to $0.95$ in steps of $0.05$ and increase the rank $r$ from $20$ to $200$ in steps of $20$. We make 50 Monte Carlo trials for each pair of $p$ and $r$. A trial is successful if its relative error is less than $10^{-8}$. The empirical success probability is calculated and visualized as a 2D gray map, with the 50\% success contour extracted to demarcate the recovery boundary. As shown in Fig. \ref{fig:phase_transition}, the phase transition curves for VGD, BGD, and RGD with $\lambda=10^{-10}$ are the same, which also validates that regularization terms are not necessary for gradient descent algorithms with spectral initialization. However, Fig. 3(c) demonstrates that RGD with $\lambda=10^{-6}$ cannot complete the matrix for all pairs of $p$ and $r$, which means it is important for RGD to choose a suitable $\lambda$.

\begin{figure}
    \centering
    \subfloat[VGD]{
    \includegraphics[width=0.48\linewidth]{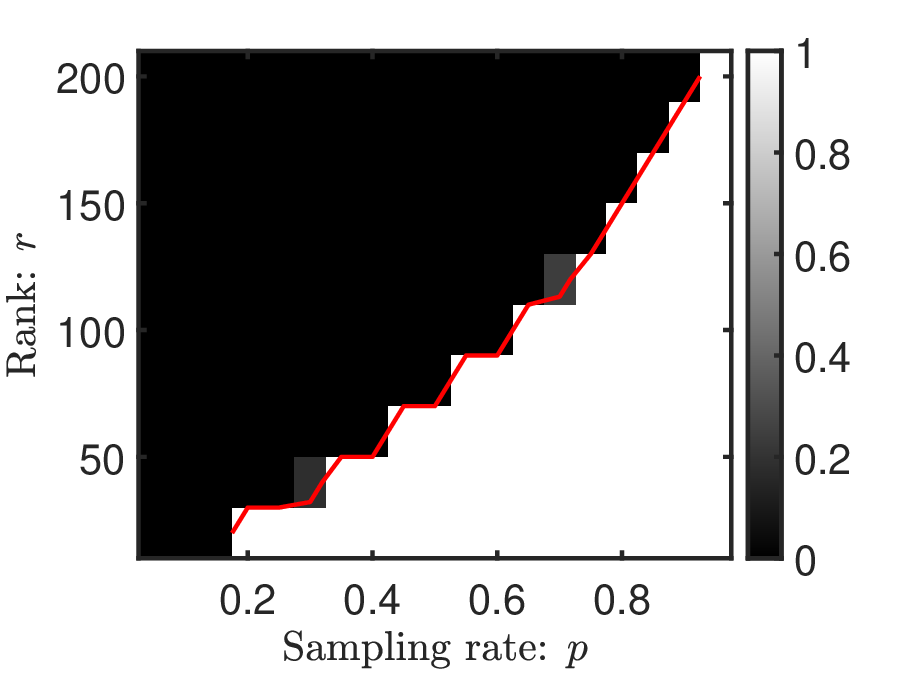}}
    \hfill
    \subfloat[BGD]{
    \includegraphics[width=0.48\linewidth]{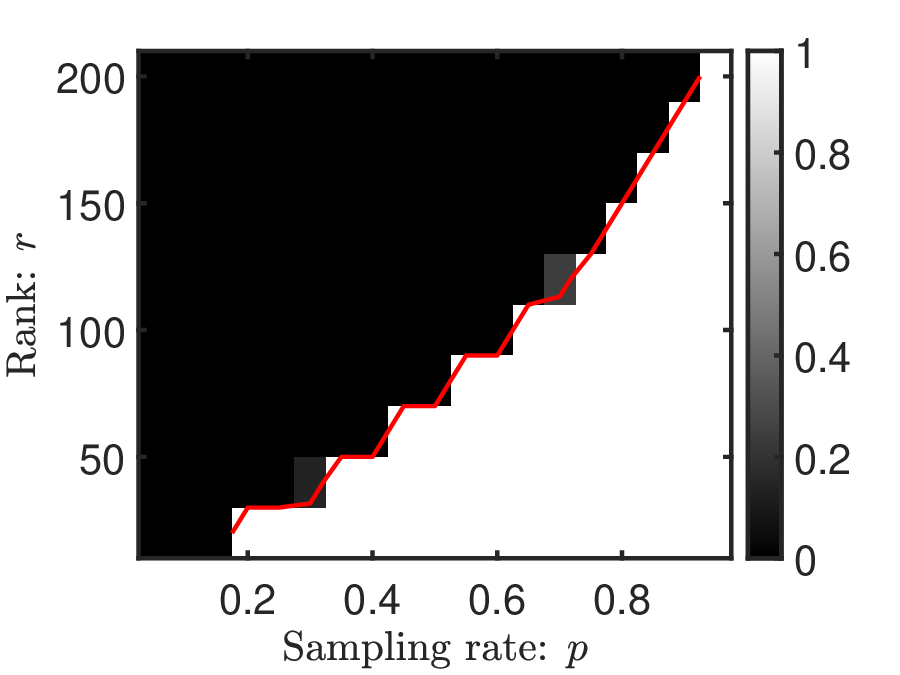}}
    \hfill
    \subfloat[RGD, $\lambda=10^{-6}$]{
    \includegraphics[width=0.48\linewidth]{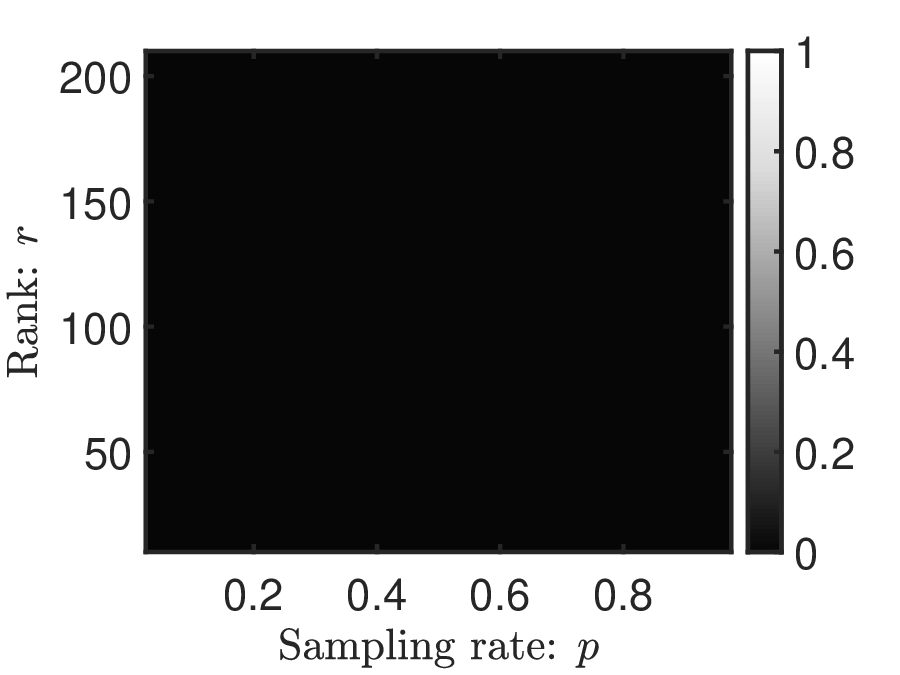}}
    \hfill
    \subfloat[RGD, $\lambda=10^{-10}$]{
    \includegraphics[width=0.48\linewidth]{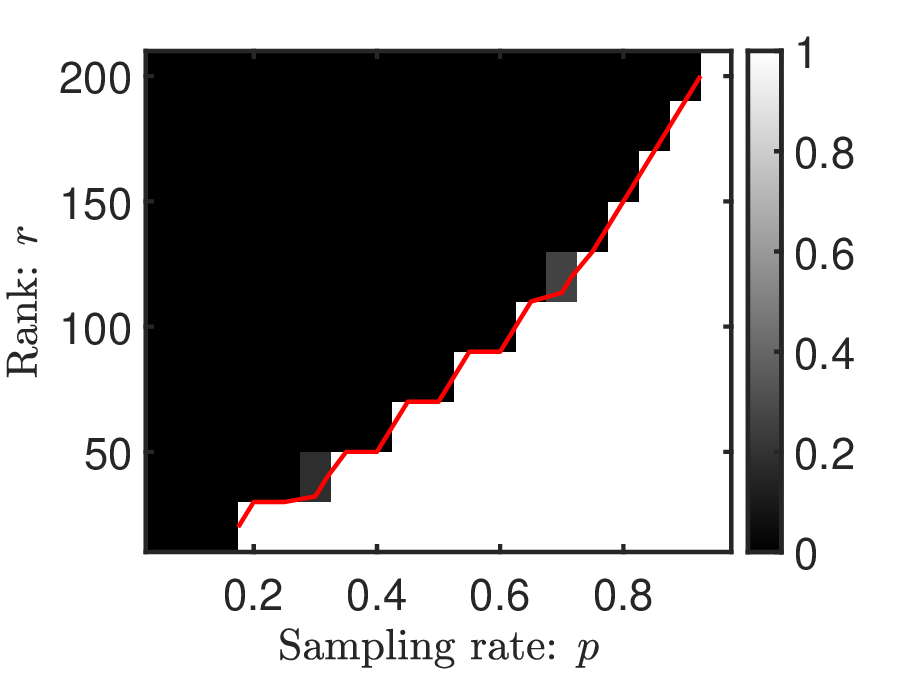}}
    \caption{The comparisons of phase transitions for VGD, BGD, and RGD. The red curve is the 50\% success rate curve.}
    \label{fig:phase_transition}
\end{figure}
\begin{table*}[h!]	
\begin{center}
\caption{The average running time (in seconds) for RGD, BGD, and VGD to reach a relative error $10^{-8}$. }
\begin{tabular}{c|c|c|c|c|c|c|c|c|c}
   \hline
\textbf{$(d_1,d_2)$} & \multicolumn{3}{c|}{$(160,100)$} &  \multicolumn{3}{c|}{$(1200,800)$}  &  \multicolumn{3}{c}{$(3000,2000)$}    \\
\hline			
$(r,p)$ &{$(3, 0.2)$} &{$(5,0.2)$} &{$(10,0.3)$}  &{$(10,0.2)$} &{$(20,0.2)$} &{$(50,0.3)$}  &{$(20,0.1)$} &{$(50,0.1)$} &{$(100,0.2)$} \\ 
\hline
\hline
RGD  &$0.0248$ & $0.0333$  & $0.0465$  & $0.696$ & $0.996$  & $1.625$  & $6.022$ & $18.313$  & $20.385$  \\
BGD  &$0.0396$  & $0.0511$  & $0.0717$  & $1.149$  & $1.619$  & $2.540$  & $9.430$ & $27.627$  & $30.961$ \\
VGD  &$\bm{0.0240}$  & $\bm{0.0330}$ &  $\bm{0.0457}$  & $\bm{0.689}$  & $\bm{0.992$}  & $\bm{1.620}$  & $\bm{6.014}$ & $\bm{18.152}$  & $\bm{20.187}$ \\
\hline
\end{tabular}
\label{tab:average_time}
\end{center}		
\end{table*}

Finally, we compare the computation time of the three gradient algorithms to show the computational efficiency of VGD. We set $\lambda=10^{-10}$ to avoid the running time of RGD being infinity. Additionally, we set the step size to $s=0.5$, the condition number $\kappa=3$, and perform $50$ Monte Carlo trials for all algorithms. Fig. \ref{fig:runningtime} provides the relative error as a function of computation time for two different settings: (a) $d_1=1200, d_2=800, r=10$; (b) $d_1=160, d_2=100, r=5$.  Table \ref{tab:average_time} includes more settings of parameters, which provides the average running time to achieve a relative error $10^{-8}$. The results in Fig. \ref{fig:runningtime}  and Table \ref{tab:average_time} present that VGD is the most computationally efficient method for achieving high-precision solutions, particularly in large-scale scenarios. RGD remains a competitive alternative with nearly identical performance characteristics, while BGD exhibits fundamental efficiency limitations that intensify with problem scale. These results indicate that VGD's architectural design leads to faster convergence in gradient computation.

\begin{figure}
    \centering
    \subfloat[$d_1=1200, d_2=800, r=10$]{
    \includegraphics[scale=0.35]{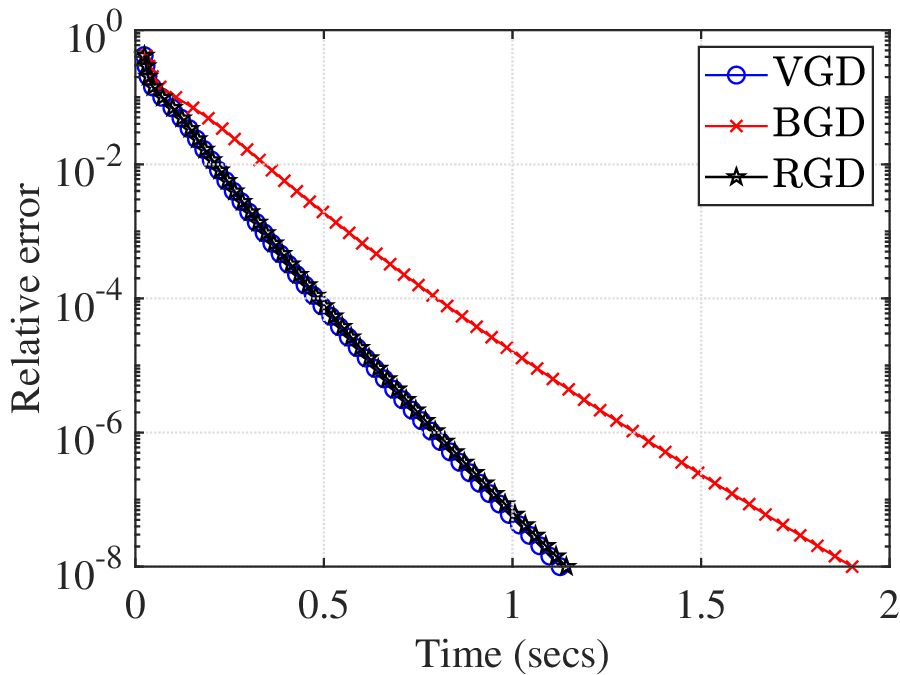}}
    \hfill
    \subfloat[$d_1=160, d_2=100, r=5$]{
    \includegraphics[scale=0.35]{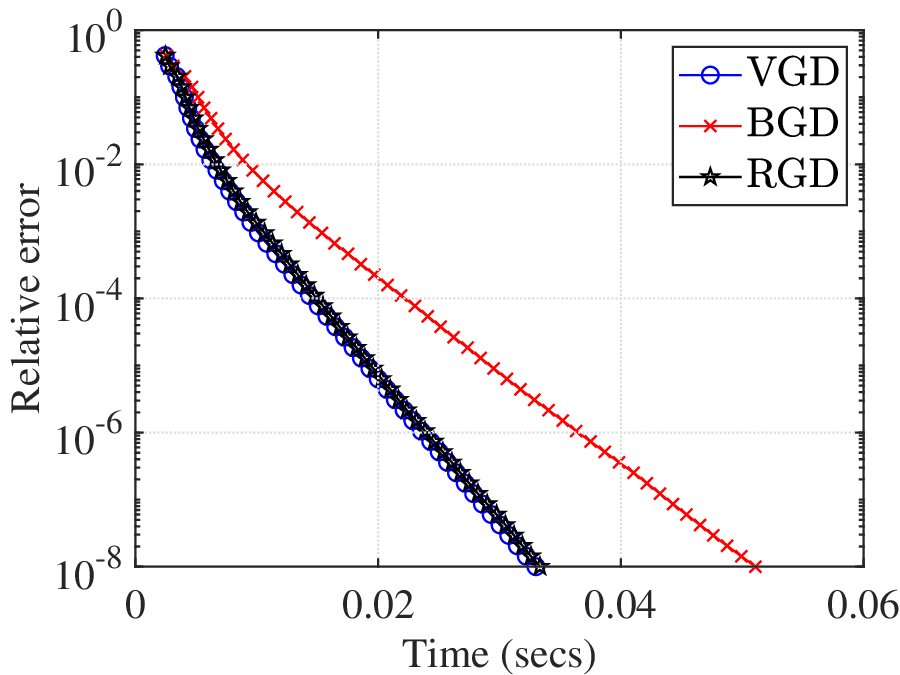}}
    \hfill
    \caption{The comparisons of computational time for VGD, BGD, and RGD. }
    \label{fig:runningtime}
\end{figure}



\section{Conclusion} \label{sec:conclusion}
This paper establishes that gradient descent (GD) with spectral initialization achieves linear convergence with high probability for asymmetric low-rank matrix completion, while eliminating the need for explicit regularization. We reveal GD’s intrinsic implicit regularization property through a novel leave-one-out sequence analysis, and we prove the balancing term maintains a bounded norm throughout iterations, inherently ensuring convergence without explicit regularization terms. Numerical results demonstrate that vanilla GD reduces computational costs by avoiding regularization-related overhead while matching the completion accuracy of regularized GD variants. 


\appendices

\section{Proof of Theorem \ref{thm-linear-convergence}}
This section demonstrates that Algorithm \ref{alg:alg1} achieves linear convergence with high probability by mathematical induction. Due to the limit of pages, we delay auxiliary lemmas (Lemmas \ref{lemma-rip-subspace}-\ref{lemma-bam-exist-sufficient}) and some proofs of lemmas in the supplementary material.

We first establish the incoherence property of $\Vector{X}_{k}$ and $\Vector{Y}_{k}$ through Lemmas \ref{lemma-relation-o-and-r}, \ref{lemma-incoherence}, and \ref{coro-bounded-2-inf-norm}, then derive the small upper bound of balancing term norm in Lemma \ref{lemma-small-balance}, which is a key result for proving Assumption \ref{hypothesis-induction}\ref{induction-op-norm}, \ref{induction-l-2-norm} and \ref{induction-origin-loo-relation} at $(k+1)$-th step. Subsequently, note that the expectation of the matrix completion problem \eqref{eqn-matrix-completion} is a low-rank matrix factorization problem, we reformulate the iteration for matrix completion as the combination of a gradient method for the matrix factorization problem and the perturbation term between these two iterations. Consequently, we prove the linear convergence induction hypothesis Assumption \ref{hypothesis-induction}\ref{induction-linear-convergence} by the existing convergence result for matrix factorization and the upper bound of the perturbation term. Finally, Hypothesis \ref{hypothesis-induction}\ref{induction-bam-exist} can be derived to hold at $(k+1)$-th step based on the previous result for Hypothesis \ref{hypothesis-induction}\ref{induction-op-norm}-\ref{induction-linear-convergence}.


Without loss of generality, we assume that $d_{1} \geq d_{2}$; otherwise, we can transpose the target matrix $\Vector{M}_{\star}$. We also assume $\log d_{1} \geq 1$, as the cases where $d_{1}=1$ or $2$ can be treated separately. 

Lemmas \ref{lemma-rip-subspace} and \ref{lemma-rip-all-space} show the RIP property of the matrix completion problem to some extent when incoherence condition is satisfied. In particular, Lemma \ref{lemma-rip-subspace} shows that in the subspace
\begin{multline}
   \big\{\Vector{M}\in\mathbb{R}^{d_{1}\times d_{2}}: \Vector{M} = \Vector{X}_{\star}\Vector{Y}^\top + \Vector{X}\Vector{Y}_{\star}^\top, \\ \forall\Vector{X}\in\mathbb{R}^{d_{1}\times r}, \Vector{Y}\in\mathbb{R}^{d_{2}\times r}\big\}, 
\end{multline}
the operator $p^{-1}\mathcal{P}_{\Omega}$ has RIP property. Lemma \ref{lemma-rip-all-space} shows that although $p^{-1}\mathcal{P}_{\Omega}$ doesn't satisfy the RIP property in the whole space, the distance between $p^{-1}\mathcal{P}_{\Omega}$ and $\mathcal{I}$ can be bounded. Define the event that both Lemmas \ref{lemma-rip-subspace} and \ref{lemma-rip-all-space} hold as 
$\mathrm{E}_{\mathrm{RIP}}$. According to \cite{chen2020nonconvex}, when $p$ satisfies the assumption in Eq. \eqref{eqn-assumption-p-s}, $\mathrm{E}_{\mathrm{RIP}}$ holds with probability at least $ 1 - \left(d_{1} + d_{2}\right)^{-11}$.

Let $\mathrm{E}_{k}$ denote the event that the Induction Hypothesis holds. As shown in Hypothesis \ref{hypothesis-induction}, the induction hypotheses \ref{induction-op-norm}-\ref{induction-origin-loo-relation} demonstrate that the iterative sequence remains bounded relative to the optimal solution up to rotation, while \ref{induction-linear-convergence}-\ref{induction-bam-exist} establish the linear convergence rate under optimal alignment.

Utilizing Lemma \ref{lemma-alignment-bound}, we obtain the following lemma, which shows that $\Vector{O}_{k}^{(l)}$ exhibits similar properties to $\Vector{R}_{k}^{(l)}$.

\begin{lemma}
\label{lemma-relation-o-and-r}
If Hypothesis \ref{hypothesis-induction} holds and the assumptions on $p$ and $s$ in \eqref{eqn-assumption-p-s} are satisfied, then
\begin{align}
\opnorm{\Vector{F}_{k}\Vector{O}_{k} - \Vector{F}_{k}^{(l)}\Vector{O}_{k}^{(l)}} \leq & 5\kappa\opnorm{\Vector{F}_{k}\Vector{O}_{k} - \Vector{F}_{k}^{(l)}\Vector{R}_{k}^{(l)}}, \\
\fnorm{\Vector{F}_{k}\Vector{O}_{k} - \Vector{F}_{k}^{(l)}\Vector{O}_{k}^{(l)}} \leq & 5\kappa\fnorm{\Vector{F}_{k}\Vector{O}_{k} - \Vector{F}_{k}^{(l)}\Vector{R}_{k}^{(l)}}.
\end{align}
\end{lemma}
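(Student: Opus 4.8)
The plan is to reduce both inequalities to a single estimate on the orthogonal matrix $\Vector{W}:=(\Vector{O}_{k}^{(l)})^{\top}\Vector{R}_{k}^{(l)}$. Since $\Vector{R}_{k}^{(l)}$ minimizes $\fnorm{\Vector{F}_{k}\Vector{O}_{k}-\Vector{F}_{k}^{(l)}\Vector{O}}$ over $\Vector{O}\in\mathcal{O}_{r}$, the matrix $\Vector{W}$ is exactly the minimizer of $\fnorm{\Vector{F}_{k}\Vector{O}_{k}-\Vector{F}_{k}^{(l)}\Vector{O}_{k}^{(l)}\Vector{O}}$, i.e.\ the orthogonal Procrustes alignment of $\Vector{B}:=\Vector{F}_{k}^{(l)}\Vector{O}_{k}^{(l)}$ onto $\Vector{A}:=\Vector{F}_{k}\Vector{O}_{k}$, and $\Vector{B}\Vector{W}=\Vector{F}_{k}^{(l)}\Vector{R}_{k}^{(l)}$, so $\norm{\Vector{A}-\Vector{B}\Vector{W}}$ equals the right-hand side of the lemma (up to the $5\kappa$) in both the operator and Frobenius norms. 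The triangle inequality gives $\norm{\Vector{A}-\Vector{B}}\le\norm{\Vector{A}-\Vector{B}\Vector{W}}+\opnorm{\Vector{B}}\,\norm{\Vector{I}-\Vector{W}}$, so it suffices to show $\opnorm{\Vector{B}}\,\norm{\Vector{I}-\Vector{W}}\le\tfrac12\norm{\Vector{A}-\Vector{B}}$, which will follow from bounding $\norm{\Vector{I}-\Vector{W}}$ \emph{self-referentially} in terms of $\norm{\Vector{A}-\Vector{B}}$ itself; this self-bounding estimate is the role played by Lemma \ref{lemma-alignment-bound}.

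The crux is that $\Vector{B}^{\top}\Vector{A}$ is nearly symmetric positive definite. Because $\Vector{O}_{k}$ and $\Vector{O}_{k}^{(l)}$ are the orthogonal Procrustes alignments of $\Vector{F}_{k}$ and $\Vector{F}_{k}^{(l)}$ onto $\Vector{F}_{\star}$, both $\Vector{A}^{\top}\Vector{F}_{\star}$ and $\Vector{B}^{\top}\Vector{F}_{\star}$ are symmetric. Writing $\Vector{B}^{\top}\Vector{A}=\Vector{A}^{\top}\Vector{A}+(\Vector{B}-\Vector{A})^{\top}\Vector{F}_{\star}+(\Vector{B}-\Vector{A})^{\top}(\Vector{A}-\Vector{F}_{\star})$ and noting that $\Vector{A}^{\top}\Vector{A}$ and $(\Vector{B}-\Vector{A})^{\top}\Vector{F}_{\star}=\Vector{B}^{\top}\Vector{F}_{\star}-\Vector{A}^{\top}\Vector{F}_{\star}$ are symmetric, the skew-symmetric part of $\Vector{B}^{\top}\Vector{A}$ coincides with that of $(\Vector{B}-\Vector{A})^{\top}(\Vector{A}-\Vector{F}_{\star})$, hence has norm at most $\norm{\Vector{A}-\Vector{B}}\,\opnorm{\Vector{A}-\Vector{F}_{\star}}$ (operator or Frobenius). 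On the other hand, $\Vector{F}_{\star}^{\top}\Vector{F}_{\star}=2\Vector{\Sigma}_{\star}$ together with the operator-norm closeness of $\Vector{A}$ to $\Vector{F}_{\star}$ (Hypothesis \ref{hypothesis-induction}\ref{induction-op-norm}) and of $\Vector{B}$ to $\Vector{F}_{\star}$ (obtained from \ref{induction-op-norm}, \ref{induction-origin-loo-relation} and the optimality of $\Vector{O}_{k}^{(l)}$) shows that the symmetric part of $\Vector{B}^{\top}\Vector{A}$ lies within $o(\sigma_{\min})$ of $2\Vector{\Sigma}_{\star}$, so it is positive definite with $\sigma_{r}\gtrsim\sigma_{\min}$, and likewise $\sigma_{r}(\Vector{B}^{\top}\Vector{A})\gtrsim\sigma_{\min}$; it also gives $\opnorm{\Vector{B}}\lesssim\sqrt{\sigma_{\max}}$. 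Since $\Vector{W}$ is the orthogonal polar factor of $\Vector{B}^{\top}\Vector{A}$ and $\Vector{I}$ is that of its symmetric part, a standard polar-factor perturbation bound yields $\norm{\Vector{I}-\Vector{W}}\lesssim\norm{\operatorname{skew}(\Vector{B}^{\top}\Vector{A})}/\sigma_{\min}\lesssim\norm{\Vector{A}-\Vector{B}}\,\opnorm{\Vector{A}-\Vector{F}_{\star}}/\sigma_{\min}$ in both norms.

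Combining these, $\norm{\Vector{A}-\Vector{B}}\le\norm{\Vector{A}-\Vector{B}\Vector{W}}+C\kappa\beta\,\norm{\Vector{A}-\Vector{B}}$ with $\beta:=\opnorm{\Vector{A}-\Vector{F}_{\star}}/\sqrt{\sigma_{\max}}$ the dimensionless quantity controlled by Hypothesis \ref{hypothesis-induction}\ref{induction-op-norm}. The conditions \eqref{eqn-assumption-p-s} on $p$ and $s$ (with $C_{3},C_{4}$ chosen large enough) force $C\kappa\beta\le\tfrac12$, so rearranging gives $\norm{\Vector{A}-\Vector{B}}\le2\norm{\Vector{A}-\Vector{B}\Vector{W}}\le5\kappa\,\norm{\Vector{A}-\Vector{B}\Vector{W}}$, which is the lemma in both the operator and Frobenius norms.

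I expect the main obstacle to be the skew-symmetric estimate. The naive bound $\norm{\operatorname{skew}(\Vector{B}^{\top}\Vector{A})}\le\opnorm{\Vector{B}}\,\opnorm{\Vector{A}-\Vector{F}_{\star}}\asymp\sqrt{\sigma_{\max}}\,\opnorm{\Vector{A}-\Vector{F}_{\star}}$ would put on the right a term of order $\kappa\,\opnorm{\Vector{A}-\Vector{F}_{\star}}$, which is far larger than $5\kappa\,\norm{\Vector{A}-\Vector{B}\Vector{W}}$ because Hypothesis \ref{hypothesis-induction}\ref{induction-origin-loo-relation} is of strictly smaller order than \ref{induction-op-norm}; it is therefore essential to exploit \emph{both} symmetry relations $\Vector{A}^{\top}\Vector{F}_{\star}=(\Vector{A}^{\top}\Vector{F}_{\star})^{\top}$ and $\Vector{B}^{\top}\Vector{F}_{\star}=(\Vector{B}^{\top}\Vector{F}_{\star})^{\top}$ to reach the second-order bound $\norm{\Vector{A}-\Vector{B}}\,\opnorm{\Vector{A}-\Vector{F}_{\star}}$ and close the self-referential loop. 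A secondary nuisance is assembling $\opnorm{\Vector{B}-\Vector{F}_{\star}}=o(\sqrt{\sigma_{\min}}\,)$, which is not literally one of the induction hypotheses (only the $l$-th row of $\Vector{F}_{k}^{(l)}$ is controlled directly, in \ref{induction-l-2-norm}) and must be pieced together from \ref{induction-op-norm}, \ref{induction-origin-loo-relation} and the optimality of $\Vector{O}_{k}^{(l)}$, then used to keep the symmetric part of $\Vector{B}^{\top}\Vector{A}$ well-conditioned.
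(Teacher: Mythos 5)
Your proposal reaches the correct conclusion but travels a much longer road than the paper. The paper proves this lemma in a few lines by invoking Lemma~\ref{lemma-alignment-bound} as a black box: set $\Vector{A}_{0}=\Vector{F}_{\star}$, $\Vector{A}_{1}=\Vector{F}_{k}\Vector{O}_{k}$, $\Vector{A}_{2}=\Vector{F}_{k}^{(l)}\Vector{R}_{k}^{(l)}$; the Procrustes definitions force $\Vector{T}_{1}=\Vector{I}_{r}$ and $\Vector{T}_{2}=(\Vector{R}_{k}^{(l)})^{-1}\Vector{O}_{k}^{(l)}$, so $\Vector{A}_{1}\Vector{T}_{1}-\Vector{A}_{2}\Vector{T}_{2}=\Vector{F}_{k}\Vector{O}_{k}-\Vector{F}_{k}^{(l)}\Vector{O}_{k}^{(l)}$ and $\Vector{A}_{1}-\Vector{A}_{2}$ is exactly the right-hand side; one then only needs to check $\opnorm{\Vector{A}_{1}-\Vector{A}_{0}}\opnorm{\Vector{A}_{0}}\le\sigma_{r}^{2}(\Vector{A}_{0})/2$ and $\opnorm{\Vector{A}_{1}-\Vector{A}_{2}}\opnorm{\Vector{A}_{0}}\le\sigma_{r}^{2}(\Vector{A}_{0})/4$, which follow directly from Hypothesis~\ref{hypothesis-induction}\ref{induction-op-norm}, \ref{induction-origin-loo-relation} and \eqref{eqn-assumption-p-s}. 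You instead re-derive the mechanism of Lemma~\ref{lemma-alignment-bound} from scratch: the triangle inequality $\norm{\Vector{A}-\Vector{B}}\le\norm{\Vector{A}-\Vector{B}\Vector{W}}+\opnorm{\Vector{B}}\norm{\Vector{I}-\Vector{W}}$, the second-order bound on $\operatorname{skew}(\Vector{B}^{\top}\Vector{A})$ coming from the symmetry of both $\Vector{A}^{\top}\Vector{F}_{\star}$ and $\Vector{B}^{\top}\Vector{F}_{\star}$, a polar-factor perturbation bound, and a self-referential rearrangement. That argument is conceptually sound and in fact gives a slightly better constant ($2$ rather than $5\kappa$), which is a genuine observation; it is, in effect, an ``under-the-hood'' proof of the alignment-perturbation lemma in this instance. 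What it buys is an explicit view of why the bound is self-improving; what it costs is that you must additionally assemble $\opnorm{\Vector{F}_{k}^{(l)}\Vector{O}_{k}^{(l)}-\Vector{F}_{\star}}=O(\sqrt{\sigma_{\min}}/\kappa)$ (not literally one of the hypotheses, as you note) and cite a polar-factor perturbation inequality that the paper never needs. Since the paper already supplies Lemma~\ref{lemma-alignment-bound} precisely to avoid this work, the shortest route is the one you flagged as ``the role played by'' that lemma but did not actually take: instantiate it with the three matrices above and just verify its two hypotheses.
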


\begin{proof}
See Appendix \ref{Appedd:relation-o-and-r} of the Supplementary Material.
\end{proof}

Then we establish that both $\Vector{X}_{k}$ and $\Vector{Y}_{k}$ satisfy the incoherence condition.

\begin{lemma}
\label{lemma-incoherence}
If Hypothesis \ref{hypothesis-induction} holds and the assumptions on $p$ and $s$ in \eqref{eqn-assumption-p-s} are satisfied, then
\begin{multline}
\twoinfnorm{\Vector{Y}_{k}\Vector{O}_{k} - \Vector{Y}_{\star}},~\twoinfnorm{\Vector{X}_{k}\Vector{O}_{k} - \Vector{X}_{\star}}\\ \leq \bigg((10^{3} + 5)s\kappa^{2}\sigma_{\min} + (10^{2} + 5)\sqrt{\frac{\mu^{2}r^{2}\kappa^{14}\log d_{1}}{pd_{2}}}\bigg)\\\times\sqrt{\frac{\mu r\sigma_{\max}}{d_{2}}}. \label{eqn:X-bound} 
\end{multline}
\end{lemma}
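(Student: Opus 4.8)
The plan is to bound the $\ell_{2,\infty}$-distance between $\Vector{X}_k\Vector{O}_k$ (resp. $\Vector{Y}_k\Vector{O}_k$) and the optimum by routing through the leave-one-out iterates, which by construction decouple the randomness in a single row. Fix a row index $l\in[d_1]$ (the argument for the $\Vector{Y}$-block and indices $l\in[d_1+1,d_1+d_2]$ is symmetric). Write
\begin{equation*}
\left(\Vector{X}_k\Vector{O}_k - \Vector{X}_\star\right)_{l,\cdot} = \left(\Vector{X}_k\Vector{O}_k - \Vector{X}_k^{(l)}\Vector{R}_k^{(l)}\right)_{l,\cdot} + \left(\Vector{X}_k^{(l)}\Vector{R}_k^{(l)} - \Vector{X}_\star\right)_{l,\cdot}.
\end{equation*}
The first term is controlled by the Frobenius bound in Hypothesis~\ref{hypothesis-induction}\ref{induction-origin-loo-relation}, since $\twonorm{(\cdot)_{l,\cdot}}\le\fnorm{\cdot}$; this contributes the term with $\sqrt{\mu^2 r^2\kappa^{10}\log d_1/(pd_2^2)}\cdot\sqrt{\sigma_{\max}}=\sqrt{\mu^2 r^2\kappa^{10}\log d_1/(pd_2)}\cdot\sqrt{\sigma_{\max}/d_2}$, which is absorbed into the stated right-hand side. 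For the second term, I would further split $\Vector{X}_k^{(l)}\Vector{R}_k^{(l)} - \Vector{X}_\star = (\Vector{X}_k^{(l)}\Vector{R}_k^{(l)} - \Vector{X}_k^{(l)}\Vector{O}_k^{(l)}) + (\Vector{X}_k^{(l)}\Vector{O}_k^{(l)} - \Vector{X}_\star)$. The second piece is exactly what Hypothesis~\ref{hypothesis-induction}\ref{induction-l-2-norm} bounds in the $l$-th row (the $\Vector{X}$-block of $\Vector{F}_k^{(l)}$), giving the leading $10^3 s\kappa^2\sigma_{\min}+10^2\sqrt{\mu^2r^2\kappa^{14}\log d_1/(pd_2)}$ contribution scaled by $\sqrt{\mu r\sigma_{\max}/d_2}$.

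The remaining piece, $(\Vector{X}_k^{(l)}(\Vector{R}_k^{(l)}-\Vector{O}_k^{(l)}))_{l,\cdot}$, is handled by the operator-norm estimate: its row norm is at most $\twonorm{(\Vector{X}_k^{(l)})_{l,\cdot}}\cdot\opnorm{\Vector{R}_k^{(l)}-\Vector{O}_k^{(l)}}$. Here $\twonorm{(\Vector{X}_k^{(l)})_{l,\cdot}}\le\twonorm{(\Vector{F}_k^{(l)}\Vector{O}_k^{(l)}-\Vector{F}_\star)_{l,\cdot}}+\twonorm{(\Vector{X}_\star)_{l,\cdot}}$, the first summand being small by \ref{induction-l-2-norm} and the second being $\le\sqrt{\mu r\sigma_{\max}/d_2}$ by incoherence of $\Vector{M}_\star$ together with $\opnorm{\Vector{\Sigma}_\star}=\sigma_{\max}$, so this row norm is $O(\sqrt{\mu r\sigma_{\max}/d_2})$. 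For $\opnorm{\Vector{R}_k^{(l)}-\Vector{O}_k^{(l)}}$, since both are orthogonal matrices that nearly align $\Vector{F}_k^{(l)}$ with, respectively, $\Vector{F}_k\Vector{O}_k$ and $\Vector{F}_\star$, a standard perturbation argument (of the type underlying Lemma~\ref{lemma-relation-o-and-r} and Lemma~\ref{lemma-alignment-bound}) shows $\opnorm{\Vector{R}_k^{(l)}-\Vector{O}_k^{(l)}}\lesssim \fnorm{\Vector{F}_k\Vector{O}_k-\Vector{F}_k^{(l)}\Vector{R}_k^{(l)}}/\sigma_{\min}$, which by \ref{induction-origin-loo-relation} is $O\!\left(s\kappa^{-1}+\sqrt{\mu^2r^2\kappa^{10}\log d_1/(pd_2^2)}/\sqrt{\sigma_{\min}}\right)$; multiplied by $\sqrt{\mu r\sigma_{\max}/d_2}$ this is again dominated by the leading terms. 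Combining the three contributions and collecting constants yields the $(10^3+5)$ and $(10^2+5)$ coefficients in \eqref{eqn:X-bound}.

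The main obstacle is the last step: bounding $\opnorm{\Vector{R}_k^{(l)}-\Vector{O}_k^{(l)}}$ and, more fundamentally, controlling the interplay between the three orthogonal alignment matrices $\Vector{O}_k$, $\Vector{O}_k^{(l)}$, $\Vector{R}_k^{(l)}$ so that the triangle-inequality decomposition is actually consistent. One must verify that $\Vector{F}_k^{(l)}\Vector{R}_k^{(l)}$ is simultaneously close to $\Vector{F}_k\Vector{O}_k$ (by \ref{induction-origin-loo-relation}) and to $\Vector{F}_\star$ (by combining \ref{induction-origin-loo-relation} with \ref{induction-op-norm}), so that $\Vector{R}_k^{(l)}$ and $\Vector{O}_k^{(l)}$ align $\Vector{F}_k^{(l)}$ with two nearby targets; a $\mathrm{dav}$is--Kahan-flavored bound then forces them to be close in operator norm, but one has to track the $\kappa$ dependence carefully since $\sigma_{\min}$ appears in the denominator and the invariance to the choice between $\Vector{Q}$-type and $\Vector{O}$-type alignment (Hypothesis~\ref{hypothesis-induction}\ref{induction-bam-exist}) must be invoked. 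All other steps are routine applications of the triangle inequality, the submultiplicativity $\twonorm{(\Vector{A}\Vector{B})_{l,\cdot}}\le\twonorm{\Vector{A}_{l,\cdot}}\opnorm{\Vector{B}}$, and the induction hypotheses, with the final arithmetic being the bookkeeping of the absolute constants.
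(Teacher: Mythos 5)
Your plan is essentially the paper's plan: route through the leave-one-out iterate in row $l$, use the triangle inequality, and invoke Hypothesis~\ref{hypothesis-induction}\ref{induction-l-2-norm} for the "clean" piece and Hypothesis~\ref{hypothesis-induction}\ref{induction-origin-loo-relation} (plus $\twonorm{(\cdot)_{l,\cdot}}\le\fnorm{\cdot}$) for the cross piece. The difference is in how you handle the alignment-matrix mismatch, and it matters because it changes what you actually have to prove. You split into three terms and introduce the piece $(\Vector{X}_k^{(l)}(\Vector{R}_k^{(l)}-\Vector{O}_k^{(l)}))_{l,\cdot}$, which forces you to bound $\opnorm{\Vector{R}_k^{(l)}-\Vector{O}_k^{(l)}}$ and $\twonorm{(\Vector{X}_k^{(l)})_{l,\cdot}}$ separately---precisely the step you flag as the "main obstacle." The paper instead uses the two-term split
\begin{equation*}
\left(\Vector{X}_{k}\Vector{O}_{k} - \Vector{X}_{\star}\right)_{l,\cdot}
= \left(\Vector{X}_{k}\Vector{O}_{k} - \Vector{X}_{k}^{(l)}\Vector{O}_{k}^{(l)}\right)_{l,\cdot}
+ \left(\Vector{X}_{k}^{(l)}\Vector{O}_{k}^{(l)} - \Vector{X}_{\star}\right)_{l,\cdot},
\end{equation*}
keeping the $\Vector{O}_{k}^{(l)}$-alignment throughout, bounds the first term by the full Frobenius norm, and then swaps $\Vector{O}_{k}^{(l)}$ for $\Vector{R}_{k}^{(l)}$ in one stroke via Lemma~\ref{lemma-relation-o-and-r}: $\fnorm{\Vector{F}_{k}\Vector{O}_{k}-\Vector{F}_{k}^{(l)}\Vector{O}_{k}^{(l)}}\le 5\kappa\fnorm{\Vector{F}_{k}\Vector{O}_{k}-\Vector{F}_{k}^{(l)}\Vector{R}_{k}^{(l)}}$, after which \ref{induction-origin-loo-relation} finishes the job. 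This avoids any operator-norm estimate on $\Vector{R}_{k}^{(l)}-\Vector{O}_{k}^{(l)}$, any bound on $\twonorm{(\Vector{X}_k^{(l)})_{l,\cdot}}$, and the $\kappa$-bookkeeping you were worried about---the $5\kappa$ factor in Lemma~\ref{lemma-relation-o-and-r} already encodes the Davis--Kahan-type argument you would otherwise need to re-derive (and your sketched bound $\opnorm{\Vector{R}_k^{(l)}-\Vector{O}_k^{(l)}}\lesssim\fnorm{\cdot}/\sigma_{\min}$ is off dimensionally; it should carry $\sqrt{\sigma_{\min}}$, not $\sigma_{\min}$, which would propagate into your constant tracking). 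So: same route, but the paper's packaging through Lemma~\ref{lemma-relation-o-and-r} removes the unresolved sub-step in your proposal and makes the constant bookkeeping a one-liner.
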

\begin{proof}
See Appendix \ref{Append:lemma-incoherence} of the Supplementary Material.
\end{proof}

\begin{lemma}
\label{coro-bounded-2-inf-norm}
If Hypothesis \ref{hypothesis-induction} holds and the assumptions on $p$ and $s$ in \eqref{eqn-assumption-p-s} are satisfied, the following inequalities hold
\begin{align}
\twoinfnorm{\Vector{X}_{k}} &\leq \frac{17}{16}\sqrt{\frac{\mu r\sigma_{\max}}{d_{1}}}, \\
\twoinfnorm{\Vector{Y}_{k}} &\leq \frac{17}{16}\sqrt{\frac{\mu r\sigma_{\max}}{d_{2}}}, \label{eqn:XY-norm-bound} \\
\twoinfnorm{\Vector{X}_{k}\Vector{Q}_{k} - \Vector{X}_{\star}} &\leq \frac{5}{2}\sqrt{\frac{\mu r\sigma_{\max}}{d_{1}}},\\ \twoinfnorm{\Vector{Y}_{k}\Vector{Q}_{k}^{-\top} - \Vector{Y}_{\star}} &\leq \frac{5}{2}\sqrt{\frac{\mu r\sigma_{\max}}{d_{2}}}. \label{eqn:XYQ-norm-bound}
\end{align}
\end{lemma}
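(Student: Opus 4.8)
The plan is to obtain all four estimates from Lemma~\ref{lemma-incoherence}, the incoherence of $\Vector{M}_{\star}$ in Assumption~\ref{assumption-incoherence}, and the near-orthogonality of $\Vector{Q}_{k}$ recorded in Hypothesis~\ref{hypothesis-induction}\ref{induction-bam-exist}, combined with two elementary facts about $\ell_{2,\infty}$-norm: it is invariant under right multiplication by an orthogonal matrix, i.e.\ $\twoinfnorm{\Vector{A}\Vector{O}} = \twoinfnorm{\Vector{A}}$ for $\Vector{O}\in\mathcal{O}_r$ (row $\ell_2$-norms are preserved), and it is submultiplicative, $\twoinfnorm{\Vector{A}\Vector{B}} \leq \twoinfnorm{\Vector{A}}\opnorm{\Vector{B}}$. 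As a preliminary step I would record $\twoinfnorm{\Vector{X}_{\star}} = \twoinfnorm{\Vector{U}_{\star}\Vector{\Sigma}_{\star}^{1/2}} \leq \twoinfnorm{\Vector{U}_{\star}}\opnorm{\Vector{\Sigma}_{\star}^{1/2}} \leq \sqrt{\mu r\sigma_{\max}/d_{1}}$ and, symmetrically, $\twoinfnorm{\Vector{Y}_{\star}} \leq \sqrt{\mu r\sigma_{\max}/d_{2}}$.

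For the first two inequalities, since $\Vector{O}_{k}$ is orthogonal I would write $\twoinfnorm{\Vector{X}_{k}} = \twoinfnorm{\Vector{X}_{k}\Vector{O}_{k}} \leq \twoinfnorm{\Vector{X}_{k}\Vector{O}_{k} - \Vector{X}_{\star}} + \twoinfnorm{\Vector{X}_{\star}}$ and bound the first term by Lemma~\ref{lemma-incoherence}. The quantitative point is that, under \eqref{eqn-assumption-p-s} with $C_{3},C_{4}$ large enough, the prefactor in Lemma~\ref{lemma-incoherence} is small enough that $\twoinfnorm{\Vector{X}_{k}\Vector{O}_{k} - \Vector{X}_{\star}} \leq \tfrac{1}{16}\sqrt{\mu r\sigma_{\max}/d_{1}}$; this uses $d_{1}\geq d_{2}$ together with the specific scalings $p \gtrsim \max\{d_{1},d_{2}\}/\min\{d_{1},d_{2}\}^{2}$ and $s \lesssim \min\{d_{1},d_{2}\}/\max\{d_{1},d_{2}\}^{3/2}$, which exactly absorb the factor $\sqrt{d_{1}/d_{2}}$ needed to convert the $d_{2}$ in Lemma~\ref{lemma-incoherence} into the desired $d_{1}$. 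Adding $\twoinfnorm{\Vector{X}_{\star}}$ gives $\tfrac{17}{16}\sqrt{\mu r\sigma_{\max}/d_{1}}$, and the $\Vector{Y}_{k}$ bound is identical (the $\sqrt{d_{1}/d_{2}}$ issue does not even arise there).

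For the last two inequalities I would use the splitting $\Vector{X}_{k}\Vector{Q}_{k} - \Vector{X}_{\star} = (\Vector{X}_{k}\Vector{O}_{k} - \Vector{X}_{\star}) + \Vector{X}_{k}(\Vector{Q}_{k} - \Vector{O}_{k})$: the first term is bounded as above, and the second satisfies $\twoinfnorm{\Vector{X}_{k}(\Vector{Q}_{k} - \Vector{O}_{k})} \leq \twoinfnorm{\Vector{X}_{k}}\opnorm{\Vector{Q}_{k} - \Vector{O}_{k}} \leq \tfrac{17}{16}\sqrt{\mu r\sigma_{\max}/d_{1}}\cdot\tfrac{1}{400\kappa}$ by the bound just proved and Hypothesis~\ref{hypothesis-induction}\ref{induction-bam-exist}; the sum is far below $\tfrac{5}{2}\sqrt{\mu r\sigma_{\max}/d_{1}}$. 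For $\Vector{Y}_{k}\Vector{Q}_{k}^{-\top} - \Vector{Y}_{\star}$ the same decomposition works once $\opnorm{\Vector{Q}_{k}^{-\top} - \Vector{O}_{k}}$ is controlled: from $\opnorm{\Vector{Q}_{k} - \Vector{O}_{k}} \leq \tfrac{1}{400\kappa}$ and $\Vector{O}_{k}$ orthogonal we get $\sigma_{\min}(\Vector{Q}_{k}) \geq 1 - \tfrac{1}{400\kappa}$, hence $\opnorm{\Vector{Q}_{k}^{-1}} \leq (1 - \tfrac{1}{400\kappa})^{-1} \leq 2$, and then $\opnorm{\Vector{Q}_{k}^{-\top} - \Vector{O}_{k}} = \opnorm{\Vector{Q}_{k}^{-1}(\Vector{O}_{k} - \Vector{Q}_{k})\Vector{O}_{k}^{\top}} \leq \opnorm{\Vector{Q}_{k}^{-1}}\opnorm{\Vector{Q}_{k} - \Vector{O}_{k}} \leq \tfrac{1}{200\kappa}$. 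Substituting into the decomposition yields the claimed $\tfrac{5}{2}\sqrt{\mu r\sigma_{\max}/d_{2}}$ bound.

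I do not anticipate a conceptual obstacle; the proof is a chain of triangle inequalities and norm-submultiplicativity estimates resting entirely on earlier results. The one place that requires genuine care is the constant-chasing in the second paragraph, namely checking that the large constants $C_{3},C_{4}$ in \eqref{eqn-assumption-p-s} really do drive the Lemma~\ref{lemma-incoherence} prefactor down to $\tfrac{1}{16}$ of the $d_{1}$-normalized scale \emph{after} paying the $\sqrt{d_{1}/d_{2}}$ factor, and the closely related point that $\Vector{Q}_{k}$ is well-conditioned so that $\Vector{Q}_{k}^{-\top}$ stays within $O(\kappa^{-1})$ of an orthogonal matrix.
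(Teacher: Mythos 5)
Your proof is correct, and for the first two bounds it is exactly the paper's argument: write $\twoinfnorm{\Vector{X}_{k}} = \twoinfnorm{\Vector{X}_{k}\Vector{O}_{k}} \leq \twoinfnorm{\Vector{X}_{k}\Vector{O}_{k} - \Vector{X}_{\star}} + \twoinfnorm{\Vector{X}_{\star}}$, bound $\twoinfnorm{\Vector{X}_{\star}}\leq\sqrt{\mu r\sigma_{\max}/d_{1}}$ from incoherence, and absorb the Lemma~\ref{lemma-incoherence} term into a small fraction of that scale using~\eqref{eqn-assumption-p-s}. For the last two bounds you use a slightly different decomposition than the paper: you split $\Vector{X}_{k}\Vector{Q}_{k}-\Vector{X}_{\star} = (\Vector{X}_{k}\Vector{O}_{k}-\Vector{X}_{\star}) + \Vector{X}_{k}(\Vector{Q}_{k}-\Vector{O}_{k})$ and control $\opnorm{\Vector{Q}_{k}-\Vector{O}_{k}}$ (resp.\ $\opnorm{\Vector{Q}_{k}^{-\top}-\Vector{O}_{k}}$) directly via Hypothesis~\ref{hypothesis-induction}\ref{induction-bam-exist}, whereas the paper applies the cruder triangle inequality $\twoinfnorm{\Vector{X}_{k}\Vector{Q}_{k}-\Vector{X}_{\star}} \leq \twoinfnorm{\Vector{X}_{k}}\opnorm{\Vector{Q}_{k}} + \twoinfnorm{\Vector{X}_{\star}}$ and only controls $\opnorm{\Vector{Q}_{k}}\leq 1+\tfrac{1}{400}$ and $\opnorm{\Vector{Q}_{k}^{-\top}}\leq (1-\tfrac{1}{400})^{-1}$. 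Your splitting is tighter (it yields something far below $\tfrac{5}{2}$, whereas the paper lands near $2.07$), at the minor cost of proving the perturbation bound for $\Vector{Q}_{k}^{-\top}$; the paper's route is shorter since it never needs to compare $\Vector{Q}_{k}^{-\top}$ with $\Vector{O}_{k}$. Both are correct, and the constant-chasing caveat you flag at the end is exactly the point the paper also leaves to ``suitable $C_{3},C_{4}$''.
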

\begin{proof}
See Appendix \ref{Append:coro-bounded-2-inf-norm} of the Supplementary Material.
\end{proof}

Next, we show that the balancing term is upper bounded by a small bound.

\begin{lemma}
\label{lemma-small-balance}
If Hypothesis \ref{hypothesis-induction} holds and the assumptions on $p$ and $s$ in \eqref{eqn-assumption-p-s} are satisfied, then the following inequality holds
\begin{align}
\fnorm{\Vector{X}_{k}^\top\Vector{X}_{k} - \Vector{Y}_{k}^\top\Vector{Y}_{k}} \leq \frac{s\sigma_{\min}^{2}}{10^{2}\kappa}.
\end{align}
\end{lemma}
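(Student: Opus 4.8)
The plan is to track the balancing term $\Vector{B}_{k} \triangleq \Vector{X}_{k}^{\top}\Vector{X}_{k} - \Vector{Y}_{k}^{\top}\Vector{Y}_{k}$ across one gradient step and show the recursion is a contraction up to a tiny additive error, then conclude by unrolling from $k=0$. The key algebraic fact underlying everything is that, by the gradient-flow computation in the introduction (Eq.~\eqref{eqn-gradient-flow-mc}), the difference $\Vector{X}^{\top}\Vector{X} - \Vector{Y}^{\top}\Vector{Y}$ is exactly conserved along the continuous flow; the discrete iteration therefore only accumulates an $O(s^{2})$ discretization error. Concretely, I would substitute the update rules \eqref{eqn-gd-mc_1}--\eqref{eqn-gd-mc_2} into $\Vector{B}_{k+1}$, expand, and observe that the first-order-in-$s$ terms cancel in the same way $\frac{d}{dt}f_{\text{diff}} = 0$ shows they cancel. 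What remains is $\Vector{B}_{k+1} = \Vector{B}_{k} + s^{2}\,\Vector{E}_{k}$ (no first-order term at all, since the balancing gradient of $f$ vanishes identically because $f$ has no regularizer), where $\Vector{E}_{k}$ is a polynomial in $\frac{1}{p}\mathcal{P}_{\Omega}(\Vector{X}_{k}\Vector{Y}_{k}^{\top} - \Vector{M}_{\star})$, $\Vector{X}_{k}$ and $\Vector{Y}_{k}$.

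Next I would bound $\opnorm{\Vector{E}_{k}}$ (or its Frobenius norm) using the induction hypothesis. From Hypothesis \ref{hypothesis-induction}\ref{induction-linear-convergence} together with \ref{induction-bam-exist}, $\fnorm{\Vector{F}_{k}\Vector{Q}_{k} - \Vector{F}_{\star}}$ is small, hence $\opnorm{\Vector{X}_{k}}, \opnorm{\Vector{Y}_{k}} \lesssim \sqrt{\sigma_{\max}}$; Lemma \ref{coro-bounded-2-inf-norm} controls the $\ell_{2,\infty}$ norms. The residual $\frac{1}{p}\mathcal{P}_{\Omega}(\Vector{X}_{k}\Vector{Y}_{k}^{\top} - \Vector{M}_{\star})$ applied to $\Vector{X}_k$ or $\Vector{Y}_k$ is controlled by $\dist{\Vector{F}_k}{\Vector{F}_\star}$ via the RIP-type Lemmas \ref{lemma-rip-subspace}--\ref{lemma-rip-all-space} on the event $\mathrm{E}_{\mathrm{RIP}}$ (together with the incoherence of $\Vector{X}_k, \Vector{Y}_k$, needed because $\Vector{X}_k\Vector{Y}_k^\top - \Vector{M}_\star$ is not in the tangent space). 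This yields $\opnorm{\Vector{E}_{k}} \lesssim \sigma_{\max}^{1/2}\cdot\big(\text{residual}\big)\cdot\sigma_{\max}^{1/2} + (\text{residual})^2 \lesssim \sigma_{\max}\,\dist{\Vector{F}_k}{\Vector{F}_\star} + \dist{\Vector{F}_k}{\Vector{F}_\star}^2$, so $\opnorm{\Vector{B}_{k+1} - \Vector{B}_{k}} \lesssim s^{2}\sigma_{\max}\,\dist{\Vector{F}_{k}}{\Vector{F}_{\star}}$ once $\dist{\Vector{F}_k}{\Vector{F}_\star} \lesssim \sqrt{\sigma_{\max}}$.

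Finally I would sum the telescoping bound: since $\Vector{B}_{0}$ is controlled at initialization (spectral initialization gives $\Vector{X}_0^\top\Vector{X}_0 = \Vector{Y}_0^\top\Vector{Y}_0 = \Vector{\Sigma}_0$, so in fact $\Vector{B}_{0} = 0$, making the base case trivial), and $\dist{\Vector{F}_{t}}{\Vector{F}_{\star}} \leq (1 - s\sigma_{\min}/100)^{t}\dist{\Vector{F}_{0}}{\Vector{F}_{\star}}$ by \ref{induction-linear-convergence}, the geometric series $\sum_{t=0}^{k-1}(1 - s\sigma_{\min}/100)^{t} \leq 100/(s\sigma_{\min})$ gives
\begin{align*}
\fnorm{\Vector{B}_{k}} \;\lesssim\; s^{2}\sigma_{\max}\cdot\frac{100}{s\sigma_{\min}}\cdot\dist{\Vector{F}_{0}}{\Vector{F}_{\star}} \;\lesssim\; s\kappa\,\sqrt{\sigma_{\max}}\,\dist{\Vector{F}_{0}}{\Vector{F}_{\star}},
\end{align*}
and since spectral initialization gives $\dist{\Vector{F}_{0}}{\Vector{F}_{\star}} \lesssim \sigma_{\min}^{3/2}/(\kappa^{2}\sqrt{\sigma_{\max}})$ or similar (small relative to $\sigma_{\min}^{1/2}$), this is $\leq s\sigma_{\min}^{2}/(10^{2}\kappa)$ after absorbing constants, using the bound on $s$ from \eqref{eqn-assumption-p-s} if needed to kill a leftover $\kappa$ power. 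The main obstacle I anticipate is the careful bookkeeping of $\kappa$ powers in bounding $\opnorm{\Vector{E}_{k}}$: the crude bound $\opnorm{\Vector{X}_k}\opnorm{\Vector{Y}_k} \asymp \sigma_{\max}$ loses a factor $\kappa$ relative to the $\sigma_{\min}$-scaled target, so one must either exploit that the residual direction interacts favorably with the low-rank structure (projecting onto $\Vector{U}_\star, \Vector{V}_\star$ to recover $\sigma_{\min}$ factors) or simply let the generous $\kappa$-budget in the hypotheses on $p$ and $s$ absorb it — the latter is cleaner and almost certainly what the paper does.
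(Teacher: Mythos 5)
Your plan follows the paper's proof essentially step for step: define $\Vector{B}_k = \Vector{X}_k^\top\Vector{X}_k - \Vector{Y}_k^\top\Vector{Y}_k$, substitute the update and observe the $O(s)$ terms cancel identically (the discrete analog of $\frac{d}{dt}f_{\mathrm{diff}}=0$), bound the residual $O(s^2)$ term, telescope, note $\Vector{B}_0=0$ from spectral initialization, and finish with the initial-distance bound. The paper writes the increment as $\Vector{B}_{k+1} = \Vector{B}_k - s\Vector{C}_k + s^2\Vector{D}_k$ with $\Vector{C}_k \equiv 0$ and $\Vector{D}_k$ built from the two squared gradients, exactly as you describe.

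However, there is a genuine gap in how you bound the second-order term, and it would prevent the arithmetic from closing. Since $\Vector{D}_k$ is a sum of products of two gradients, each of which is (after the RIP/incoherence machinery) of size $\lesssim \sigma_{\max}\,\dist{\Vector{F}_k}{\Vector{F}_\star}$, the correct bound is \emph{quadratic} in the distance, $\fnorm{\Vector{D}_k} \lesssim \sigma_{\max}^2\,\dist{\Vector{F}_k}{\Vector{F}_\star}^2$, not the linear bound $\sigma_{\max}\,\dist{\Vector{F}_k}{\Vector{F}_\star}$ you state (which is also dimensionally inconsistent: $\Vector{D}_k$ scales like $\sigma^3$, the linear bound like $\sigma^{3/2}$). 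With your linear bound, the telescoped sum gives $\fnorm{\Vector{B}_k}\lesssim s\kappa\,\dist{\Vector{F}_0}{\Vector{F}_\star}$, and no admissible initial-distance bound (which must have units $\sqrt{\sigma}$) can push this below $s\sigma_{\min}^2/(10^2\kappa)$; your subsequent displayed chain $s^2\sigma_{\max}\cdot\frac{100}{s\sigma_{\min}}\cdot\dist{\Vector{F}_0}{\Vector{F}_\star}\lesssim s\kappa\sqrt{\sigma_{\max}}\dist{\Vector{F}_0}{\Vector{F}_\star}$ is also not an equality, and the stated $\dist{\Vector{F}_0}{\Vector{F}_\star}\lesssim\sigma_{\min}^{3/2}/(\kappa^2\sqrt{\sigma_{\max}})$ has the wrong units. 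With the correct quadratic bound, the paper telescopes $\sum_{t}(1-s\sigma_{\min}/100)^{2t}\leq 100/(s\sigma_{\min})$ to get $\fnorm{\Vector{B}_k}\lesssim s\kappa\sigma_{\max}\,\dist{\Vector{F}_0}{\Vector{F}_\star}^2$, and plugging $\dist{\Vector{F}_0}{\Vector{F}_\star}\leq c_0\sqrt{\sigma_{\max}}/\kappa^2$ (Lemma \ref{lemma-initial-property}) yields exactly $\lesssim s\sigma_{\min}^2/\kappa$. Also note that establishing the per-step bound is not a one-liner: the paper splits $\fnorm{\Vector{D}_k}$ into a deviation piece involving $p^{-1}\mathcal{P}_\Omega - \mathcal{I}$ and an expectation piece, applies Lemmas \ref{lemma-rip-subspace}--\ref{lemma-rip-all-space} with the incoherence bounds from Lemma \ref{coro-bounded-2-inf-norm}, and carefully tracks the alignment factor $\Vector{Q}_k^\top\Vector{Q}_k$; your sketch acknowledges this but the bookkeeping is the substance of the proof.
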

\begin{proof}
    See Appendix \ref{Apend:lemma-small-balance} of the Supplementary Material.
\end{proof}


To establish that Hypothesis \ref{hypothesis-induction} holds at the initial point, we first refer to Lemma \ref{lemma-initial-property}, which demonstrates that Hypothesis \ref{hypothesis-induction}\ref{induction-op-norm}–\ref{induction-origin-loo-relation} of the hypothesis are satisfied with high probability. Additionally, Hypothesis \ref{hypothesis-induction}\ref{induction-linear-convergence} of the hypothesis is naturally fulfilled at iteration $k=0$.

Moreover, Lemma \ref{lemma-initial-property} provides the following bound:
\begin{equation}
\label{eqn-f-norm-initial}
\fnorm{\Vector{F}_0 \Vector{O}_0 - \Vector{F}_\star} \leq \sqrt{r} \opnorm{\Vector{F}_0 \Vector{O}_0 - \Vector{F}_\star} \leq \frac{c_0 \sqrt{\sigma_{\max}}}{\kappa^2},
\end{equation}  
where $c_{0}$ is a sufficiently small constant. By invoking Lemma \ref{lemma-bam-exist-sufficient} with $\Vector{P}=\Vector{O}_{0}$ and $\delta= \frac{c_0 \sqrt{\sigma_{\max}}}{\kappa^2}=\frac{c_{0}\sqrt{\sigma_{\min}}}{\kappa^{3/2}}$, we can conclude that Hypothesis \ref{hypothesis-induction}.\ref{induction-bam-exist} is also satisfied at $k=0$.

Armed with the above results, we proceed to establish the inductive step.

\subsection{Inductive Step for Hypothesis \ref{hypothesis-induction}\ref{induction-op-norm}}

We first verify that Hypothesis \ref{hypothesis-induction}\ref{induction-op-norm} holds at the $(k+1)$-th iteration.

\begin{lemma}
\label{lemma-op-norm}
If Hypothesis \ref{hypothesis-induction} holds and the assumptions on $p$ and $s$ in \eqref{eqn-assumption-p-s} are satisfied, then the following estimate holds
\begin{align}
\opnorm{\Vector{F}_{k+1}\Vector{O}_{k+1} - \Vector{F}_{\star}} \leq \left(s\sigma_{\min} + \sqrt{\frac{\mu r\kappa^{6}\log d_{1}}{pd_{2}}}\right)\sqrt{\sigma_{\max}}.
\end{align}
\end{lemma}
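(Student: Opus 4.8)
Write $\beta\triangleq\big(s\sigma_{\min}+\sqrt{\mu r\kappa^{6}\log d_{1}/(pd_{2})}\,\big)\sqrt{\sigma_{\max}}$ for the bound claimed in the statement. The plan is to follow one step of \eqref{eqn-gd-mc_1}--\eqref{eqn-gd-mc_2} and show that the operator-norm distance cannot exceed $\beta$. Right-multiplying the update by $\Vector{O}_{k}$ and using $\Vector{X}_{k}\Vector{Y}_{k}^{\top}=(\Vector{X}_{k}\Vector{O}_{k})(\Vector{Y}_{k}\Vector{O}_{k})^{\top}$, I decompose
\begin{equation*}
\Vector{F}_{k+1}\Vector{O}_{k}-\Vector{F}_{\star}=\Vector{E}_{k}+\Vector{G}_{k},
\end{equation*}
where $\Vector{E}_{k}$ collects the terms produced by one step of vanilla gradient descent for the \emph{population} (matrix-factorization) objective $\tfrac{1}{2}\fnorm{\Vector{X}\Vector{Y}^{\top}-\Vector{M}_{\star}}^{2}$ started from $\Vector{F}_{k}\Vector{O}_{k}$, and $\Vector{G}_{k}$ is the sampling perturbation whose two blocks are $-s\big(\tfrac{1}{p}\mathcal{P}_{\Omega}-\mathcal{I}\big)\big(\Vector{X}_{k}\Vector{Y}_{k}^{\top}-\Vector{M}_{\star}\big)\Vector{Y}_{k}\Vector{O}_{k}$ and $-s\big(\tfrac{1}{p}\mathcal{P}_{\Omega}-\mathcal{I}\big)\big(\Vector{X}_{k}\Vector{Y}_{k}^{\top}-\Vector{M}_{\star}\big)^{\top}\Vector{X}_{k}\Vector{O}_{k}$. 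Because $\Vector{O}_{k+1}$ is the Frobenius-optimal alignment and the step-$k$ distances (hence the step-$(k+1)$ ones) are small relative to $\sigma_{\min}(\Vector{F}_{\star})=\sqrt{\sigma_{\min}}$, an alignment estimate of the same flavour as Lemma~\ref{lemma-relation-o-and-r} lets me pass from $\Vector{O}_{k}$ to $\Vector{O}_{k+1}$ at the cost of a universal constant; so it suffices to prove $\opnorm{\Vector{E}_{k}}+\opnorm{\Vector{G}_{k}}\le\beta$.

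For the population part, I would invoke the contraction estimate for vanilla gradient descent on asymmetric matrix factorization (the analogue of the bound behind \cite{ma20121beyond}, carried out in the Supplementary Material): as long as the step size obeys \eqref{eqn-assumption-p-s}, the induction hypothesis \ref{hypothesis-induction}\ref{induction-op-norm} supplies $\opnorm{\Vector{F}_{k}\Vector{O}_{k}-\Vector{F}_{\star}}\le\beta$, and the two factors are approximately balanced, one obtains $\opnorm{\Vector{E}_{k}}\le(1-c\,s\sigma_{\min})\opnorm{\Vector{F}_{k}\Vector{O}_{k}-\Vector{F}_{\star}}$ plus a remainder quadratic in $\opnorm{\Vector{F}_{k}\Vector{O}_{k}-\Vector{F}_{\star}}$. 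The near-balance $\fnorm{\Vector{X}_{k}^{\top}\Vector{X}_{k}-\Vector{Y}_{k}^{\top}\Vector{Y}_{k}}\le s\sigma_{\min}^{2}/(10^{2}\kappa)$ of Lemma~\ref{lemma-small-balance} is exactly what this step needs, since---in contrast with the symmetric case---the cross-terms coupling the $\Vector{X}$- and $\Vector{Y}$-dynamics cancel only when $\Vector{X}_{k}^{\top}\Vector{X}_{k}\approx\Vector{Y}_{k}^{\top}\Vector{Y}_{k}$; absorbing the quadratic remainder with the induction bound and the step-size constraint yields $\opnorm{\Vector{E}_{k}}\le(1-\tfrac{1}{2}c\,s\sigma_{\min})\beta$.

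The main obstacle is $\opnorm{\Vector{G}_{k}}$, because $\tfrac{1}{p}\mathcal{P}_{\Omega}-\mathcal{I}$ enjoys no RIP on all of $\mathbb{R}^{d_{1}\times d_{2}}$. I would bound $\opnorm{\Vector{X}_{k}\Vector{O}_{k}},\opnorm{\Vector{Y}_{k}\Vector{O}_{k}}\le(1+o(1))\sqrt{\sigma_{\max}}$ via Lemma~\ref{coro-bounded-2-inf-norm}, and estimate $\opnorm{(\tfrac{1}{p}\mathcal{P}_{\Omega}-\mathcal{I})(\Vector{X}_{k}\Vector{Y}_{k}^{\top}-\Vector{M}_{\star})}$ through the splitting
\begin{equation*}
\Vector{X}_{k}\Vector{Y}_{k}^{\top}-\Vector{M}_{\star}=\Vector{X}_{\star}(\Vector{Y}_{k}\Vector{O}_{k}-\Vector{Y}_{\star})^{\top}+(\Vector{X}_{k}\Vector{O}_{k}-\Vector{X}_{\star})\Vector{Y}_{\star}^{\top}+(\Vector{X}_{k}\Vector{O}_{k}-\Vector{X}_{\star})(\Vector{Y}_{k}\Vector{O}_{k}-\Vector{Y}_{\star})^{\top}.
\end{equation*}
The first two summands lie in the tangent space $\{\Vector{X}_{\star}\Vector{Y}^{\top}+\Vector{X}\Vector{Y}_{\star}^{\top}\}$, so Lemma~\ref{lemma-rip-subspace} bounds their image by the RIP constant times their Frobenius norm, which is $\lesssim\sqrt{\sigma_{\max}}\,\fnorm{\Vector{F}_{k}\Vector{O}_{k}-\Vector{F}_{\star}}$ and hence controlled through Hypothesis~\ref{hypothesis-induction}\ref{induction-linear-convergence}; the third, quadratic summand is handled by the coarser Lemma~\ref{lemma-rip-all-space} combined with the incoherence bounds of Lemmas~\ref{lemma-incoherence} and~\ref{coro-bounded-2-inf-norm}. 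Multiplying back by $s\,\opnorm{\Vector{Y}_{k}\Vector{O}_{k}}$ (respectively $s\,\opnorm{\Vector{X}_{k}\Vector{O}_{k}}$) and invoking the sampling-rate lower bound $p\ge C_{3}\mu^{3}r^{3}\kappa^{16}d_{1}\log d_{1}/d_{2}^{2}$ gives $\opnorm{\Vector{G}_{k}}\le\tfrac{1}{2}c\,s\sigma_{\min}\,\beta$, whence $\opnorm{\Vector{E}_{k}}+\opnorm{\Vector{G}_{k}}\le\beta$ and, after the alignment estimate, the claim. The delicate points I expect to fight with are keeping the tangent/residual splitting on precisely the right norms so that no stray factor of $\kappa$ or $\sqrt{r}$ is lost relative to $\beta$, and routing the asymmetry of the dynamics through Lemma~\ref{lemma-small-balance} rather than an explicit regularizer.
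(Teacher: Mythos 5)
The high-level skeleton of your plan (contract a cleaner one-step update, add a sampling-perturbation bound, invoke the smallness of the balancing term, then transfer from $\Vector{O}_{k}$ to $\Vector{O}_{k+1}$) is the same as the paper's, but the concrete decomposition is genuinely different, and two of your sub-steps have gaps that the paper's construction is designed precisely to avoid.

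\textbf{The auxiliary sequence is chosen differently.} You split $\Vector{F}_{k+1}\Vector{O}_{k}-\Vector{F}_{\star}=\Vector{E}_{k}+\Vector{G}_{k}$ with $\Vector{E}_{k}$ one step of \emph{vanilla} population GD multiplied by $\Vector{Y}_{k}\Vector{O}_{k}$. The paper instead introduces $\widetilde{\Vector{F}}_{k+1}$, which is one step of a \emph{balanced} update starting from $\Vector{F}_{k}\Vector{O}_{k}$ in which (i) the sampled residual is multiplied by $\Vector{Y}_{\star}$ rather than $\Vector{Y}_{k}$, and (ii) the balancing gradient $\tfrac{1}{2}\Vector{X}_{\star}\Vector{O}_{k}^{\top}(\Vector{X}_{k}^{\top}\Vector{X}_{k}-\Vector{Y}_{k}^{\top}\Vector{Y}_{k})\Vector{O}_{k}$ is explicitly included. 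With that choice, the population-part contraction $\eta_{1}$ is \emph{identical} to an expression already established for BGD in \cite{chen2020nonconvex}, so no new contraction lemma is needed. Your $\Vector{E}_{k}$, by contrast, is population VGD, and the cross-term $-s\,\Vector{X}_{k}\Vector{O}_{k}\,\Matrix{\Delta}_{\Vector{Y}}^{\top}\,\Vector{Y}_{k}\Vector{O}_{k}$ does not contract under the operator norm with a rotation-only alignment. You acknowledge that near-balance is needed, but you would have to actually prove a new population contraction lemma for VGD in $\opnorm{\cdot}$ rather than cite one; the paper sidesteps this by building the balance gradient into the auxiliary and then paying for that insertion via $\theta_{2}=s\opnorm{\nabla f_{\text{diff}}(\Vector{F}_{k})\Vector{O}_{k}}$, which Lemma~\ref{lemma-small-balance} controls. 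So Lemma~\ref{lemma-small-balance} enters both arguments, but in a different place and with a cleaner accounting in the paper.

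\textbf{The perturbation bound reaches for the wrong tool.} You propose to estimate $\opnorm{(p^{-1}\mathcal{P}_{\Omega}-\mathcal{I})(\Vector{X}_{k}\Vector{Y}_{k}^{\top}-\Vector{M}_{\star})}$ via Lemmas~\ref{lemma-rip-subspace} and~\ref{lemma-rip-all-space}. Those are bilinear-form (hence Frobenius-norm) estimates: Lemma~\ref{lemma-rip-subspace} requires \emph{both} arguments to lie in the tangent space, and Lemma~\ref{lemma-rip-all-space} bounds inner products against low-rank factorized test matrices. Neither yields an operator-norm bound for a fixed matrix in the tangent space, because the dual element $uv^{\top}$ realizing the spectral norm is a rank-one matrix outside the tangent space. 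The paper instead bounds the perturbation term $\eta_{2}$ via $\opnorm{(\mathcal{P}_{\Omega}-\mathcal{I})(\Vector{1}\Vector{1}^{\top})}\lesssim\sqrt{d_{1}p}$ (from \cite[Lemma 3.2]{raghunandan2010matrix}) combined with $\ell_{2,\infty}$ incoherence bounds on $\Matrix{\Delta}_{\Vector{X}}^{k}$, $\Matrix{\Delta}_{\Vector{Y}}^{k}$, $\Vector{X}_{\star}$, $\Vector{Y}_{\star}$, which is the standard and correct device for spectral-norm control of the sampling perturbation. You should use that route; otherwise you either lose a factor of $\sqrt{r}$ or have to expand $uv^{\top}$ into tangent and orthogonal pieces by hand.

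\textbf{The alignment transfer costs $5\kappa$, not a universal constant.} Passing from $\Vector{O}_{k}$ to $\Vector{O}_{k+1}$ via Lemma~\ref{lemma-alignment-bound} costs a factor of $5\kappa$, not an absolute constant. The paper absorbs this by showing that each of $\theta_{1},\theta_{2}$ carries an extra $\kappa^{-1}$ in its bound (e.g.\ $\theta_{1}\le\frac{s\sigma_{\min}}{20\kappa}\opnorm{\Matrix{\Delta}^{k}}$), so $5\kappa(\theta_{1}+\theta_{2})$ still fits inside $\beta$. If you dismiss the alignment cost as $O(1)$, your budget $\opnorm{\Vector{E}_{k}}+\opnorm{\Vector{G}_{k}}\le\beta$ will be blown by $\kappa$ after the transfer, so you do need the $\kappa^{-1}$ slack somewhere in the $\Vector{G}_{k}$ estimate (or in whatever plays the role of $\theta_{1}+\theta_{2}$ in your decomposition).
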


\begin{proof}
To prove this result, we introduce an auxiliary sequence $\widetilde{\Vector{F}}_{k+1} = [\widetilde{\Vector{X}}_{k+1}^\top, \widetilde{\Vector{Y}}_{k+1}^\top]^\top$, defined as
\begin{align}
\widetilde{\Vector{X}}_{k+1} &= \Vector{X}_{k}\Vector{O}_{k} - s\Big(p^{-1}\mathcal{P}_{\Omega}\left(\Vector{X}_{k}\Vector{Y}_{k}^\top- \Vector{M}_{\star}\right)\Vector{Y}_{\star} \nonumber \\
&\quad + \frac{1}{2}\Vector{X}_{\star}\Vector{O}_{k}^\top\left(\Vector{X}_{k}^\top\Vector{X}_{k} - \Vector{Y}_{k}^\top\Vector{Y}_{k}\right)\Vector{O}_{k}\Big), \\
\widetilde{\Vector{Y}}_{k+1} &= \Vector{Y}_{k}\Vector{O}_{k} - s\Big(p^{-1}\mathcal{P}_{\Omega}\left(\Vector{X}_{k}\Vector{Y}_{k}^\top- \Vector{M}_{\star}\right)^\top\Vector{X}_{\star} \nonumber\\
&\quad + \frac{1}{2}\Vector{Y}_{\star}\Vector{O}_{k}^\top\left(\Vector{Y}_{k}^\top\Vector{Y}_{k} - \Vector{X}_{k}^\top\Vector{X}_{k}\right)\Vector{O}_{k}\Big).
\end{align}

By the triangle inequality, we have
\begin{multline}
\opnorm{\Vector{F}_{k+1}\Vector{O}_{k+1} - \Vector{F}_{\star}} \leq \opnorm{\widetilde{\Vector{F}}_{k+1} - \Vector{F}_{\star}} \\ + \opnorm{\Vector{F}_{k+1}\Vector{O}_{k+1} - \widetilde{\Vector{F}}_{k+1}}.
\end{multline}

We first give the upper bound $\opnorm{\widetilde{\Vector{F}}_{k+1} - \Vector{F}_{\star}}$. From the definition of $\widetilde{\Vector{F}}_{k+1}$, we have \eqref{neq:lemma15_long}. For convenience, define
\begin{table*}
\begin{multline} \label{neq:lemma15_long}
\opnorm{\widetilde{\Vector{F}}_{k+1} - \Vector{F}_{\star}}
\leq \underbrace{\opnorm{\begin{bmatrix}
\Vector{X}_{k}\Vector{O}_{k} - \Vector{X}_{\star} - s\left(\left(\Vector{X}_{k}\Vector{Y}_{k}^\top- \Vector{M}_{\star}\right)\Vector{Y}_{\star} + \frac{1}{2}\Vector{X}_{\star}\Vector{O}_{k}^\top\left(\Vector{X}_{k}^\top\Vector{X}_{k} - \Vector{Y}_{k}^\top\Vector{Y}_{k}\right)\Vector{O}_{k}\right) \\
\Vector{Y}_{k}\Vector{O}_{k} - \Vector{Y}_{\star} - s\left(\left(\Vector{X}_{k}\Vector{Y}_{k}^\top- \Vector{M}_{\star}\right)^\top\Vector{X}_{\star} + \frac{1}{2}\Vector{Y}_{\star}\Vector{O}_{k}^\top\left(\Vector{Y}_{k}^\top\Vector{Y}_{k} - \Vector{X}_{k}^\top\Vector{X}_{k}\right)\Vector{O}_{k}\right)
\end{bmatrix}}}_{\eta_{1}}  \\
+ \underbrace{s \opnorm{\begin{bmatrix}
\left(p^{-1}\mathcal{P}_{\Omega} - \mathcal{I}\right)\left(\Vector{X}_{k}\Vector{Y}_{k}^\top- \Vector{M}_{\star}\right)\Vector{Y}_{\star} \\
\left(p^{-1}\mathcal{P}_{\Omega} - \mathcal{I}\right)\left(\Vector{X}_{k}\Vector{Y}_{k}^\top- \Vector{M}_{\star}\right)^\top\Vector{X}_{\star}
\end{bmatrix}}}_{\eta_{2}}.
\end{multline}
\end{table*}
\vspace{-0.2cm}

\begin{align}
\Matrix{\Delta}^k_{\Vector{X}} &= \Vector{X}_{k}\Vector{O}_{k} - \Vector{X}_{\star}, \quad \Matrix{\Delta}^k_{\Vector{Y}} = \Vector{Y}_{k}\Vector{O}_{k} - \Vector{Y}_{\star},\\
\Matrix{\Delta}^k &= \Vector{F}_{k}\Vector{O}_{k} - \Vector{F}_{\star}.
\end{align}

The form of $\eta_1$ is identical to $\alpha_2$ in \cite[Section 4.2]{chen2020nonconvex}. Therefore, from Hypothesis \ref{hypothesis-induction}\ref{induction-op-norm} and the assumptions on $p$ and $s$ in \eqref{eqn-assumption-p-s}, we have
\begin{multline}
\eta_{1} \leq \left(1 - s\sigma_{\min}\right)\opnorm{\Matrix{\Delta}^k} \\
+ 4s\opnorm{\Matrix{\Delta}^k}^{2}\max\left\{\opnorm{\Vector{X}_{\star}}, \opnorm{\Vector{Y}_{\star}}\right\} \\
\leq \left(1 - \frac{3s\sigma_{\min}}{4} \right)\opnorm{\Matrix{\Delta}^k},
\end{multline}
where the last inequality uses $\opnorm{\Vector{X}_{\star}}=\opnorm{\Vector{Y}_{\star}}=\sqrt{\sigma_{\max}}$.

The form of $\eta_2$  is identical to $\alpha_1$ in \cite[Section 4.2]{chen2020nonconvex}, so we have
\begin{align}
\eta_{2} &\leq \frac{2s}{p}\opnorm{\Vector{X}_{\star}}\opnorm{(\mathcal{P}_{\Omega} - \mathcal{I})(\Vector{1}\Vector{1}^\top)}\left(\twoinfnorm{\Matrix{\Delta}^k_{\Vector{X}}}\twoinfnorm{\Matrix{\Delta}^k_{\Vector{Y}}} \right. \nonumber \\
&\quad \left. + \twoinfnorm{\Matrix{\Delta}^k_{\Vector{X}}}\twoinfnorm{\Vector{Y}_{\star}} + \twoinfnorm{\Vector{X}_{\star}}\twoinfnorm{\Matrix{\Delta}^k_{\Vector{Y}}}\right).
\end{align}

From \cite[Lemma 3.2]{raghunandan2010matrix}, when $\mathrm{E}_{\mathrm{RIP}}$ holds, we have
\begin{align}
\opnorm{(\mathcal{P}_{\Omega} - \mathcal{I})(\Vector{1}\Vector{1}^\top)} \lesssim \sqrt{d_{1}p}.
\end{align}

From Lemma \ref{lemma-incoherence} and the assumptions on $p$ and $s$ in \eqref{eqn-assumption-p-s}, we obtain
\begin{align}
\twoinfnorm{\Matrix{\Delta}^k_{\Vector{X}}} &\leq \frac{\sqrt{\sigma_{\max}}}{10^{2}\kappa\sqrt{d_{1}}}, \quad \twoinfnorm{\Matrix{\Delta}^k_{\Vector{Y}}} \leq \frac{\sqrt{\sigma_{\max}}}{10^{2}\kappa\sqrt{d_{2}}}.
\end{align}

Combining these with the $\mu$-incoherence of $\Vector{X}_{\star}$ and $\Vector{Y}_{\star}$, when $p$ satisfies assumption \eqref{eqn-assumption-p-s}, we have
\begin{align}
\eta_{2} &\leq \frac{s\sigma_{\min}}{4}\sqrt{\frac{\mu r\sigma_{\max}}{pd_{2}}}.
\end{align}

On the other hand,
\begin{align}
\opnorm{\Vector{F}_{k+1}\Vector{O}_{k+1} - \widetilde{\Vector{F}}_{k+1}} &= \opnorm{\Vector{F}_{k+1}\Vector{O}_{k}\Vector{O}_{k}^\top\Vector{O}_{k+1} - \widetilde{\Vector{F}}_{k+1}}.
\end{align}

According to \cite[Assertion 4]{chen2020noisy}, the optimal rotation matrix between $\widetilde{\Vector{F}}_{k+1}$ and $\Vector{F}_{\star}$ is the identity matrix $\Vector{I}_{r}$, and we have
\begin{align}
\opnorm{\widetilde{\Vector{F}}_{k+1} - \Vector{F}_{\star}}\opnorm{\Vector{F}_{\star}} &\leq \left(1 - \frac{3s\sigma_{\min}}{4}\right)\opnorm{\Matrix{\Delta}^k_{\Vector{X}}}\sqrt{2\sigma_{\min}} \nonumber \\
&\leq \sigma_{\min} = \frac{\sigma_{\min}^{2}(\Vector{F}_{\star})}{2}.
\end{align}

Note that the optimal rotation matrix between $\Vector{F}_{k+1}\Vector{O}_{k}$ and $\Vector{F}_{\star}$ is $\Vector{O}_{k}^\top\Vector{O}_{k+1}$. By the triangle inequality, we have
\begin{align}
\opnorm{\Vector{F}_{k+1}\Vector{O}_{k} - \widetilde{\Vector{F}}_{k+1}}
\leq & \underbrace{\opnorm{\Vector{F}_{k}\Vector{O}_{k} - s\nabla \fbal(\Vector{F}_{k})\Vector{O}_{k} - \widetilde{\Vector{F}}_{k+1}}}_{\theta_{1}} \nonumber\\
&+ \underbrace{s\opnorm{\nabla f_{\text{diff}}(\Vector{F}_{k})\Vector{O}_{k}}}_{\theta_{2}}.
\end{align}

From \cite[(4.17)]{chen2020nonconvex} and \cite[Lemma 3.2]{raghunandan2010matrix}, we obtain
\begin{align}
\theta_{1} \lesssim& s\sqrt{\frac{d_{1}}{p}}\Big(\twoinfnorm{\Matrix{\Delta}^k_{\Vector{X}}}\twoinfnorm{\Matrix{\Delta}^k_{\Vector{Y}}} + \twoinfnorm{\Matrix{\Delta}^k_{\Vector{X}}}\twoinfnorm{\Vector{Y}_{\star}} \nonumber\\ &+ \twoinfnorm{\Vector{X}_{\star}}\twoinfnorm{\Matrix{\Delta}^k_{\Vector{Y}}}\Big)\opnorm{\Matrix{\Delta}^k} \nonumber \\
&+ s\Big(\opnorm{\Matrix{\Delta}^k_{\Vector{X}}}\opnorm{\Matrix{\Delta}^k_{\Vector{Y}}} + \opnorm{\Matrix{\Delta}^k_{\Vector{X}}}\opnorm{\Vector{Y}_{\star}}  \nonumber \\
&+\opnorm{\Vector{X}_{\star}}\opnorm{\Matrix{\Delta}^k_{\Vector{Y}}} + \opnorm{\Vector{X}_{\star}}\opnorm{\Matrix{\Delta}^k_{\Vector{X}}}  \nonumber \\
&+ \opnorm{\Vector{Y}_{\star}}\opnorm{\Matrix{\Delta}^k_{\Vector{Y}}} + \opnorm{\Matrix{\Delta}^k_{\Vector{X}}}^{2} + \opnorm{\Matrix{\Delta}^k_{\Vector{Y}}}^{2}\Big)\opnorm{\Matrix{\Delta}^k}.
\end{align}

From Hypothesis \ref{hypothesis-induction}\ref{induction-op-norm}, Lemma \ref{lemma-incoherence}, and the assumption on $p$ in \eqref{eqn-assumption-p-s}, we have
\begin{align}
\theta_{1} &\leq \frac{s\sigma_{\min}}{20\kappa}\opnorm{\Matrix{\Delta}^k}.
\end{align}

Combining Lemma \ref{lemma-initial-property}  and Eq. \eqref{eqn-assumption-p-s}, there exists a sufficiently small $ c_0 > 0 $ such that  
\begin{equation}
\label{eqn-f-norm-initial}
\fnorm{\Vector{F}_0 \Vector{O}_0 - \Vector{F}_\star} \leq \sqrt{r} \opnorm{\Vector{F}_0 \Vector{O}_0 - \Vector{F}_\star} \leq \frac{c_0 \sqrt{\sigma_{\max}}}{\kappa^2}.
\end{equation}  

Using inequality \eqref{eqn-f-norm-initial}, Lemma \ref{lemma-small-balance}, and the assumption on $s$ in \eqref{eqn-assumption-p-s}, we obtain for $\theta_2$
\begin{align}
\theta_{2} &\leq \frac{s\sigma_{\min}}{20\kappa}s\sigma_{\max}\sqrt{\sigma_{\max}} \leq \frac{s\sigma_{\min}}{20}s\sigma_{\min}\sqrt{\sigma_{\max}}.
\end{align}

Therefore, we have
\begin{align}
\opnorm{\Vector{F}_{k+1}\Vector{O}_{k} - \Vector{F}_{\star}}\opnorm{\Vector{F}_{\star}} &\leq \left(\theta_{1} + \theta_{2}\right)\sqrt{2\sigma_{\min}} \leq \frac{\sigma_{\min}^{2}(\Vector{F}_{\star})}{4}.
\end{align}

Finally, from Lemma \ref{lemma-alignment-bound}, we conclude
\begin{align}
\opnorm{\Vector{F}_{k+1}\Vector{O}_{k+1} - \Vector{F}_{\star}} \leq & \eta_{1} + \eta_{2} + 5\kappa\left(\theta_{1} + \theta_{2}\right) \nonumber \\
\leq & \left(\sqrt{\frac{\mu r\kappa^{6}\log d_{1}}{pd_{2}}} + s\sigma_{\min}\right)\sqrt{\sigma_{\max}},
\end{align}
which completes the proof of the lemma.
\end{proof}

\subsection{Inductive Step for Hypothesis \ref{hypothesis-induction}\ref{induction-l-2-norm}}

Lemma \ref{lemma-l-2-norm-loo} proves that Hypothesis \ref{hypothesis-induction}\ref{induction-l-2-norm} still holds at the $(k+1)$-th step.

\begin{lemma}
\label{lemma-l-2-norm-loo}
If Hypothesis \ref{hypothesis-induction} and the assumption \eqref{eqn-assumption-p-s} hold, then the following conclusions hold: For $1 \leq l \leq d_{1} + d_{2}$, we have
\begin{multline}
\twonorm{\left(\Vector{F}_{k+1}^{(l)}\Vector{O}_{k+1}^{(l)} - \Vector{F}_{\star}\right)_{l,\cdot}} \\ \leq \left(10^{3}s\kappa^{2}\sigma_{\min} + 50\sqrt{\frac{\mu^{2}r^{2}\kappa^{14}\log d_{1}}{pd_{2}}}\right)\sqrt{\frac{\mu r\sigma_{\max}}{d_{2}}};
\end{multline}
\end{lemma}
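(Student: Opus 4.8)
The plan is to use the defining feature of the leave-one-out sequence: for $1\le l\le d_{1}$ the $l$-th row of $\mathcal{P}_{\Omega_{-l,\cdot}}(\cdot)$ is identically zero, so the $l$-th row of the update \eqref{eqn-gd-mc-loo1} is a \emph{noiseless} gradient step of the fully observed balanced factorization objective restricted to that single row. Since $(\Vector{M}_{\star})_{l,\cdot}=(\Vector{X}_{\star})_{l,\cdot}\Vector{Y}_{\star}^\top$, it reads
\begin{multline*}
\big(\Vector{X}_{k+1}^{(l)}\big)_{l,\cdot}=\big(\Vector{X}_{k}^{(l)}\big)_{l,\cdot}-s\big(\big(\Vector{X}_{k}^{(l)}\big)_{l,\cdot}(\Vector{Y}_{k}^{(l)})^\top-(\Vector{M}_{\star})_{l,\cdot}\big)\Vector{Y}_{k}^{(l)}\\ -\tfrac{s}{2}\big(\Vector{X}_{k}^{(l)}\big)_{l,\cdot}\big((\Vector{X}_{k}^{(l)})^\top\Vector{X}_{k}^{(l)}-(\Vector{Y}_{k}^{(l)})^\top\Vector{Y}_{k}^{(l)}\big),
\end{multline*}
and the case $d_{1}+1\le l\le d_{1}+d_{2}$ is handled identically through the column leave-one-out problem. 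To pass to the optimal rotation, I first replace $\Vector{O}_{k+1}^{(l)}$ by $\Vector{O}_{k}^{(l)}$ via the triangle inequality, at the cost of $\twonorm{\big(\Vector{X}_{k+1}^{(l)}\big)_{l,\cdot}}\,\opnorm{\Vector{O}_{k+1}^{(l)}-\Vector{O}_{k}^{(l)}}$; the first factor is $O(\sqrt{\mu r\sigma_{\max}/d_{1}})$ (propagate Hypothesis \ref{hypothesis-induction}\ref{induction-l-2-norm} one step with Lemma \ref{coro-bounded-2-inf-norm}), and the second is $O(s\sigma_{\max}/\sqrt{\sigma_{\min}})$ because $\Vector{F}_{k+1}^{(l)}$ is one small gradient step from $\Vector{F}_{k}^{(l)}$ and the optimal-rotation map is Lipschitz near the well-conditioned point $\Vector{F}_{\star}$ (Lemma \ref{lemma-alignment-bound}), so under \eqref{eqn-assumption-p-s} this mismatch is of lower order.

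\textbf{Row-level contraction and coupling control.} Next I right-multiply the displayed recursion by $\Vector{O}_{k}^{(l)}$, subtract $(\Vector{X}_{\star})_{l,\cdot}$, and substitute $\Vector{Y}_{k}^{(l)}\Vector{O}_{k}^{(l)}=\Vector{Y}_{\star}+\Vector{\Delta}$ and $\big(\Vector{X}_{k}^{(l)}\Vector{O}_{k}^{(l)}-\Vector{X}_{\star}\big)_{l,\cdot}=\Vector{\delta}$. Expanding exactly as in the estimate for $\eta_{1}$ ($=\alpha_{2}$ in \cite[Section 4.2]{chen2020nonconvex}) but restricted to one row, the leading term is $\Vector{\delta}(\Vector{I}-s\Vector{\Sigma}_{\star})$, contributing the contraction factor $(1-s\sigma_{\min})$ on $\twonorm{\Vector{\delta}}$, while the remaining contributions are controlled by: (i) $s\,\twonorm{(\Vector{X}_{\star})_{l,\cdot}}\opnorm{\Vector{Y}_{\star}}\opnorm{\Vector{\Delta}}$, with $\twonorm{(\Vector{X}_{\star})_{l,\cdot}}\le\sqrt{\mu r\sigma_{\max}/d_{1}}$ by incoherence and $\opnorm{\Vector{\Delta}}\le\opnorm{\Vector{F}_{k}^{(l)}\Vector{O}_{k}^{(l)}-\Vector{F}_{\star}}$ bounded via Hypothesis \ref{hypothesis-induction}\ref{induction-op-norm},\ref{induction-origin-loo-relation} and Lemma \ref{lemma-relation-o-and-r}; (ii) higher-order terms in $\opnorm{\Vector{\Delta}}$ and $\twonorm{\Vector{\delta}}$; and (iii) the balancing contribution $\tfrac{s}{2}\twonorm{\big(\Vector{X}_{k}^{(l)}\big)_{l,\cdot}}\fnorm{(\Vector{X}_{k}^{(l)})^\top\Vector{X}_{k}^{(l)}-(\Vector{Y}_{k}^{(l)})^\top\Vector{Y}_{k}^{(l)}}$, where the balancing norm is transferred from $\Vector{F}_{k}$ to $\Vector{F}_{k}^{(l)}$ using Lemma \ref{lemma-small-balance} together with the rotation-invariant perturbation $\lesssim\opnorm{\Vector{F}_{k}\Vector{O}_{k}-\Vector{F}_{k}^{(l)}\Vector{R}_{k}^{(l)}}\sqrt{\sigma_{\max}}$ supplied by Hypothesis \ref{hypothesis-induction}\ref{induction-origin-loo-relation}.

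\textbf{Closing the induction.} Collecting the above yields a one-step inequality $a_{k+1}\le(1-s\sigma_{\min})a_{k}+A$ with $a_{k}:=\twonorm{(\Vector{F}_{k}^{(l)}\Vector{O}_{k}^{(l)}-\Vector{F}_{\star})_{l,\cdot}}$ and $A\lesssim s\sigma_{\min}\sqrt{\mu r\sigma_{\max}/d_{2}}\big(s\kappa^{2}\sigma_{\min}+\sqrt{\mu^{2}r^{2}\kappa^{14}\log d_{1}/(pd_{2})}\big)$; a contraction-with-floor argument then propagates the invariant $a_{k}\le\big(10^{3}s\kappa^{2}\sigma_{\min}+50\sqrt{\mu^{2}r^{2}\kappa^{14}\log d_{1}/(pd_{2})}\big)\sqrt{\mu r\sigma_{\max}/d_{2}}$ from the (very accurate) spectral initial point, so in particular it holds at step $k+1$, which is the claim. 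Choosing $C_{3},C_{4}$ in \eqref{eqn-assumption-p-s} large (the generous exponents $\kappa^{14},\kappa^{16}$ leave ample room) is exactly what forces $A$ to be small enough relative to $s\sigma_{\min}$ times the floor. All inequalities are deterministic on $\mathrm{E}_{\mathrm{RIP}}\cap\mathrm{E}_{k}$, so the stated probability is inherited.

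\textbf{Main obstacle.} The delicate part is the combined contraction/coupling analysis: isolating a \emph{bona fide} $(1-\Theta(s\sigma_{\min}))$ contraction at the level of a single row---where the $\Vector{X}$/$\Vector{Y}$ symmetrization available in the global operator-norm analysis cannot be used---while keeping every cross term (most critically the balancing contribution and the rotation drift) within the margin the inductive constants allow. Transferring the small-balancing bound (Lemma \ref{lemma-small-balance}) and the alignment bound (Lemma \ref{lemma-relation-o-and-r}) from $\Vector{F}_{k}$ to $\Vector{F}_{k}^{(l)}$ each costs a factor of $\kappa$, and checking that these losses---helped by the $\sqrt{d_{2}/d_{1}}\le1$ gains from the rectangular geometry---are absorbed by the $\kappa^{14}$ budget in \eqref{eqn-assumption-p-s} is where the bookkeeping is heaviest.
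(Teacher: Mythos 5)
Your overall architecture matches the paper's: isolate the noiseless $l$-th row update, decompose into a contraction piece at rotation $\Vector{O}_{k}^{(l)}$, a rotation-drift piece from passing to $\Vector{O}_{k+1}^{(l)}$, and a balancing piece, then close a contraction-with-floor recursion. The contraction and balancing pieces are handled essentially as the paper does (the paper's $a_{1}$ and $a_{3}$). However, the rotation-drift piece (the paper's $a_{2}$) is where your argument has a real gap.

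You estimate $\opnorm{\Vector{O}_{k+1}^{(l)} - \Vector{O}_{k}^{(l)}}$ by a crude single-step Lipschitz argument and claim a bound of order $O(s\sigma_{\max}/\sqrt{\sigma_{\min}})$ (note this is not even dimensionless; presumably you intended $O(s\kappa\sigma_{\max})$ or similar). Whatever the intended constant, the estimate is only \emph{singly small}: proportional to $s$ but not to the current residual $\delta_k \sim s\sigma_{\min}+\sqrt{\mu r\kappa^{6}\log d_{1}/(pd_{2})}$. This is not enough. In the contraction-with-floor recursion $a_{k+1}\le(1-\Theta(s\sigma_{\min}))a_{k}+A$, you need the perturbation $A$ to satisfy $A\lesssim s\sigma_{\min}\cdot(\text{floor})$. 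With the singly-small estimate, $a_{2}\approx s\kappa\sigma_{\max}\sqrt{\mu r\sigma_{\max}/d_{1}}$, and dividing by $s\sigma_{\min}$ produces an implied floor of order $\kappa^{2}\sqrt{\mu r\sigma_{\max}/d_{1}}$, with no $s\kappa^{2}\sigma_{\min}$ or $\sqrt{\mu^{2}r^{2}\kappa^{14}\log d_{1}/(pd_{2})}$ factor. This is not the claimed bound and the hypotheses \eqref{eqn-assumption-p-s} do not make it of lower order: the ratio of the crude $a_2$ to $s\sigma_{\min}$ times the true floor is roughly $\kappa\sqrt{d_{2}/d_{1}}\big/\big(10^{3}s\kappa^{2}\sigma_{\min}+50\sqrt{\mu^{2}r^{2}\kappa^{14}\log d_{1}/(pd_{2})}\big)$, which blows up as $s\to 0$ and $p\to 1$.

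What the paper does differently is the key to the argument: instead of comparing $\Vector{F}_{k+1}^{(l)}$ to $\Vector{F}_{k}^{(l)}$, it compares $\Vector{F}_{k+1}^{(l)}\Vector{O}_{k}^{(l)}$ to the auxiliary point $\widetilde{\Vector{F}}_{k+1}$ (defined in the proof of Lemma \ref{lemma-op-norm}), whose optimal rotation to $\Vector{F}_{\star}$ is exactly $\Vector{I}_{r}$. Then the rotation drift is controlled by \cite[(125)]{chen2020noisy} via
\begin{equation*}
\opnorm{\big(\Vector{O}_{k}^{(l)}\big)^{-1}\Vector{O}_{k+1}^{(l)}-\Vector{I}_{r}}\le\frac{2}{\sigma_{\min}}\opnorm{\Vector{F}_{k+1}^{(l)}\Vector{O}_{k}^{(l)}-\widetilde{\Vector{F}}_{k+1}}\opnorm{\Vector{F}_{\star}},
\end{equation*}
and the right-hand difference is shown to be \emph{doubly small}: it equals $s$ times a product involving the residual operator $\Vector{D}^{(l)}$ (which is itself of order $\sigma_{\max}\delta_k$) and $\Matrix{\Delta}^{(l)}$, plus $\tfrac{s}{2}\fnorm{\Vector{B}^{(l)}}\opnorm{\Vector{F}_{\star}}$ with $\fnorm{\Vector{B}^{(l)}}$ controlled via Lemma \ref{lemma-small-balance}. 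This yields a rotation-drift bound of order $s\sigma_{\max}\cdot\big(s\kappa\sigma_{\min}+\sqrt{\mu^{2}r^{2}\kappa^{12}\log d_{1}/(pd_{2})}\big)$, and only with the extra small factor does $a_{2}$ absorb into $s\sigma_{\min}$ times the claimed floor. To fix your proof you would need to replace the bare gradient-norm estimate $O(\sigma_{\max}^{3/2})$ with one proportional to the residual and to the balancing norm, which is effectively equivalent to the paper's auxiliary-sequence trick.
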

\begin{proof}
It suffices to prove the case for $1 \leq l \leq d_{1}$, as the case for $d_{1}+1 \leq l \leq d_{1} + d_{2}$ is entirely analogous.
According to the leave-one-out iteration rule \eqref{eqn-gd-mc-loo1}, we have 
\eqref{eq:long1}.
\begin{table*}
\begin{align} \label{eq:long1}
\left(\Vector{F}_{k+1}^{(l)}\Vector{O}_{k+1}^{(l)} - \Vector{F}_{\star}\right)_{l,\cdot} = &  \left(\Vector{X}_{k+1}^{(l)}\Vector{O}_{k+1}^{(l)} - \Vector{X}_{\star}\right)_{l,\cdot} \nonumber\\
= & \left(\Vector{X}_{k}^{(l)}\right)_{l,\cdot}\Vector{O}_{k+1}^{(l)} - \left(\Vector{X}_{\star}\right)_{l,\cdot} - s\left(\Vector{X}_{k}^{(l)}\left(\Vector{Y}_{k}^{(l)}\right)^\top- \Vector{M}_{\star}\right)_{l,\cdot}\Vector{O}_{k+1}^{(l)} 
- \frac{s}{2}\left(\Vector{X}_{k}^{(l)}\right)_{l,\cdot}\left(\left(\Vector{X}_{k}^{(l)}\right)^\top\Vector{X}_{k}^{(l)} - \left(\Vector{Y}_{k}^{(l)}\right)^\top\Vector{Y}_{k}^{(l)}\right)\Vector{O}_{k+1}^{(l)} \nonumber\\
= & \underbrace{\left(\Vector{X}_{k}^{(l)}\right)_{l,\cdot}\Vector{O}_{k}^{(l)} - \left(\Vector{X}_{\star}\right)_{l,\cdot} - s\left(\Vector{X}_{k}^{(l)}\left(\Vector{Y}_{k}^{(l)}\right)^\top- \Vector{M}_{\star}\right)_{l,\cdot}\Vector{O}_{k}^{(l)}}_{a_{1}} \nonumber\\
&+ \underbrace{\left(\left(\Vector{X}_{k}^{(l)}\right)_{l,\cdot}\Vector{O}_{k}^{(l)} - s\left(\Vector{X}_{k}^{(l)}\left(\Vector{Y}_{k}^{(l)}\right)^\top- \Vector{M}_{\star}\right)_{l,\cdot}\Vector{O}_{k}^{(l)}\right)\left(\left(\Vector{O}_{k}^{(l)}\right)^{-1}\Vector{O}_{k+1}^{(l)} - \Vector{I}_{r}\right)}_{a_{2}} \nonumber \\
& - \underbrace{\frac{s}{2}\left(\Vector{X}_{k}^{(l)}\right)_{l,\cdot}\left(\left(\Vector{X}_{k}^{(l)}\right)^\top\Vector{X}_{k}^{(l)} - \left(\Vector{Y}_{k}^{(l)}\right)^\top\Vector{Y}_{k}^{(l)}\right)\Vector{O}_{k+1}^{(l)}}_{a_{3}}.
\end{align}
\end{table*}

For convenience, let
\begin{align}
\overline{\Vector{X}}_k^{(l)} &= \Vector{X}_{k}^{(l)}\Vector{O}_{k}^{(l)}, \quad \overline{\Vector{Y}}_k^{(l)} = \Vector{Y}_{k}^{(l)}\Vector{O}_{k}^{(l)}, \\
\Matrix{\Delta}_{\Vector{X}}^{k,(l)} &= \overline{\Vector{X}}_k^{(l)} - \Vector{X}_{\star}, \quad
\Matrix{\Delta}_{\Vector{Y}}^{k,(l)} = \overline{\Vector{Y}}_k^{(l)} - \Vector{Y}_{\star}.
\end{align}
Then $a_{1}$ can be rewritten as
\begin{align*}
a_{1} = & \left(\Matrix{\Delta}_{\Vector{X}}^{k,(l)}\right)_{l,\cdot} \\
&- s\left(\left(\Matrix{\Delta}_{\Vector{X}}^{k,(l)}\right)_{l,\cdot}\left(\overline{\Vector{Y}}_k^{(l)}\right)^\top+ \left(\Vector{X}_{\star}\right)_{l,\cdot}\left(\Matrix{\Delta}_{\Vector{X}}^{k,(l)}\right)^\top\right)\overline{\Vector{Y}}_k^{(l)} \\
= & \left(\Matrix{\Delta}_{\Vector{X}}^{k,(l)}\right)_{l,\cdot}\left(\Vector{I}_{r} - s\left(\overline{\Vector{Y}}_k^{(l)}\right)^\top\left(\overline{\Vector{Y}}_k^{(l)}\right)\right) \nonumber\\
&- s\left(\Vector{X}_{\star}\right)_{l,\cdot}\left(\Matrix{\Delta}_{\Vector{X}}^{k,(l)}\right)^\top\overline{\Vector{Y}}_k^{(l)}.
\end{align*}
Thus, we have
\begin{multline}
\twonorm{a_{1}} \leq \opnorm{\Vector{I}_{r} - s\left(\overline{\Vector{Y}}_k^{(l)}\right)^\top\left(\overline{\Vector{Y}}_k^{(l)}\right)}\twonorm{\left(\Matrix{\Delta}_{\Vector{X}}^{k,(l)}\right)_{l,\cdot}} \\+ s\opnorm{\Matrix{\Delta}_{\Vector{X}}^{k,(l)}}\opnorm{\overline{\Vector{Y}}_k^{(l)}}\twonorm{\left(\Vector{X}_{\star}\right)_{l,\cdot}}.
\end{multline}
By Hypothesis \ref{hypothesis-induction}\ref{induction-op-norm}, \ref{induction-origin-loo-relation} and Lemma \ref{lemma-relation-o-and-r}, we have
\begin{align}
&\opnorm{\overline{\Vector{Y}}_k^{(l)} - \Vector{Y}_{\star}} \nonumber \\
\leq & \opnorm{\overline{\Vector{Y}}_k^{(l)} - \Vector{Y}_{k}\Vector{O}_{k}} + \opnorm{\Vector{Y}_{k}\Vector{O}_{k} - \Vector{Y}_{\star}} \nonumber\\
\leq & \fnorm{\Vector{F}^{(l)} - \Vector{F}_{k}\Vector{O}_{k}} + \opnorm{\Vector{Y}_{k}\Vector{O}_{k} - \Vector{Y}_{\star}} \nonumber\\
\leq & 5\kappa\fnorm{\Vector{F}_{k}^{(l)}\Vector{R}_{k}^{(l)} - \Vector{F}_{k}\Vector{O}_{k}} + \opnorm{\Vector{F}_{k}\Vector{O}_{k} - \Vector{F}_{\star}} \nonumber\\
\leq & \left(6s\sigma_{\min} + 2\sqrt{\frac{\mu^{2}r^{2}\kappa^{10}\log d_{1}}{pd_{2}}}\right)\sqrt{\sigma_{\max}}. \label{eqn-loo-op-norm-y-diff}
\end{align}
Therefore we obtain
\begin{equation}
\label{eqn-loo-op-norm-y}
\frac{9\sqrt{\sigma_{\min}}}{10} \leq \sigma_{\min}\left(\overline{\Vector{Y}}_k^{(l)}\right) \leq \sigma_{\max}\left(\overline{\Vector{Y}}_k^{(l)}\right) \leq 2\sqrt{\sigma_{\max}}.
\end{equation}
Similarly, we have
\begin{align}
& \opnorm{\overline{\Vector{X}}_k^{(l)} - \Vector{X}_{\star}} \leq \left(6s\sigma_{\min} + 2\sqrt{\frac{\mu^{2}r^{2}\kappa^{10}\log d_{1}}{pd_{2}}}\right)\sqrt{\sigma_{\max}}, \nonumber \\
& \frac{9\sqrt{\sigma_{\min}}}{10} \leq \sigma_{\min}\left(\overline{\Vector{X}}_k^{(l)}\right) \leq \sigma_{\max}\left(\overline{\Vector{X}}_k^{(l)}\right) \leq 2\sqrt{\sigma_{\max}}. \label{eqn-loo-op-norm-x}
\end{align}
Based on inequalities \eqref{eqn-loo-op-norm-y} and \eqref{eqn-loo-op-norm-x}, we obtain
\begin{multline}
\twonorm{a_{1}} \leq \left(1 - \frac{81s\sigma_{\min}}{10^{2}}\right)\twonorm{\left(\Matrix{\Delta}_{\Vector{X}}^{k,(l)}\right)_{l,\cdot}} \\
+ \frac{s\sigma_{\min}}{10}\left(120s\kappa\sigma_{\min} + 40\sqrt{\frac{\mu^{2}r^{2}\kappa^{10}\log d_{1}}{pd_{2}}}\right)\sqrt{\frac{\mu r\sigma_{\max}}{d_{1}}}. \label{eqn-a-1}
\end{multline}

On the other hand, for $a_{2}$, we have
\begin{multline}
a_{2} \leq \opnorm{\left(\Vector{O}_{k}^{(l)}\right)^{-1}\Vector{O}_{k+1}^{(l)} - \Vector{I}_{r}}\left(\twonorm{a_{1}} + \twonorm{\left(\Vector{X}_{\star}\right)_{l,\cdot}}\right).
\end{multline}
Consider the auxiliary sequence $\widetilde{\Vector{F}}_{k+1}$ defined in the proof of Lemma \ref{lemma-op-norm}. Then, according to \cite[(125)]{chen2020noisy}, we have
\begin{multline}
\opnorm{\left(\Vector{O}_{k}^{(l)}\right)^{-1}\Vector{O}_{k+1}^{(l)} - \Vector{I}_{r}} \\ \leq \frac{2}{\sigma_{\min}}\opnorm{\Vector{F}_{k+1}^{(l)}\Vector{O}_{k}^{(l)} - \widetilde{\Vector{F}}_{k+1}}\opnorm{\Vector{F}_{\star}}.
\end{multline}
From their respective iteration schemes, we can compute
\begin{multline}
\Vector{F}_{k+1}^{(l)}\Vector{O}_{k}^{(l)} - \widetilde{\Vector{F}}_{k+1}  = s\begin{bmatrix}
\Vector{D}^{(l)} & 0 \\
0 & \left(\Vector{D}^{(l)}\right)^\top
\end{bmatrix}\begin{bmatrix}
\Matrix{\Delta}_{\Vector{X}}^{k,(l)} \\ \Matrix{\Delta}_{\Vector{Y}}^{k,(l)}
\end{bmatrix} \\ + \frac{s}{2}\begin{bmatrix}
\Vector{X}_{\star} \\ \Vector{Y}_{\star}
\end{bmatrix}\left(\Vector{O}_{k}^{(l)}\right)^\top\Vector{B}^{(l)}\Vector{O}_{k}^{(l)},
\end{multline}
where
\begin{align*}
\Vector{D}^{(l)} = & -\left(p^{-1}\mathcal{P}_{\Omega_{-l,\cdot}} + \mathcal{P}_{l,\cdot}\right)\left(\Vector{X}\left(\Vector{Y}^{(l)}\right)^\top- \Vector{M}_{\star}\right), \\
\Vector{B}^{(l)} = & \left(\overline{\Vector{X}}_k^{(l)}\right)^\top\overline{\Vector{X}}_k^{(l)} - \left(\Vector{Y}^{(l)}\right)^\top\Vector{Y}^{(l)}.
\end{align*}
Thus, we have
\begin{multline} \label{eq:lemma16_total}
\opnorm{\Vector{F}_{k+1}^{(l)}\Vector{O}_{k}^{(l)} - \widetilde{\Vector{F}}_{k+1}} \\ \leq s\opnorm{\Vector{D}^{(l)}}\opnorm{\Matrix{\Delta}^{(l)}} + \frac{s}{2}\fnorm{\Vector{B}^{(l)}}\opnorm{\Vector{F}_{\star}}.
\end{multline}
From the discussion in \cite[D.6]{chen2020noisy}, we have
\begin{multline}
\opnorm{\Vector{D}^{(l)}} \lesssim \sqrt{\frac{d_{1}}{p}}\twoinfnorm{\Vector{F}_{k}^{(l)}\Vector{O}_{k}^{(l)} - \Vector{F}_{\star}}\twoinfnorm{\Vector{F}_{\star}} \\+ \opnorm{\Vector{F}_{k}^{(l)}\Vector{O}_{k}^{(l)} - \Vector{F}_{\star}}\opnorm{\Vector{F}_{\star}}.
\end{multline}
By Hypothesis \ref{hypothesis-induction}\ref{induction-op-norm}, \ref{induction-origin-loo-relation} and Lemma \ref{lemma-incoherence}, we have
\begin{align*}
&\twoinfnorm{\Vector{F}_{k}^{(l)}\Vector{O}_{k}^{(l)} - \Vector{F}_{\star}} \\
\leq & \twoinfnorm{\Vector{F}_{k}^{(l)}\Vector{O}_{k}^{(l)} - \Vector{F}_{k}\Vector{O}_{k}} + \twoinfnorm{\Vector{F}_{k}\Vector{O}_{k} - \Vector{F}_{\star}} \\
\leq & \fnorm{\Vector{F}_{k}^{(l)}\Vector{O}_{k}^{(l)} - \Vector{F}_{k}\Vector{O}_{k}} + \twoinfnorm{\Vector{F}_{k}\Vector{O}_{k} - \Vector{F}_{\star}} \\
\leq & 5\kappa\fnorm{\Vector{F}_{k}^{(l)}\Vector{R}_{k}^{(l)} - \Vector{F}_{k}\Vector{O}_{k}} + \twoinfnorm{\Vector{F}_{k}\Vector{O}_{k} - \Vector{F}_{\star}} \\
\leq & \sqrt{\frac{\sigma_{\max}}{d_{1}}}, 
\end{align*}
and
\begin{align*}
&\opnorm{\Vector{F}_{k}^{(l)}\Vector{O}_{k}^{(l)} - \Vector{F}_{\star}} \\\leq & 5\kappa\fnorm{\Vector{F}_{k}^{(l)}\Vector{R}_{k}^{(l)} - \Vector{F}_{k}\Vector{O}_{k}} + \opnorm{\Vector{F}_{k}\Vector{O}_{k} - \Vector{F}_{\star}} \\
\leq & \left(6s\sigma_{\min} + 2\sqrt{\frac{\mu^{2}r^{2}\kappa^{10}\log d_{1}}{pd_{2}}}\right)\sqrt{\sigma_{\max}}.
\end{align*}
Therefore, for $\opnorm{\Vector{D}^{(l)}}$, we have
\begin{equation}
\label{eqn-opnorm-d-l}
\opnorm{\Vector{D}^{(l)}} \lesssim \left(12s\sigma_{\min} + 5\sqrt{\frac{\mu^{2}r^{2}\kappa^{10}\log d_{1}}{pd_{2}}}\right)\sigma_{\max}.
\end{equation}
On the other hand, by the triangle inequality, $\fnorm{\Vector{B}^{(l)}}$ can be rewritten as
\begin{align*}
&\fnorm{\Vector{B}^{(l)}} \nonumber
\\= & \fnorm{\left(\overline{\Vector{X}}_k^{(l)}\Vector{R}_{t}^{(l)}\right)^\top\overline{\Vector{X}}_k^{(l)}\Vector{R}_{t}^{(l)} - \left(\Vector{Y}^{(l)}\Vector{R}_{t}^{(l)}\right)^\top\Vector{Y}^{(l)}\Vector{R}_{t}^{(l)}} \\
\leq & \fnorm{\left(\Vector{X}_{k}\Vector{O}_{k}\right)^\top\Vector{X}_{k}\Vector{O}_{k} - \left(\Vector{Y}_{k}\Vector{O}_{k}\right)^\top\Vector{Y}_{k}\Vector{O}_{k}} \\
& + \fnorm{\left(\Vector{X}_{k}^{(l)}\Vector{R}_{t}^{(l)}\right)^\top\Vector{X}_{k}^{(l)}\Vector{R}_{t}^{(l)} - \left(\Vector{X}_{k}\Vector{O}_{k}\right)^\top\Vector{X}_{k}\Vector{O}_{k}} \\
& +  \fnorm{\left(\Vector{Y}_{k}^{(l)}\Vector{R}_{t}^{(l)}\right)^\top\Vector{Y}_{k}^{(l)}\Vector{R}_{t}^{(l)} - \left(\Vector{Y}_{k}\Vector{O}_{k}\right)^\top\Vector{Y}_{k}\Vector{O}_{k}}.
\end{align*}
By Hypothesis \ref{hypothesis-induction}\ref{induction-origin-loo-relation}, we have
\begin{align*}
& \fnorm{\left(\Vector{X}_{k}^{(l)}\Vector{R}_{t}^{(l)}\right)^\top\Vector{X}_{k}^{(l)}\Vector{R}_{t}^{(l)} - \left(\Vector{X}_{k}\Vector{O}_{k}\right)^\top\Vector{X}_{k}\Vector{O}_{k}} \\
\leq & \left(\opnorm{\Vector{X}_{k}^{(l)}\Vector{R}_{t}^{(l)}} + \opnorm{\Vector{X}_{k}\Vector{O}_{k}}\right)\fnorm{\Vector{X}_{k}^{(l)}\Vector{R}_{t}^{(l)} - \Vector{X}_{k}\Vector{O}_{k}} \\
\leq & 4\left(\frac{s\sigma_{\min}}{\kappa} + \sqrt{\frac{\mu^{2}r^{2}\kappa^{10}\log d_{1}}{pd_{2}^{2}}}\right)\sigma_{\max}.
\end{align*}
This estimate also holds for 
\begin{equation}
\fnorm{\left(\Vector{Y}_{k}^{(l)}\Vector{R}_{t}^{(l)}\right)^\top\Vector{Y}_{k}^{(l)}\Vector{R}_{t}^{(l)} - \left(\Vector{Y}_{k}\Vector{O}_{k}\right)^\top\Vector{Y}_{k}\Vector{O}_{k}}.    
\end{equation}
Together with Lemma \ref{lemma-small-balance}, we obtain
\begin{equation}
\label{eqn-fnorm-b-l}
\fnorm{\Vector{B}^{(l)}}
\leq \frac{s\sigma_{\min}^{2}}{10^{2}\kappa} + 4\left(\frac{s\sigma_{\min}}{\kappa} + \sqrt{\frac{\mu^{2}r^{2}\kappa^{10}\log d_{1}}{pd_{2}^{2}}}\right)\sigma_{\max}.
\end{equation}
Combining the assumptions on $p$ and $s$ in \eqref{eqn-assumption-p-s} with inequalities \eqref{eq:lemma16_total}, \eqref{eqn-opnorm-d-l}, \eqref{eqn-fnorm-b-l}, we have
\begin{align*}
& \opnorm{\Vector{F}_{k+1}^{(l)}\Vector{O}_{k}^{(l)} - \widetilde{\Vector{F}}_{k+1}} \\
\leq & s\left(12s\sigma_{\min} + 5\sqrt{\frac{\mu^{2}r^{2}\kappa^{10}\log d_{1}}{pd_{2}}}\right)\sigma_{\max} \nonumber \\
&\cdot \left(s\sigma_{\min} + \sqrt{\frac{\mu r}{pd_{2}}}\right)\sqrt{\sigma_{\max}} \\
& + \frac{s\sqrt{\sigma_{\max}}}{\sqrt{2}}\left(\frac{s\sigma_{\min}^{2}}{10^{2}\kappa} + 4\left(\frac{s\sigma_{\min}}{\kappa} + \sqrt{\frac{\mu^{2}r^{2}\kappa^{10}\log d_{1}}{pd_{2}^{2}}}\right)\sigma_{\max}\right) \\
\leq & \frac{s\sigma_{\min}}{5}\left(120s\kappa\sigma_{\min} + 50\sqrt{\frac{\mu^{2}r^{2}\kappa^{12}\log d_{1}}{pd_{2}}}\right)\sqrt{\sigma_{\max}}.
\end{align*}
Thus, for $a_{2}$, we have the following upper bound
\begin{align}
\twonorm{a_{2}} \leq & \frac{2\sqrt{2\sigma_{\max}}}{\sigma_{\min}}\opnorm{\Vector{F}_{k+1}^{(l)}\Vector{O}_{k}^{(l)} - \widetilde{\Vector{F}}_{k+1}}\nonumber \\
&\cdot \left(\twonorm{a_{1}} + \twonorm{\left(\Vector{X}_{\star}\right)_{l,\cdot}}\right) \nonumber\\
\leq & \frac{s\sigma_{\min}}{5}\left(10^{3}s\kappa^{2}\sigma_{\min} + 50\sqrt{\frac{\mu^{2}r^{2}\kappa^{14}\log d_{1}}{pd_{2}}}\right)\nonumber \\
&\cdot\sqrt{\frac{\mu r\sigma_{\max}}{d_{1}}}. \label{eqn-a-2}
\end{align}

Finally, note that 
\begin{align*}
\twonorm{\left(\Vector{X}_{k}^{(l)}\right)_{l,\cdot}} &\leq \twonorm{\left(\Vector{X}_{k}^{(l)}\Vector{O}_{k}^{(l)} - \Vector{X}_{\star}\right)_{l,\cdot}} + \twonorm{\left(\Vector{X}_{\star}\right)_{l,\cdot}} \nonumber\\
&\leq 2\sqrt{\frac{\mu r\sigma_{\max}}{d_{2}}},
\end{align*}
so we obtain
\begin{align}
\twonorm{a_{3}} \leq & \frac{s}{2}\fnorm{\Vector{B}^{(l)}}\twonorm{\left(\Vector{X}_{k}^{(l)}\right)_{l,\cdot}} \nonumber \\
\leq & \frac{s}{2}\left(\frac{s\sigma_{\min}^{2}}{10^{2}\kappa} + 4\left(\frac{s\sigma_{\min}}{\kappa} + \sqrt{\frac{\mu^{2}r^{2}\kappa^{10}\log d_{1}}{pd_{2}^{2}}}\right)\sigma_{\max}\right)\nonumber \\
&\cdot 2\sqrt{\frac{\mu r\sigma_{\max}}{d_{1}}} \nonumber \\
\leq & \frac{s\sigma_{\min}}{10}\left(10^{3}s\kappa^{2}\sigma_{\min} + 50\sqrt{\frac{\mu^{2}r^{2}\kappa^{14}\log d_{1}}{pd_{2}}}\right) \nonumber \\
&\cdot\sqrt{\frac{\mu r\sigma_{\max}}{d_{2}}}. \label{eqn-a-3}
\end{align}
Combining inequalities \eqref{eqn-a-1}, \eqref{eqn-a-2}, \eqref{eqn-a-3} and $d_{1} \geq d_{2}$ yields the conclusion.

\end{proof}

\subsection{Inductive Step for Hypothesis \ref{hypothesis-induction}\ref{induction-origin-loo-relation}}

Lemma \ref{lemma-loo-perturbation} proves that the induction Hypothesis \ref{hypothesis-induction}\ref{induction-origin-loo-relation} remains valid at the $(k+1)$-th iteration.

\begin{lemma}
\label{lemma-loo-perturbation}
If Hypothesis \ref{hypothesis-induction} holds, and the assumptions on $p$ and $s$ in \eqref{eqn-assumption-p-s} are satisfied, then the following inequality holds with probability at least $1 - \left(d_{1} + d_{2}\right)^{-10}$
\begin{multline}
\fnorm{\Vector{F}_{k+1}\Vector{O}_{k+1} - \Vector{F}_{k+1}^{(l)}\Vector{R}_{k+1}^{(l)}} \\\leq \left(\frac{s\sigma_{\min}}{\kappa} + \sqrt{\frac{\mu^{2}r^{2}\kappa^{10}\log d_{1}}{pd_{2}^{2}}}\right)\sqrt{\sigma_{\max}}~.
\end{multline}
\end{lemma}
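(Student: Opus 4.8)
Since the Frobenius norm is invariant under right multiplication by an orthogonal matrix and $\Vector{R}_{k+1}^{(l)}$ minimizes $\fnorm{\Vector{F}_{k+1}\Vector{O}_{k+1} - \Vector{F}_{k+1}^{(l)}\Vector{O}}$ over $\Vector{O}\in\mathcal{O}_r$, a change of variables gives $\fnorm{\Vector{F}_{k+1}\Vector{O}_{k+1} - \Vector{F}_{k+1}^{(l)}\Vector{R}_{k+1}^{(l)}} = \min_{\Vector{O}\in\mathcal{O}_r}\fnorm{\Vector{F}_{k+1}\Vector{O}_{k} - \Vector{F}_{k+1}^{(l)}\Vector{O}} \le \fnorm{\Vector{F}_{k+1}\Vector{O}_{k} - \Vector{F}_{k+1}^{(l)}\Vector{R}_{k}^{(l)}}$, so it suffices to bound the last quantity. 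Write $\widehat{\Vector{F}}_k := \Vector{F}_k\Vector{O}_k$ and $\widehat{\Vector{F}}_k^{(l)} := \Vector{F}_k^{(l)}\Vector{R}_k^{(l)}$. Because $f$ and $\fbal^{(l)}$ are both invariant under $(\Vector{X},\Vector{Y})\mapsto(\Vector{X}\Vector{O},\Vector{Y}\Vector{O})$, their gradients are equivariant, so the two iterations read $\Vector{F}_{k+1}\Vector{O}_k = \widehat{\Vector{F}}_k - s\nabla f(\widehat{\Vector{F}}_k)$ and $\Vector{F}_{k+1}^{(l)}\Vector{R}_k^{(l)} = \widehat{\Vector{F}}_k^{(l)} - s\nabla\fbal^{(l)}(\widehat{\Vector{F}}_k^{(l)})$. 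Using $\nabla f = \nabla\fbal - \nabla f_{\mathrm{diff}}$ (with $\nabla f_{\mathrm{diff}}$ the balancing part of $\nabla\fbal$), I would split their difference into a contraction term $(\widehat{\Vector{F}}_k - \widehat{\Vector{F}}_k^{(l)}) - s\big(\nabla\fbal(\widehat{\Vector{F}}_k) - \nabla\fbal(\widehat{\Vector{F}}_k^{(l)})\big)$, a leave-one-out perturbation $-s\big(\nabla\fbal(\widehat{\Vector{F}}_k^{(l)}) - \nabla\fbal^{(l)}(\widehat{\Vector{F}}_k^{(l)})\big)$, and a negligible balancing correction $s\nabla f_{\mathrm{diff}}(\widehat{\Vector{F}}_k)$.

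\textbf{Contraction and balancing correction.} Both $\widehat{\Vector{F}}_k$ and $\widehat{\Vector{F}}_k^{(l)}$ are $O(s\sigma_{\min}+\sqrt{\cdots})\sqrt{\sigma_{\max}}$-close to $\Vector{F}_{\star}$ in operator norm (Hypothesis~\ref{hypothesis-induction}\ref{induction-op-norm},\ref{induction-origin-loo-relation} and Lemma~\ref{lemma-relation-o-and-r}) and incoherent (Lemma~\ref{lemma-incoherence} for $\widehat{\Vector{F}}_k$; the triangle inequality with Hypothesis~\ref{hypothesis-induction}\ref{induction-origin-loo-relation} for $\widehat{\Vector{F}}_k^{(l)}$), so on the event $\mathrm{E}_{\mathrm{RIP}}$ the empirical $\fbal$-gradient map obeys the same restricted strong-convexity/smoothness estimate used in the linear-convergence step, giving $\fnorm{(\widehat{\Vector{F}}_k - \widehat{\Vector{F}}_k^{(l)}) - s(\nabla\fbal(\widehat{\Vector{F}}_k) - \nabla\fbal(\widehat{\Vector{F}}_k^{(l)}))} \le (1 - c_1 s\sigma_{\min})\fnorm{\widehat{\Vector{F}}_k - \widehat{\Vector{F}}_k^{(l)}}$ for an absolute constant $c_1>0$. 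The balancing correction is dominated: $\fnorm{\nabla f_{\mathrm{diff}}(\widehat{\Vector{F}}_k)} \lesssim \opnorm{\Vector{F}_k}\,\fnorm{\Vector{X}_k^\top\Vector{X}_k - \Vector{Y}_k^\top\Vector{Y}_k} \lesssim \sqrt{\sigma_{\max}}\cdot s\sigma_{\min}^2/\kappa$ by Lemma~\ref{lemma-small-balance} and Hypothesis~\ref{hypothesis-induction}\ref{induction-op-norm}, so $s\fnorm{\nabla f_{\mathrm{diff}}(\widehat{\Vector{F}}_k)}$ is absorbed into $c_1 s\sigma_{\min}\cdot\frac{s\sigma_{\min}}{\kappa}\sqrt{\sigma_{\max}}$.

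\textbf{Leave-one-out perturbation (the crux).} The objectives $\fbal$ and $\fbal^{(l)}$ differ only through the data-fidelity operator, $\frac1p\mathcal{P}_\Omega$ versus $\frac1p\mathcal{P}_{\Omega_{-l,\cdot}}+\mathcal{P}_{l,\cdot}$, whose difference $\mathcal{G}_l := \frac1p(\mathcal{P}_\Omega-\mathcal{P}_{\Omega_{-l,\cdot}})-\mathcal{P}_{l,\cdot}$ is supported on row $l$; hence the balancing parts cancel and $\nabla\fbal(\widehat{\Vector{F}}_k^{(l)})-\nabla\fbal^{(l)}(\widehat{\Vector{F}}_k^{(l)})$ equals $\mathcal{G}_l\big(\Vector{X}_k^{(l)}(\Vector{Y}_k^{(l)})^\top-\Vector{M}_{\star}\big)\Vector{Y}_k^{(l)}$ in the $\Vector{X}$-block and the transpose of that matrix times $\Vector{X}_k^{(l)}$ in the $\Vector{Y}$-block, where the aligning rotation cancels in the reconstruction $\widehat{\Vector{X}}_k^{(l)}(\widehat{\Vector{Y}}_k^{(l)})^\top=\Vector{X}_k^{(l)}(\Vector{Y}_k^{(l)})^\top$ and is removed from the trailing factor by orthogonal invariance of $\fnorm{\cdot}$. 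The key point is that $(\Vector{X}_k^{(l)},\Vector{Y}_k^{(l)})$ and the residual $\Vector{X}_k^{(l)}(\Vector{Y}_k^{(l)})^\top-\Vector{M}_{\star}$ are statistically independent of $\{\mathds{1}_{(l,j)\in\Omega}\}_j$; conditioning on the remaining randomness, the $l$-th row of $\mathcal{G}_l(\cdots)\Vector{Y}_k^{(l)}$ is a sum over $j$ of independent mean-zero vectors, and a Bernstein inequality bounds its $\ell_2$ norm, hence both blocks' Frobenius norms, with probability at least $1-(d_1+d_2)^{-10}$. The Bernstein variance/span parameters are controlled by $\twonorm{(\Vector{X}_k^{(l)}(\Vector{Y}_k^{(l)})^\top-\Vector{M}_{\star})_{l,\cdot}}$ and $\twoinfnorm{\Vector{Y}_k^{(l)}}$, which in turn follow from Hypothesis~\ref{hypothesis-induction}\ref{induction-l-2-norm} (the $\ell_2$ bound on row $l$ of $\Vector{F}_k^{(l)}\Vector{O}_k^{(l)}-\Vector{F}_{\star}$, together with the incoherence of $\overline{\Vector{Y}}_k^{(l)}$) and the $\mu$-incoherence of $\Vector{X}_{\star},\Vector{Y}_{\star}$; after invoking the conditions on $p$ and $s$ in \eqref{eqn-assumption-p-s} one obtains $s\fnorm{\text{perturbation}} \lesssim c_1 s\sigma_{\min}\sqrt{\mu^2 r^2\kappa^{10}\log d_1/(pd_2^2)}\sqrt{\sigma_{\max}}$.

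\textbf{Closing the induction and the main obstacle.} Set $C^{\star} := \big(\frac{s\sigma_{\min}}{\kappa}+\sqrt{\mu^2 r^2\kappa^{10}\log d_1/(pd_2^2)}\big)\sqrt{\sigma_{\max}}$. Combining the three contributions and bounding $\fnorm{\widehat{\Vector{F}}_k-\widehat{\Vector{F}}_k^{(l)}}=\fnorm{\Vector{F}_k\Vector{O}_k-\Vector{F}_k^{(l)}\Vector{R}_k^{(l)}}\le C^{\star}$ by Hypothesis~\ref{hypothesis-induction}\ref{induction-origin-loo-relation} yields
\[
\fnorm{\Vector{F}_{k+1}\Vector{O}_{k+1}-\Vector{F}_{k+1}^{(l)}\Vector{R}_{k+1}^{(l)}} \le (1-c_1 s\sigma_{\min})\,C^{\star}+c_1 s\sigma_{\min}\,C^{\star}=C^{\star},
\]
which is the asserted bound. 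I expect the leave-one-out perturbation estimate to be the main obstacle: one must carefully propagate the row-$l$ residual bound (which itself relies on Hypothesis~\ref{hypothesis-induction}\ref{induction-l-2-norm} and on the incoherence of the leave-one-out iterate) through the Bernstein step so as to land exactly the stated power of $\kappa$ and the $d_2^2$ --- rather than $d_2$ --- in the denominator; by contrast, the contraction term is essentially a rerun of the estimate already established for the linear-convergence step.
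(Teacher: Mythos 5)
Your proof follows essentially the same route as the paper: the same reduction to $\fnorm{\Vector{F}_{k+1}\Vector{O}_k - \Vector{F}_{k+1}^{(l)}\Vector{R}_k^{(l)}}$ via the optimality of $\Vector{R}_{k+1}^{(l)}$, the same three-way decomposition into an $\fbal$-contraction term (controlled by the restricted Hessian bounds of Lemma~\ref{lemma-hessian} exactly as in the paper's Newton--Leibniz argument for $\Vector{A}_1$), a balancing correction absorbed by Lemma~\ref{lemma-small-balance} (the paper's $\Vector{A}_2$), and a leave-one-out perturbation supported on row $l$ estimated by a conditional Bernstein bound (the paper's $\Vector{A}_3$, for which the paper simply invokes Assertions~5 and~6 of \cite{chen2020noisy} rather than re-deriving the concentration step). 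Your identification of the Bernstein step as the main obstacle, and of the inputs it needs (Hypothesis~\ref{hypothesis-induction}\ref{induction-l-2-norm} for the row-$l$ residual and the incoherence of the leave-one-out iterate), matches what the paper delegates to that reference, so the proposal is correct and not a genuinely different approach.
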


\begin{proof}
By the definition of $\Vector{R}_{k}^{(l)}$, we have
\begin{align*}
&\fnorm{\Vector{F}_{k+1}\Vector{O}_{k+1} - \Vector{F}_{k+1}^{(l)}\Vector{R}_{k+1}^{(l)}} \nonumber \\
\leq & \fnorm{\Vector{F}_{k+1}\Vector{O}_{k}\Vector{O}_{k}^\top\Vector{O}_{k+1} - \Vector{F}_{k+1}^{(l)}\Vector{R}_{k}^{(l)}\Vector{O}_{k}^\top\Vector{O}_{k+1}} \\
\leq & \fnorm{\Vector{F}_{k+1}\Vector{O}_{k} - \Vector{F}_{k+1}^{(l)}\Vector{R}_{k}^{(l)}}.
\end{align*}
From the iterative formulas of $\Vector{F}_{k+1}$ and $\Vector{F}_{k+1}^{(l)}$, it follows that
\begin{align*}
&\Vector{F}_{k+1}\Vector{O}_{k} - \Vector{F}_{k+1}^{(l)}\Vector{R}_{k}^{(l)} \nonumber\\
= & \left(\Vector{F}_{k} - s\nabla f(\Vector{F}_{k})\right)\Vector{O}_{k} - \left(\Vector{F}_{k}^{(l)} - s\nabla f^{(l)}(\Vector{F}_{k})\right)\Vector{R}_{k}^{(l)} \\
= & \underbrace{\Vector{F}_{k}\Vector{O}_{k} - \Vector{F}_{k}^{(l)}\Vector{R}_{k}^{(l)} - s\left(\nabla \fbal(\Vector{F}_{k}\Vector{O}_{k}) - \nabla \fbal(\Vector{F}_{k}^{(l)}\Vector{R}_{k}^{(l)})\right)}_{\Vector{A}_{1}} \\
& + \underbrace{s\nabla f_{\text{diff}}\left(\Vector{F}_{k}\Vector{O}_{k}\right)}_{\Vector{A}_{2}} - \underbrace{s\left(\nabla \fbal(\Vector{F}_{k}^{(l)}\Vector{R}_{k}^{(l)}) - \nabla \fbal^{(l)}(\Vector{F}_{k}^{(l)}\Vector{R}_{k}^{(l)})\right)}_{\Vector{A}_{3}},
\end{align*}
where the second equality holds because we have $\nabla f(\Vector{F})\Vector{O}=\nabla f(\Vector{F}\Vector{O})$ for any $\Vector{F}\in\mathbb{R}^{(d_{1} + d_{2})\times r}$ and $\Vector{O}\in\mathcal{O}_r$,, and similarly for $\fbal$ and $\fbal^{(l)}$.

By the Newton-Leibniz theorem, we obtain
\begin{align*}
\mathrm{vec}(\Vector{A}_{1}) = & \mathrm{vec}\left(\Vector{F}_{k}\Vector{O}_{k} - \Vector{F}_{k}^{(l)}\Vector{R}_{k}^{(l)}\right) \nonumber\\
& - s\cdot\mathrm{vec}\left(\nabla \fbal(\Vector{F}_{k}\Vector{O}_{k}) - \nabla \fbal(\Vector{F}_{k}^{(l)}\Vector{R}_{k}^{(l)})\right) \\
= & \left(\Vector{I}_{(d_{1}+d_{2})r} - s\int_{0}^{1}\nabla \fbal\left(\Vector{F}(\tau)\right)\mathrm{d}\tau\right) \nonumber\\
&\cdot \mathrm{vec}\left(\Vector{F}_{k}\Vector{O}_{k} - \Vector{F}_{k}^{(l)}\Vector{R}_{k}^{(l)}\right),
\end{align*}
where
\begin{equation}
    \Vector{F}(\tau) = \Vector{F}_{k}^{(l)}\Vector{R}_{k}^{(l)} + \tau\left( \Vector{F}_{k}\Vector{O}_{k} - \Vector{F}_{k}^{(l)}\Vector{R}_{k}^{(l)}\right).
\end{equation}

Let $\Vector{J}=\int_{0}^{1}\nabla \fbal\left(\Vector{F}(\tau)\right)\mathrm{d}\tau$. Then we get
\begin{align*}
\fnorm{\Vector{A}_{1}}^{2} = & \left(\mathrm{vec}\left(\Vector{F}_{k}\Vector{O}_{k} - \Vector{F}_{k}^{(l)}\Vector{R}_{k}^{(l)}\right)\right)^\top\left(\Vector{I}_{(d_{1} + d_{2})r} - s\Vector{J}\right)^{2} \nonumber\\
&\cdot \mathrm{vec}\left(\Vector{F}_{k}\Vector{O}_{k} - \Vector{F}_{k}^{(l)}\Vector{R}_{k}^{(l)}\right) \\
\leq & \left(1 + s^{2}\opnorm{\Vector{J}}^{2}\right)\fnorm{\Vector{F}_{k}\Vector{O}_{k} - \Vector{F}_{k}^{(l)}\Vector{R}_{k}^{(l)}}^{2} \\
& - 2s\left(\mathrm{vec}\left(\Vector{F}_{k}\Vector{O}_{k} - \Vector{F}_{k}^{(l)}\Vector{R}_{k}^{(l)}\right)\right)^\top\nonumber\\
&\cdot \Vector{J}\mathrm{vec}\left(\Vector{F}_{k}\Vector{O}_{k} - \Vector{F}_{k}^{(l)}\Vector{R}_{k}^{(l)}\right).
\end{align*}
Note that by Lemma \ref{lemma-incoherence}, Hypothesis \ref{hypothesis-induction}\ref{induction-origin-loo-relation}, and the conditions on $p$ and $s$ in \eqref{eqn-assumption-p-s}, we have
\begin{align*}
&\twoinfnorm{\Vector{F}(\tau) - \Vector{F}_{\star}} \nonumber\\
\leq & \tau\twoinfnorm{\Vector{F}_{k}\Vector{O}_{k} - \Vector{F}_{\star}} + (1-\tau)\twoinfnorm{\Vector{F}_{k}^{(l)}\Vector{R}_{k}^{(l)} - \Vector{F}_{\star}} \\
\leq & \twoinfnorm{\Vector{F}_{k}\Vector{O}_{k} - \Vector{F}_{\star}} + \fnorm{\Vector{F}_{k}^{(l)}\Vector{R}_{k}^{(l)} - \Vector{F}_{\star}} \\
\leq & \frac{\sqrt{\sigma_{\max}}}{500\kappa\sqrt{d_{1} + d_{2}}}~,
\end{align*}
Thus, $\Vector{F}(\tau)$ and $\Vector{D}_{\Vector{F}}\triangleq\Vector{F}_{k}^{(l)}\Vector{R}_{k}^{(l)} - \Vector{F}_{k}\Vector{O}_{k}$ satisfy the conditions of Lemma \ref{lemma-hessian}. Therefore, $\opnorm{\Vector{J}}\leq 5\sigma_{\max}$, and
\begin{multline*}
\left(\mathrm{vec}\left(\Vector{F}_{k}\Vector{O}_{k} - \Vector{F}_{k}^{(l)}\Vector{R}_{k}^{(l)}\right)\right)^\top\Vector{J}\mathrm{vec}\left(\Vector{F}_{k}\Vector{O}_{k} - \Vector{F}_{k}^{(l)}\Vector{R}_{k}^{(l)}\right) \\ \geq \frac{\sigma_{\min}}{10}\fnorm{\Vector{F}_{k}\Vector{O}_{k} - \Vector{F}_{k}^{(l)}\Vector{R}_{k}^{(l)}}^{2}.
\end{multline*}
Hence, when $s\leq \frac{1}{250\kappa\sigma_{\max}}$, we have
\begin{align}
\fnorm{\Vector{A}_{1}} \leq & \left(1 + 25s^{2}\sigma_{\max}^{2} -\frac{s\sigma_{\min}}{5} \right)\fnorm{\Vector{F}_{k}\Vector{O}_{k} - \Vector{F}_{k}^{(l)}\Vector{R}_{k}^{(l)}}^{2} \nonumber \\
\leq & \left(1 -\frac{s\sigma_{\min}}{10} \right)\fnorm{\Vector{F}_{k}\Vector{O}_{k} - \Vector{F}_{k}^{(l)}\Vector{R}_{k}^{(l)}}^{2}. \label{eqn-fnorm-a-1}
\end{align}
By Lemma \ref{lemma-small-balance} and inequality \eqref{eqn-f-norm-initial}, we obtain
\begin{equation}
\label{eqn-fnorm-a-2}
\fnorm{\Vector{A}_{2}} \leq \frac{s}{2}\sqrt{2\sigma_{\max}}\frac{s\sigma_{\min}^{2}}{10^{2}\kappa} \leq \frac{s\sigma_{\min}}{20}\frac{s\sigma_{\min}\sqrt{\sigma_{\max}}}{\kappa}~.
\end{equation}
Finally, according to \cite[Assertion 5, Assertion 6]{chen2020noisy}, the following inequality holds with probability at least $1 - \left(d_{1} + d_{2}\right)^{-10}$:
\begin{align}
\fnorm{\Vector{A}_{3}} \lesssim & s\sqrt{\frac{\mu^{2}r^{2}\log d_{1}}{pd_{2}}}\twoinfnorm{\Vector{F}_{k}^{(l)}\Vector{R}_{k}^{(l)} - \Vector{F}_{\star}}\sigma_{\max} \nonumber \\
\leq & s\sqrt{\frac{\mu^{2}r^{2}\log d_{1}}{pd_{2}}}\sigma_{\max}\bigg(\twoinfnorm{\Vector{F}_{k}\Vector{O}_{k} - \Vector{F}_{\star}} \nonumber\\
&+ \fnorm{\Vector{F}_{k}\Vector{O}_{k} - \Vector{F}_{k}^{(l)}\Vector{R}_{k}^{(l)}}\bigg) \nonumber \\
\leq & \frac{s\sigma_{\min}}{20}\sqrt{\frac{\mu^{2}r^{2}\sigma_{\max}\log d_{1}}{pd_{2}^{2}}} \nonumber\\
&+ \frac{s\sigma_{\min}}{20}\fnorm{\Vector{F}_{k}\Vector{O}_{k} - \Vector{F}_{k}^{(l)}\Vector{R}_{k}^{(l)}}. \label{eqn-fnorm-a-3}
\end{align}
Combining inequalities \eqref{eqn-fnorm-a-1}, \eqref{eqn-fnorm-a-2}, and \eqref{eqn-fnorm-a-3} yields the desired conclusion.
\end{proof}

\subsection{Inductive Step for Hypothesis \ref{hypothesis-induction}\ref{induction-linear-convergence}}

This subsection analyzes the Hypothesis \ref{hypothesis-induction}\ref{induction-linear-convergence}. It can be easily verified that by taking expectation over the observable index set $\Omega$, we have
\begin{multline*}
\mathbb{E}\left[\frac{1}{2p}\fnorm{\mathcal{P}_{\Omega}\left(\Vector{X}\Vector{Y}^\top- \Vector{M}_{\star}\right)}^{2}\right] = \frac{1}{2}\fnorm{\Vector{X}\Vector{Y}^\top- \Vector{M}_{\star}}^{2}.
\end{multline*}
This indicates that in expectation, the matrix completion problem \eqref{eqn-matrix-completion} reduces to a low-rank matrix factorization problem. The gradient descent iteration for solving this problem is given by
\begin{equation}
\label{eq-mf-gd}
\begin{cases}
\Vector{X}_{k+1} = \Vector{X}_{k} - s\left(\Vector{X}_{k}\Vector{Y}_{k}^\top- \Vector{M}_{\star}\right)\Vector{Y}_{k}~, \\
\Vector{Y}_{k+1} = \Vector{Y}_{k} - s\left(\Vector{X}_{k}\Vector{Y}_{k}^\top- \Vector{M}_{\star}\right)^\top\Vector{X}_{k}~.
\end{cases}
\end{equation}
Lemma \ref{lemma-mf-linear-convergence} establishes the local linear convergence rate of \eqref{eq-mf-gd}.

\begin{lemma}[\cite{ma20121beyond}]
\label{lemma-mf-linear-convergence}
If there exists a sufficiently small $c_{0}>0$ such that the initial point $\Vector{F}_{0}=[\Vector{X}_{0}^\top, \Vector{Y}_{0}^\top]^\top$ satisfies
\begin{equation}
\min_{\Vector{O}\in\mathcal{O}_r}\fnorm{\Vector{F}_{0}\Vector{O} - \Vector{F}_{\star}} \leq c_{0}\frac{1}{\kappa^{3/2}}\sqrt{\sigma_{\min}}~;
\end{equation}
and the optimal alignment matrix $\Vector{Q}_{k}$ between $\Vector{F}_{k}$ and $\Vector{F}_{\star}$ exists with some orthogonal matrix $\widehat{\Vector{O}}\in\mathcal{O}_r$ satisfying
\begin{equation}
\opnorm{\Vector{Q}_{k} - \widehat{\Vector{O}}} \leq \frac{1}{400\sqrt{\kappa}}~;
\end{equation}
then under the step size condition $0<s\leq\frac{1}{24\sigma_{\max}}$, the following inequality holds for $\Vector{F}_{k+1}$
\begin{multline*}
\fnorm{\Vector{X}_{k+1}\Vector{Q}_{k} - \Vector{X}_{\star}}^{2} + \fnorm{\Vector{Y}_{k+1}\Vector{Q}_{k}^{-\top} - \Vector{Y}_{\star}}^{2} \\ \leq \left(1 - \frac{s\sigma_{\min}}{24}\right)\dist{\Vector{F}_{k}}{\Vector{F}_{\star}}.
\end{multline*}
\end{lemma}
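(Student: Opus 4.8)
This is the local linear-convergence statement for regularization-free gradient descent \eqref{eq-mf-gd} on the matrix-factorization objective $\widetilde f(\Vector{X},\Vector{Y})\triangleq\tfrac12\fnorm{\Vector{X}\Vector{Y}^{\top}-\Vector{M}_{\star}}^{2}$, established for asymmetric factorization in \cite{ma20121beyond}; the plan is to carry out that argument and check that the two stated hypotheses are precisely what it consumes. Throughout I would use the $\Vector{Q}_{k}$-aligned iterates $\overline{\Vector{X}}_{k}\triangleq\Vector{X}_{k}\Vector{Q}_{k}$, $\overline{\Vector{Y}}_{k}\triangleq\Vector{Y}_{k}\Vector{Q}_{k}^{-\top}$, which satisfy $\overline{\Vector{X}}_{k}\overline{\Vector{Y}}_{k}^{\top}=\Vector{X}_{k}\Vector{Y}_{k}^{\top}$, the error blocks $\Matrix{\Delta}_{\Vector{X}}\triangleq\overline{\Vector{X}}_{k}-\Vector{X}_{\star}$, $\Matrix{\Delta}_{\Vector{Y}}\triangleq\overline{\Vector{Y}}_{k}-\Vector{Y}_{\star}$, and the residual $\Vector{W}_{k}\triangleq\overline{\Vector{X}}_{k}\overline{\Vector{Y}}_{k}^{\top}-\Vector{M}_{\star}=\Matrix{\Delta}_{\Vector{X}}\overline{\Vector{Y}}_{k}^{\top}+\Vector{X}_{\star}\Matrix{\Delta}_{\Vector{Y}}^{\top}$. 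Since $\Vector{Q}_{k}$ attains the infimum in \eqref{eqn-defn-distance}, $\dist{\Vector{F}_{k}}{\Vector{F}_{\star}}^{2}=\fnorm{\Matrix{\Delta}_{\Vector{X}}}^{2}+\fnorm{\Matrix{\Delta}_{\Vector{Y}}}^{2}=:e_{k}^{2}$; and $e_{k}\le e_{0}\le c_{0}\kappa^{-3/2}\sqrt{\sigma_{\min}}$ (Hypothesis \ref{hypothesis-induction}\ref{induction-linear-convergence}) together with $\opnorm{\Vector{Q}_{k}-\widehat{\Vector{O}}}\le\tfrac{1}{400\sqrt{\kappa}}$ gives the conditioning $\sigma_{i}(\overline{\Vector{X}}_{k}),\sigma_{i}(\overline{\Vector{Y}}_{k})\in[\tfrac{9}{10}\sqrt{\sigma_{\min}},\,2\sqrt{\sigma_{\max}}]$.

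Substituting \eqref{eq-mf-gd} (right-multiplied by $\Vector{Q}_{k}$, resp.\ $\Vector{Q}_{k}^{-\top}$) and rewriting $\Vector{Y}_{k}\Vector{Q}_{k}=\overline{\Vector{Y}}_{k}\Vector{Q}_{k}^{\top}\Vector{Q}_{k}$, $\Vector{X}_{k}\Vector{Q}_{k}^{-\top}=\overline{\Vector{X}}_{k}(\Vector{Q}_{k}^{\top}\Vector{Q}_{k})^{-1}$, one finds that $\Vector{X}_{k+1}\Vector{Q}_{k}$ and $\Vector{Y}_{k+1}\Vector{Q}_{k}^{-\top}$ equal a gradient step of $\widetilde f$ at $(\overline{\Vector{X}}_{k},\overline{\Vector{Y}}_{k})$ plus error terms carrying the factor $\Vector{Q}_{k}^{\top}\Vector{Q}_{k}-\Vector{I}_{r}$. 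Expanding the squared error, applying the identity $\ip{\Vector{W}_{k}\overline{\Vector{Y}}_{k}}{\Matrix{\Delta}_{\Vector{X}}}+\ip{\Vector{W}_{k}^{\top}\overline{\Vector{X}}_{k}}{\Matrix{\Delta}_{\Vector{Y}}}=\fnorm{\Vector{W}_{k}}^{2}+\ip{\Vector{W}_{k}}{\Matrix{\Delta}_{\Vector{X}}\Matrix{\Delta}_{\Vector{Y}}^{\top}}$, and using $\fnorm{\Vector{W}_{k}\overline{\Vector{Y}}_{k}}^{2}+\fnorm{\Vector{W}_{k}^{\top}\overline{\Vector{X}}_{k}}^{2}\le 8\sigma_{\max}\fnorm{\Vector{W}_{k}}^{2}$, I would arrive at
\begin{multline*}
\fnorm{\Vector{X}_{k+1}\Vector{Q}_{k}-\Vector{X}_{\star}}^{2}+\fnorm{\Vector{Y}_{k+1}\Vector{Q}_{k}^{-\top}-\Vector{Y}_{\star}}^{2} \\ \le e_{k}^{2}-\big(2s-8s^{2}\sigma_{\max}\big)\fnorm{\Vector{W}_{k}}^{2}+\mathcal{R},
\end{multline*}
where $\mathcal{R}$ gathers the cubic term $s\ip{\Vector{W}_{k}}{\Matrix{\Delta}_{\Vector{X}}\Matrix{\Delta}_{\Vector{Y}}^{\top}}$ and all terms containing $\Vector{Q}_{k}^{\top}\Vector{Q}_{k}-\Vector{I}_{r}$. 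Since $s\le\tfrac{1}{24\sigma_{\max}}$ forces $2s-8s^{2}\sigma_{\max}\ge s$, the proof reduces to two inequalities: a restricted-strong-convexity/descent bound $\fnorm{\Vector{W}_{k}}^{2}\ge\tfrac{\sigma_{\min}}{12}e_{k}^{2}$, and $|\mathcal{R}|\le\tfrac{s\sigma_{\min}}{24}e_{k}^{2}$. Granting these, the left-hand side is at most $(1-\tfrac{s\sigma_{\min}}{24})e_{k}^{2}$, and since $\dist{\Vector{F}_{k+1}}{\Vector{F}_{\star}}^{2}$ is no larger than it (it uses the possibly suboptimal alignment $\Vector{Q}_{k}$) while $e_{k}^{2}=\dist{\Vector{F}_{k}}{\Vector{F}_{\star}}^{2}$, the lemma follows.

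The main obstacle is the descent bound $\fnorm{\Vector{W}_{k}}^{2}\gtrsim\sigma_{\min}\,e_{k}^{2}$ in the regularization-free regime, together with the attendant control of $\mathcal{R}$; the two are entangled, both being the price of measuring the error through a $\mathrm{GL}(r)$ alignment rather than an orthogonal one. The descent bound is false for arbitrary $\overline{\Vector{X}}_{k},\overline{\Vector{Y}}_{k}$: taking $\overline{\Vector{X}}_{k}=\Vector{X}_{\star}\Vector{G}$, $\overline{\Vector{Y}}_{k}=\Vector{Y}_{\star}\Vector{G}^{-\top}$ with $\Vector{G}\neq\Vector{I}_{r}$ gives $\Vector{W}_{k}=\Matrix{0}$ while $e_{k}>0$. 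It must therefore be extracted from the first-order optimality of $\Vector{Q}_{k}$ over $\mathrm{GL}(r)$: that optimality yields a self-balancing identity linking $\overline{\Vector{X}}_{k}^{\top}\Matrix{\Delta}_{\Vector{X}}$ and $\Matrix{\Delta}_{\Vector{Y}}^{\top}\overline{\Vector{Y}}_{k}$ (up to an $\opnorm{\Vector{Q}_{k}-\widehat{\Vector{O}}}$-sized correction), which excludes exactly those degenerate directions and, with the conditioning of $\overline{\Vector{X}}_{k},\overline{\Vector{Y}}_{k}$, forces $\fnorm{\Vector{W}_{k}}$ to control $e_{k}$; the near-orthogonality hypothesis on $\Vector{Q}_{k}$ combined with the same identity is then what renders $\mathcal{R}$ negligible. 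Carrying this out with the right $\kappa$-bookkeeping — so that the near-orthogonality level $\tfrac{1}{400\sqrt{\kappa}}$ and the initialization radius $c_{0}\kappa^{-3/2}\sqrt{\sigma_{\min}}$ are exactly what the argument needs — is the delicate part and is the content of \cite{ma20121beyond}.
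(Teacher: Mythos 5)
The paper does not prove this lemma: it imports it verbatim from \cite{ma20121beyond}, where it is the local one-step descent result for the $\mathrm{GL}(r)$-aligned vanilla gradient descent iteration on matrix factorization. There is therefore no in-paper proof to compare against, and your sketch is a faithful reconstruction of the argument in that reference. You correctly identify the crux: the descent bound $\fnorm{\Vector{W}_k}^2 \gtrsim \sigma_{\min}\,e_k^2$ fails for arbitrary aligned iterates (your counterexample $\overline{\Vector{X}}_k = \Vector{X}_{\star}\Vector{G}$, $\overline{\Vector{Y}}_k = \Vector{Y}_{\star}\Vector{G}^{-\top}$ is exactly right) and must be extracted from the first-order optimality of $\Vector{Q}_k$ over $\mathrm{GL}(r)$, with the near-orthogonality hypothesis controlling the residual terms carrying $\Vector{Q}_k^{\top}\Vector{Q}_k - \Vector{I}_r$.

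Two small corrections. First, the self-balancing identity from the optimality of $\Vector{Q}_k$ is \emph{exact}, not approximate: differentiating $\Vector{Q}\mapsto\fnorm{\Vector{X}_k\Vector{Q}-\Vector{X}_{\star}}^2+\fnorm{\Vector{Y}_k\Vector{Q}^{-\top}-\Vector{Y}_{\star}}^2$ and setting the gradient to zero gives $\overline{\Vector{X}}_k^{\top}\Matrix{\Delta}_{\Vector{X}} = \Matrix{\Delta}_{\Vector{Y}}^{\top}\overline{\Vector{Y}}_k$ with no correction term, so your hedge about an $\opnorm{\Vector{Q}_k-\widehat{\Vector{O}}}$-sized error in the identity is unnecessary; the near-orthogonality of $\Vector{Q}_k$ is needed only to bound the $\Vector{Q}_k^{\top}\Vector{Q}_k-\Vector{I}_r$ factors in $\mathcal{R}$ and to keep $\overline{\Vector{X}}_k,\overline{\Vector{Y}}_k$ well-conditioned. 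Second, the conclusion as printed in the paper has a typo: the right-hand side should read $\dist{\Vector{F}_k}{\Vector{F}_{\star}}^2$, which your derivation correctly produces and which is what the paper actually uses when it invokes this lemma in the proof of Lemma~\ref{lemma-linear-convergence}.
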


Using Lemma \ref{coro-bounded-2-inf-norm}, we can prove that the hypothesis \ref{induction-linear-convergence} holds at the $(k+1)$-th iteration with high probability.

\begin{lemma}
\label{lemma-linear-convergence}
If Hypothesis \ref{hypothesis-induction} and the assumptions on $p$ and $s$ in \eqref{eqn-assumption-p-s} hold, then $\Vector{F}_{k+1}$ satisfies
\begin{align}
\dist{\Vector{F}_{k+1}}{\Vector{F}_{\star}} \leq \left(1 - \frac{s\sigma_{\min}}{100}\right)^{k+1}\dist{\Vector{F}_{0}}{\Vector{F}_{\star}}.
\end{align}
\end{lemma}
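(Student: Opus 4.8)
The plan is to regard one step of Algorithm \ref{alg:alg1} as one gradient step for the \emph{population} objective --- which, since $\mathbb{E}\big[\frac{1}{2p}\fnorm{\mathcal{P}_{\Omega}(\Vector{X}\Vector{Y}^{\top}-\Vector{M}_{\star})}^{2}\big]=\frac{1}{2}\fnorm{\Vector{X}\Vector{Y}^{\top}-\Vector{M}_{\star}}^{2}$, is the matrix-factorization problem with iteration \eqref{eq-mf-gd} --- plus a perturbation, and then to inherit the contraction factor from Lemma \ref{lemma-mf-linear-convergence} while absorbing the perturbation. Since the induction hypothesis \ref{hypothesis-induction}\ref{induction-linear-convergence} already gives the estimate at step $k$, it suffices to prove the one-step bound $\dist{\Vector{F}_{k+1}}{\Vector{F}_{\star}}\le(1-s\sigma_{\min}/100)\dist{\Vector{F}_{k}}{\Vector{F}_{\star}}$ and chain it. As $\dist{\cdot}{\cdot}$ is an infimum over $\mathrm{GL}(r)$, I would evaluate the objective in \eqref{eqn-defn-distance} at the single matrix $\Vector{Q}=\Vector{Q}_{k}$, the optimal alignment matrix between $\Vector{F}_{k}$ and $\Vector{F}_{\star}$, which exists and obeys $\opnorm{\Vector{Q}_{k}-\Vector{O}_{k}}\le 1/(400\kappa)$ by Hypothesis \ref{hypothesis-induction}\ref{induction-bam-exist}; in particular $\opnorm{\Vector{Q}_{k}},\opnorm{\Vector{Q}_{k}^{-1}}\le 2$. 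Write
\begin{equation*}
\Vector{F}_{k+1}=\widehat{\Vector{F}}_{k+1}-s\Vector{E}_{k},\qquad \widehat{\Vector{F}}_{k+1}:=\begin{bmatrix}\Vector{X}_{k}-s(\Vector{X}_{k}\Vector{Y}_{k}^{\top}-\Vector{M}_{\star})\Vector{Y}_{k}\\ \Vector{Y}_{k}-s(\Vector{X}_{k}\Vector{Y}_{k}^{\top}-\Vector{M}_{\star})^{\top}\Vector{X}_{k}\end{bmatrix},
\end{equation*}
so that $\widehat{\Vector{F}}_{k+1}$ is exactly one step of \eqref{eq-mf-gd} from $\Vector{F}_{k}$ and the blocks of $\Vector{E}_{k}$ are $\Vector{E}_{k}^{\Vector{X}}=(p^{-1}\mathcal{P}_{\Omega}-\mathcal{I})(\Vector{X}_{k}\Vector{Y}_{k}^{\top}-\Vector{M}_{\star})\Vector{Y}_{k}$ and $\Vector{E}_{k}^{\Vector{Y}}=(p^{-1}\mathcal{P}_{\Omega}-\mathcal{I})(\Vector{X}_{k}\Vector{Y}_{k}^{\top}-\Vector{M}_{\star})^{\top}\Vector{X}_{k}$ (no balancing term enters, since Algorithm \ref{alg:alg1} carries none).

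\textbf{Contraction of the population step.} I would first check the hypotheses of Lemma \ref{lemma-mf-linear-convergence}: the initial-point condition $\min_{\Vector{O}\in\mathcal{O}_r}\fnorm{\Vector{F}_{0}\Vector{O}-\Vector{F}_{\star}}\le c_{0}\kappa^{-3/2}\sqrt{\sigma_{\min}}$ follows from \eqref{eqn-f-norm-initial}, the alignment condition holds with $\widehat{\Vector{O}}=\Vector{O}_{k}$ because $\opnorm{\Vector{Q}_{k}-\Vector{O}_{k}}\le 1/(400\kappa)\le 1/(400\sqrt{\kappa})$, and $s\le 1/(24\sigma_{\max})$ is implied by \eqref{eqn-assumption-p-s}. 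Hence Lemma \ref{lemma-mf-linear-convergence} yields
\begin{equation*}
\fnorm{\widehat{\Vector{X}}_{k+1}\Vector{Q}_{k}-\Vector{X}_{\star}}^{2}+\fnorm{\widehat{\Vector{Y}}_{k+1}\Vector{Q}_{k}^{-\top}-\Vector{Y}_{\star}}^{2}\le\Big(1-\frac{s\sigma_{\min}}{24}\Big)\dist{\Vector{F}_{k}}{\Vector{F}_{\star}}^{2}.
\end{equation*}

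\textbf{Bounding the perturbation and combining.} Set $\Matrix{\Delta}_{\Vector{X}}=\Vector{X}_{k}\Vector{Q}_{k}-\Vector{X}_{\star}$, $\Matrix{\Delta}_{\Vector{Y}}=\Vector{Y}_{k}\Vector{Q}_{k}^{-\top}-\Vector{Y}_{\star}$, so that $\Vector{X}_{k}\Vector{Y}_{k}^{\top}-\Vector{M}_{\star}=\Matrix{\Delta}_{\Vector{X}}\Vector{Y}_{\star}^{\top}+\Vector{X}_{\star}\Matrix{\Delta}_{\Vector{Y}}^{\top}+\Matrix{\Delta}_{\Vector{X}}\Matrix{\Delta}_{\Vector{Y}}^{\top}$ with $\fnorm{\Matrix{\Delta}_{\Vector{X}}}^{2}+\fnorm{\Matrix{\Delta}_{\Vector{Y}}}^{2}=\dist{\Vector{F}_{k}}{\Vector{F}_{\star}}^{2}$. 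The first two summands lie in the tangent subspace of Lemma \ref{lemma-rip-subspace}, on which $p^{-1}\mathcal{P}_{\Omega}-\mathcal{I}$ acts as a near-isometry whose constant the sampling lower bound \eqref{eqn-assumption-p-s} drives below $c/\kappa$; the rank-$r$ residual $\Matrix{\Delta}_{\Vector{X}}\Matrix{\Delta}_{\Vector{Y}}^{\top}$ I would handle with the whole-space bound of Lemma \ref{lemma-rip-all-space} and \cite[Lemma 3.2]{raghunandan2010matrix}, paired with the incoherence estimates of Lemma \ref{coro-bounded-2-inf-norm} ($\twoinfnorm{\Vector{X}_{k}}\lesssim\sqrt{\mu r\sigma_{\max}/d_{1}}$, $\twoinfnorm{\Vector{Y}_{k}}\lesssim\sqrt{\mu r\sigma_{\max}/d_{2}}$, $\twoinfnorm{\Matrix{\Delta}_{\Vector{X}}}\lesssim\sqrt{\mu r\sigma_{\max}/d_{1}}$, $\twoinfnorm{\Matrix{\Delta}_{\Vector{Y}}}\lesssim\sqrt{\mu r\sigma_{\max}/d_{2}}$) together with the $\mu$-incoherence of $\Vector{X}_{\star},\Vector{Y}_{\star}$. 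Right-multiplying by $\Vector{Y}_{k}$ (resp.\ $\Vector{X}_{k}$), whose operator norm is at most $2\sqrt{\sigma_{\max}}$, and by $\Vector{Q}_{k}$ (resp.\ $\Vector{Q}_{k}^{-\top}$) should give, on the event $\mathrm{E}_{\mathrm{RIP}}$ and under \eqref{eqn-assumption-p-s},
\begin{equation*}
s\big(\fnorm{\Vector{E}_{k}^{\Vector{X}}\Vector{Q}_{k}}^{2}+\fnorm{\Vector{E}_{k}^{\Vector{Y}}\Vector{Q}_{k}^{-\top}}^{2}\big)^{1/2}\le\frac{s\sigma_{\min}}{200}\,\dist{\Vector{F}_{k}}{\Vector{F}_{\star}}.
\end{equation*}
Then, using $\Vector{X}_{k+1}\Vector{Q}_{k}-\Vector{X}_{\star}=(\widehat{\Vector{X}}_{k+1}\Vector{Q}_{k}-\Vector{X}_{\star})-s\Vector{E}_{k}^{\Vector{X}}\Vector{Q}_{k}$ (and likewise for $\Vector{Y}$), Minkowski's inequality for the Euclidean norm of the pair of Frobenius norms, $\sqrt{1-x}\le 1-x/2$, and $\frac{1}{48}-\frac{1}{200}\ge\frac{1}{100}$,
\begin{equation*}
\dist{\Vector{F}_{k+1}}{\Vector{F}_{\star}}\le\Big(\sqrt{1-\frac{s\sigma_{\min}}{24}}+\frac{s\sigma_{\min}}{200}\Big)\dist{\Vector{F}_{k}}{\Vector{F}_{\star}}\le\Big(1-\frac{s\sigma_{\min}}{100}\Big)\dist{\Vector{F}_{k}}{\Vector{F}_{\star}},
\end{equation*}
and chaining over $k$ finishes the induction.

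\textbf{Main obstacle.} The delicate point is the perturbation bound. Because of the cross term $\Matrix{\Delta}_{\Vector{X}}\Matrix{\Delta}_{\Vector{Y}}^{\top}$, the residual $\Vector{X}_{k}\Vector{Y}_{k}^{\top}-\Vector{M}_{\star}$ is \emph{not} itself in the low-complexity tangent subspace, so Lemma \ref{lemma-rip-subspace} cannot be applied to it directly; one must splice the subspace-RIP bound with the coarser whole-space bound of Lemma \ref{lemma-rip-all-space} and the $\ell_{2,\infty}$ incoherence estimates of Lemma \ref{coro-bounded-2-inf-norm}, all the while carrying the only-approximately-orthogonal alignment matrix $\Vector{Q}_{k}$ through the computation (which is why $\opnorm{\Vector{Q}_{k}-\Vector{O}_{k}}\le 1/(400\kappa)$ is needed). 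Shrinking the resulting numerical constant enough to fit inside the $\frac{1}{24}-\frac{1}{100}$ slack is exactly what pins down the high powers of $\kappa$ and the $\max\{d_{1},d_{2}\}/\min\{d_{1},d_{2}\}^{2}$ scaling of $p$ in \eqref{eqn-assumption-p-s}.
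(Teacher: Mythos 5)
Your proposal is correct and follows the same underlying strategy as the paper's proof: plug in $\Vector{Q}_k$ as the trial matrix in the infimum defining $\dist{\Vector{F}_{k+1}}{\Vector{F}_\star}$, split the update into the population (matrix-factorization) step plus the sampling perturbation $(p^{-1}\mathcal{P}_\Omega-\mathcal{I})(\Vector{X}_k\Vector{Y}_k^\top-\Vector{M}_\star)\cdot(\cdot)$, invoke Lemma~\ref{lemma-mf-linear-convergence} on the population step (checking its hypotheses via Lemma~\ref{lemma-initial-property} and Hypothesis~\ref{hypothesis-induction}\ref{induction-bam-exist}), and bound the perturbation by decomposing $\Vector{X}_k\Vector{Y}_k^\top-\Vector{M}_\star$ into tangent-subspace and cross pieces handled by Lemmas~\ref{lemma-rip-subspace}, \ref{lemma-rip-all-space}, and \ref{coro-bounded-2-inf-norm}.

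The one place you diverge from the paper is the combination step. You apply Minkowski's inequality at the \emph{unsquared} level, writing $\dist{\Vector{F}_{k+1}}{\Vector{F}_\star}\le\sqrt{\alpha_1+\beta_1}+s\sqrt{\alpha_4+\beta_4}$ with $\alpha_1,\beta_1$ the population pieces and $\alpha_4,\beta_4$ the perturbation norms; the paper instead expands the \emph{squared} Frobenius distance, giving $\alpha_1-2s\alpha_2+2s^2\alpha_3+s^2\alpha_4$ (and the $\beta$'s), and bounds the cross terms $\alpha_2,\alpha_3$ directly via the subspace/whole-space RIP lemmas before re-assembling. Your route is a Cauchy--Schwarz relaxation of theirs: the bilinear cross term $\alpha_2$ is replaced by the product of norms $\sqrt{\alpha_1\alpha_4}$, which is coarser but sufficient because the sampling condition in \eqref{eqn-assumption-p-s} drives the perturbation's $\fnorm{\cdot}$ well below $\sigma_{\min}\dist{\Vector{F}_k}{\Vector{F}_\star}/200$; you then recover the $1-s\sigma_{\min}/100$ factor from $1/48-1/200>1/100$ exactly as you sketch. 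The paper's tighter cross-term bookkeeping would matter if one wanted to push $p$ closer to the information-theoretic limit, but inside the slack afforded by the $\kappa^{16}$ factor it buys nothing extra, so your simpler combination is a legitimate and somewhat cleaner route to the same conclusion. Your identification of the main technical obstacle (splicing the subspace RIP for $\Matrix{\Delta}_{\Vector{X}}\Vector{Y}_\star^\top+\Vector{X}_\star\Matrix{\Delta}_{\Vector{Y}}^\top$ with the whole-space bound for $\Matrix{\Delta}_{\Vector{X}}\Matrix{\Delta}_{\Vector{Y}}^\top$ under the non-orthogonal $\Vector{Q}_k$) is exactly where the paper spends its effort, in the $\gamma_1$-type estimates and the $\alpha_{21},\alpha_{22},\alpha_{23}$ split.
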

\begin{proof}
    See Appendix \ref{append:lemma-linear-convergence} of the supplementary material. 
\end{proof}

\subsection{Inductive Step for Hypothesis \ref{hypothesis-induction}\ref{induction-bam-exist}}
Finally, we analyze the existence and spectral properties of the optimal alignment matrix $\Vector{Q}_{k+1}$ for Hypothesis \ref{hypothesis-induction}\ref{induction-bam-exist}.

\begin{lemma}
\label{lemma-alignment-matrix}
If Hypothesis \ref{hypothesis-induction}\ref{induction-bam-exist}, Lemma \ref{lemma-linear-convergence}, and assumptions on $p$, $s$ in \eqref{eqn-assumption-p-s} hold, then the optimal transport matrix $\Vector{Q}_{k+1}$ between $\Vector{F}_{k+1}$ and $\Vector{F}_{\star}$ exists with
\begin{equation}
\label{eqn-alignment-matrix-op-norm}
\opnorm{\Vector{Q}_{k+1} - \Vector{O}_{k+1}} \leq \frac{1}{400\kappa}~.
\end{equation}
\end{lemma}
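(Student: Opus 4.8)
The plan is to mirror the base-case argument that was used to establish Hypothesis~\ref{hypothesis-induction}\ref{induction-bam-exist} at \(k=0\): reduce everything to the general sufficient condition for the existence of the optimal alignment matrix, Lemma~\ref{lemma-bam-exist-sufficient}, applied with \(\Vector{P}=\Vector{O}_{k+1}\). The only new work is to certify that \(\Vector{F}_{k+1}\) is close enough to \(\Vector{F}_{\star}\) for that lemma to fire, and this closeness is already available from Lemma~\ref{lemma-op-norm}, i.e.\ from the inductive step for Hypothesis~\ref{hypothesis-induction}\ref{induction-op-norm} at iteration \(k+1\).

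Concretely, I would first convert the operator-norm bound of Lemma~\ref{lemma-op-norm} into a Frobenius-norm bound. Since \(\Vector{F}_{k+1}\Vector{O}_{k+1}-\Vector{F}_{\star}\) has at most \(r\) nonzero singular values, \(\fnorm{\Vector{F}_{k+1}\Vector{O}_{k+1}-\Vector{F}_{\star}}\le\sqrt{r}\,\opnorm{\Vector{F}_{k+1}\Vector{O}_{k+1}-\Vector{F}_{\star}}\le\sqrt{r}\big(s\sigma_{\min}+\sqrt{\mu r\kappa^{6}\log d_{1}/(pd_{2})}\big)\sqrt{\sigma_{\max}}\). Using \(d_{1}\ge d_{2}\) together with the lower bound on \(p\) and the upper bound on \(s\) in \eqref{eqn-assumption-p-s}, the extra factor of \(\sqrt{r}\) is absorbed (the bound on \(p\) carries a spare \(r^{3}\), the bound on \(s\) a spare \(\sqrt{\mu r}\)), and both terms are \(O(\kappa^{-5})\sqrt{\sigma_{\max}}\); hence for \(C_{3},C_{4}\) large enough we get \(\fnorm{\Vector{F}_{k+1}\Vector{O}_{k+1}-\Vector{F}_{\star}}\le\delta\) where \(\delta:=c_{0}\sqrt{\sigma_{\max}}/\kappa^{2}=c_{0}\sqrt{\sigma_{\min}}/\kappa^{3/2}\) with \(c_{0}\) as small as we please.

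Then I would invoke Lemma~\ref{lemma-bam-exist-sufficient} with \(\Vector{P}=\Vector{O}_{k+1}\) and this \(\delta\). It simultaneously guarantees that the infimum over \(\mathrm{GL}(r)\) defining \(\dist{\Vector{F}_{k+1}}{\Vector{F}_{\star}}\) in \eqref{eqn-defn-distance} is attained --- so the optimal alignment (transport) matrix \(\Vector{Q}_{k+1}\) exists --- and that \(\opnorm{\Vector{Q}_{k+1}-\Vector{O}_{k+1}}\) is controlled by exactly the quantity that, with the choice \(\delta=c_{0}\sqrt{\sigma_{\min}}/\kappa^{3/2}\), was already shown to be at most \(1/(400\kappa)\) in the base case. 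Lemma~\ref{lemma-linear-convergence} and Hypothesis~\ref{hypothesis-induction}\ref{induction-bam-exist} enter only to the extent that they keep the earlier inductive steps --- and therefore the bound of Lemma~\ref{lemma-op-norm} --- in force at step \(k+1\).

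The substantive difficulty of the statement is packaged inside Lemma~\ref{lemma-bam-exist-sufficient}: proving that a minimizing sequence of invertible \(\Vector{Q}\)'s cannot degenerate to a singular limit or escape to infinity is a coercivity argument that leans on \(\Vector{X}_{k+1}\) and \(\Vector{Y}_{k+1}\) being well conditioned (their singular values pinched near \(\sqrt{\sigma_{\min}}\) and \(\sqrt{\sigma_{\max}}\), a consequence of \(\Vector{F}_{k+1}\) being \(\delta\)-close to \(\Vector{F}_{\star}\)). Since that lemma is proved separately, the only obstacle remaining here is the constant bookkeeping of the second paragraph --- making sure the powers of \(\kappa\), \(\mu\) and \(r\) in \eqref{eqn-assumption-p-s} are generous enough to dominate the \(\sqrt{r}\) lost in passing from the operator norm to the Frobenius norm and to push \(\delta\) below the threshold that Lemma~\ref{lemma-bam-exist-sufficient} requires.
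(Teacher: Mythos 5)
Your proposal is correct, but it takes a different route from the paper. The paper proves Lemma~\ref{lemma-alignment-matrix} by a direct perturbation bound: starting from
$\opnorm{\Vector{Q}_{k+1}-\Vector{O}_{k+1}} \leq \sigma_{\min}(\Vector{X}_{k+1})^{-1}\opnorm{\Vector{X}_{k+1}(\Vector{Q}_{k+1}-\Vector{O}_{k+1})}$
and the spectral lower bound $\sigma_{\min}(\Vector{X}_{k+1})\geq\tfrac{1}{2}\sqrt{\sigma_{\min}}$ (a consequence of Lemma~\ref{lemma-op-norm}), then triangulating via $\Vector{X}_{\star}$ and bounding the two pieces separately --- $\opnorm{\Vector{X}_{k+1}\Vector{Q}_{k+1}-\Vector{X}_{\star}}\leq\dist{\Vector{F}_{k+1}}{\Vector{F}_{\star}}\leq\dist{\Vector{F}_{0}}{\Vector{F}_{\star}}$ from Lemma~\ref{lemma-linear-convergence}, and $\opnorm{\Vector{X}_{k+1}\Vector{O}_{k+1}-\Vector{X}_{\star}}$ from Lemma~\ref{lemma-op-norm} directly. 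You instead bypass this and replay the base-case machinery: convert the operator-norm bound of Lemma~\ref{lemma-op-norm} to a Frobenius bound (paying $\sqrt{r}$, which \eqref{eqn-assumption-p-s} absorbs), then invoke Lemma~\ref{lemma-bam-exist-sufficient} with $\Vector{P}=\Vector{O}_{k+1}$. Your constant bookkeeping is sound: with $\delta = c_0\sqrt{\sigma_{\min}}/\kappa^{3/2}$ and $c_0$ small enough, Lemma~\ref{lemma-bam-exist-sufficient} yields $\opnorm{\Vector{Q}_{k+1}-\Vector{O}_{k+1}}\leq 5c_0/\kappa^{3/2}\leq 1/(400\kappa)$, and the $\delta\leq\sqrt{\sigma_{\min}}/80$ side condition holds trivially.

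The trade-off is worth noting. Your route has the virtue of settling existence and the bound in one stroke --- Lemma~\ref{lemma-bam-exist-sufficient} is explicitly an existence criterion, so the lemma's claim that ``$\Vector{Q}_{k+1}$ exists'' is discharged cleanly, exactly as in the base case. The paper's perturbation chain, by contrast, manipulates $\Vector{Q}_{k+1}$ from the outset (e.g.\ writing $\fnorm{\Vector{X}_{k+1}\Vector{Q}_{k+1}-\Vector{X}_{\star}}\leq\dist{\Vector{F}_{k+1}}{\Vector{F}_{\star}}$ as if the infimum in \eqref{eqn-defn-distance} were already known to be attained), so the existence part is somewhat implicit. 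On the other hand, the paper's argument actively uses Lemma~\ref{lemma-linear-convergence} as a hypothesis (it's how $\opnorm{\Vector{X}_{k+1}\Vector{Q}_{k+1}-\Vector{X}_{\star}}$ is controlled), whereas you drop it entirely and lean only on Lemma~\ref{lemma-op-norm}. That is a genuine simplification --- you prove the statement from a strictly smaller set of ingredients --- though it does mean your hypothesis list does not quite match the lemma's stated preconditions (which cite Lemma~\ref{lemma-linear-convergence} but not Lemma~\ref{lemma-op-norm}; both are of course available in context, since Lemma~\ref{lemma-op-norm} holds under the same induction hypothesis). Both routes are valid; yours is closer to a textbook application of the alignment-existence lemma and is self-contained on the existence question.
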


\begin{proof}
Combining the spectral bound $\sigma_{\min}(\Vector{X}_{k+1})\geq\frac{\sqrt{\sigma_{\min}}}{2}$ from Lemma \ref{lemma-op-norm} with the convergence results in Lemma \ref{lemma-linear-convergence}, we derive through perturbation analysis
\begin{multline*}
\opnorm{\Vector{Q}_{k+1} - \Vector{O}_{k+1}}  \\
\leq \frac{1}{\sigma_{\min}(\Vector{X}_{k+1})}\opnorm{\Vector{X}_{k+1}\Vector{Q}_{k+1} - \Vector{X}_{k+1}\Vector{O}_{k+1}} \\
\leq \frac{2}{\sqrt{\sigma_{\min}}}\left(\opnorm{\Vector{X}_{k+1}\Vector{Q}_{k+1} - \Vector{X}_{\star}} + \opnorm{\Vector{X}_{k+1}\Vector{O}_{k+1} - \Vector{X}_{\star}}\right).
\end{multline*}

On the other hand, by Lemma \ref{lemma-linear-convergence} we have
\begin{multline*}
\opnorm{\Vector{X}_{k+1}\Vector{Q}_{k+1} - \Vector{X}_{\star}} \leq  \fnorm{\Vector{X}_{k+1}\Vector{Q}_{k+1} - \Vector{X}_{\star}}
\\ \leq  \dist{\Vector{F}_{k+1}}{\Vector{F}_{\star}} \leq \dist{\Vector{F}_{0}}{\Vector{F}_{\star}} \leq \frac{c_{0}\sqrt{\sigma_{\min}}}{\kappa^{3/2}}.
\end{multline*}
According to Lemma \ref{lemma-op-norm}, we get
\begin{multline*}
\opnorm{\Vector{X}_{k+1}\Vector{O}_{k+1} - \Vector{X}_{\star}} \leq \opnorm{\Vector{F}_{k+1}\Vector{O}_{k+1} - \Vector{F}_{\star}} \\
\leq \left(s\sigma_{\min} + \sqrt{\frac{\mu r\kappa^{6}\log d_{1}}{pd_{2}}}\right)\sqrt{\sigma_{\max}}~.
\end{multline*}

The conclusion follows from combining the step size condition $s\leq\frac{1}{24\sigma_{\max}}$, sampling requirement $p\geq\frac{\mu r^{2}\kappa^{10}\log d_{1}}{d_{2}}$ and the above inequalities.
\end{proof}

\section{Auxiliary Lemmas}

\begin{lemma}[\cite{zheng2016convergence}]
\label{lemma-rip-subspace}
If the matrix $\Vector{M}_{\star}$ is $\mu$-incoherent, and the sampling rate satisfies $p\gtrsim\frac{\mu r\log (\max\{d_{1}, d_{2}\})}{\min\{d_{1}, d_{2}\}}$, then the following inequality holds with high probability
\begin{multline}
\left\vert\left\langle\left(p^{-1}\mathcal{P}_\Omega - \mathcal{I}\right)\left(\Vector{X}_{\star}\Vector{Y}_{A}^\top + \Vector{X}_{A}\Vector{Y}_{\star}^\top\right), \Vector{X}_{\star}\Vector{Y}_{B}^\top + \Vector{X}_{B}\Vector{Y}_{\star}^\top\right\rangle\right\vert \leq \\
C_{1}\sqrt{\frac{\mu r\log (\max\{d_{1}, d_{2}\})}{p\min\{d_{1}, d_{2}\}}} \fnorm{\Vector{X}_{\star}\Vector{Y}_{A}^\top + \Vector{X}_{A}\Vector{Y}_{\star}^\top}\\
 \cdot \fnorm{\Vector{X}_{\star}\Vector{Y}_{B}^\top + \Vector{X}_{B}\Vector{Y}_{\star}^\top},
\end{multline}
where $\Vector{X}_{A},\Vector{X}_{B}\in\mathbb{R}^{d_{1}\times r}$, $\Vector{Y}_{A},\Vector{Y}_{B}\in\mathbb{R}^{d_{2}\times r}$ and $C_{1}>0$ is a constant.
\end{lemma}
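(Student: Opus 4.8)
The plan is to recognize the bilinear form as the action of $p^{-1}\mathcal{P}_{\Omega}-\mathcal{I}$ restricted to the tangent space of the rank-$r$ manifold at $\Vector{M}_{\star}$, and to control the resulting operator norm by the matrix Bernstein inequality. Let $T=\{\Vector{X}_{\star}\Vector{Y}^{\top}+\Vector{X}\Vector{Y}_{\star}^{\top}:\Vector{X}\in\R^{d_{1}\times r},\,\Vector{Y}\in\R^{d_{2}\times r}\}$ and let $\mathcal{P}_{T}$ be the orthogonal projection onto $T$, given explicitly by $\mathcal{P}_{T}(\Vector{Z})=\Vector{U}_{\star}\Vector{U}_{\star}^{\top}\Vector{Z}+\Vector{Z}\Vector{V}_{\star}\Vector{V}_{\star}^{\top}-\Vector{U}_{\star}\Vector{U}_{\star}^{\top}\Vector{Z}\Vector{V}_{\star}\Vector{V}_{\star}^{\top}$. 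Writing $\Vector{W}_{A}=\Vector{X}_{\star}\Vector{Y}_{A}^{\top}+\Vector{X}_{A}\Vector{Y}_{\star}^{\top}$ and $\Vector{W}_{B}=\Vector{X}_{\star}\Vector{Y}_{B}^{\top}+\Vector{X}_{B}\Vector{Y}_{\star}^{\top}$, both lie in $T$, so $\mathcal{P}_{T}\Vector{W}_{A}=\Vector{W}_{A}$ and $\mathcal{P}_{T}\Vector{W}_{B}=\Vector{W}_{B}$; since $p^{-1}\mathcal{P}_{\Omega}$ and $\mathcal{P}_{T}$ are self-adjoint, the left-hand side of the displayed inequality equals $\bigl|\langle(\mathcal{P}_{T}(p^{-1}\mathcal{P}_{\Omega}-\mathcal{I})\mathcal{P}_{T})(\Vector{W}_{A}),\Vector{W}_{B}\rangle\bigr|\le\opnorm{\mathcal{P}_{T}(p^{-1}\mathcal{P}_{\Omega}-\mathcal{I})\mathcal{P}_{T}}\fnorm{\Vector{W}_{A}}\fnorm{\Vector{W}_{B}}$. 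Hence it suffices to show $\opnorm{\mathcal{P}_{T}(p^{-1}\mathcal{P}_{\Omega}-\mathcal{I})\mathcal{P}_{T}}\le C_{1}\sqrt{\mu r\log(\max\{d_{1},d_{2}\})/(p\min\{d_{1},d_{2}\})}$ with high probability.

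To prove this, write $p^{-1}\mathcal{P}_{\Omega}(\Vector{Z})=\sum_{i,j}(\delta_{ij}/p)\langle\Vector{e}_{i}\Vector{e}_{j}^{\top},\Vector{Z}\rangle\Vector{e}_{i}\Vector{e}_{j}^{\top}$ with $\delta_{ij}$ i.i.d.\ $\mathrm{Bernoulli}(p)$, and use the idempotence identity $\sum_{i,j}\langle\mathcal{P}_{T}(\Vector{e}_{i}\Vector{e}_{j}^{\top}),\Vector{Z}\rangle\mathcal{P}_{T}(\Vector{e}_{i}\Vector{e}_{j}^{\top})=\mathcal{P}_{T}\Vector{Z}$ to obtain $\mathcal{P}_{T}(p^{-1}\mathcal{P}_{\Omega}-\mathcal{I})\mathcal{P}_{T}=\sum_{i,j}\mathcal{Z}_{ij}$ with the independent, zero-mean, self-adjoint rank-one operators $\mathcal{Z}_{ij}(\Vector{Z})=(\delta_{ij}/p-1)\langle\mathcal{P}_{T}(\Vector{e}_{i}\Vector{e}_{j}^{\top}),\Vector{Z}\rangle\mathcal{P}_{T}(\Vector{e}_{i}\Vector{e}_{j}^{\top})$. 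The incoherence hypothesis supplies the key pointwise estimate $\fnorm{\mathcal{P}_{T}(\Vector{e}_{i}\Vector{e}_{j}^{\top})}^{2}=\twonorm{\Vector{U}_{\star}^{\top}\Vector{e}_{i}}^{2}+\twonorm{\Vector{V}_{\star}^{\top}\Vector{e}_{j}}^{2}-\twonorm{\Vector{U}_{\star}^{\top}\Vector{e}_{i}}^{2}\twonorm{\Vector{V}_{\star}^{\top}\Vector{e}_{j}}^{2}\le\mu r/d_{1}+\mu r/d_{2}\le 2\mu r/\min\{d_{1},d_{2}\}=:\beta$. This immediately gives the per-term bound $\opnorm{\mathcal{Z}_{ij}}\le\beta/p$, and, after expanding $\mathbb{E}\mathcal{Z}_{ij}^{2}$ (using $\mathbb{E}(\delta_{ij}/p-1)^{2}=(1-p)/p$), bounding $\fnorm{\mathcal{P}_{T}(\Vector{e}_{i}\Vector{e}_{j}^{\top})}^{2}\le\beta$ inside the positive-semidefinite ordering, and re-collapsing the sum to $\mathcal{P}_{T}$, the variance bound $\opnorm{\sum_{i,j}\mathbb{E}\mathcal{Z}_{ij}^{2}}\le\frac{1-p}{p}\beta\le\beta/p$.

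The operator Bernstein inequality, applied to the $d_{1}d_{2}$ terms $\mathcal{Z}_{ij}$ on the $O((d_{1}+d_{2})r)$-dimensional space $T$, then yields $\opnorm{\sum_{i,j}\mathcal{Z}_{ij}}\le C_{1}\sqrt{\beta\log(\max\{d_{1},d_{2}\})/p}$ with probability at least $1-(\max\{d_{1},d_{2}\})^{-c}$, where the assumed lower bound $p\gtrsim\mu r\log(\max\{d_{1},d_{2}\})/\min\{d_{1},d_{2}\}$ is exactly what keeps the right-hand side below a constant, so that the linear ``$R\,t$'' term in Bernstein is dominated by the variance term; substituting $\beta=2\mu r/\min\{d_{1},d_{2}\}$ and absorbing the constant into $C_{1}$ gives the claim. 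I expect the variance computation — correctly expanding $\mathbb{E}\mathcal{Z}_{ij}^{2}$ as a self-adjoint operator and re-collapsing $\sum_{i,j}\langle\mathcal{P}_{T}(\Vector{e}_{i}\Vector{e}_{j}^{\top}),\cdot\rangle\mathcal{P}_{T}(\Vector{e}_{i}\Vector{e}_{j}^{\top})=\mathcal{P}_{T}$ — to be the only genuinely technical step; the reduction to the tangent space and the incoherence estimate are routine. This is the argument of \cite{zheng2016convergence}; an analogous tangent-space near-isometry is also established in \cite{candes2012exact,recht2011simpler}.
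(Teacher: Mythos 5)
The paper imports this lemma from \cite{zheng2016convergence} without reproducing a proof, so there is nothing to compare it against; your argument is the standard tangent-space near-isometry proof (reduction to $\opnorm{\mathcal{P}_T(p^{-1}\mathcal{P}_\Omega-\mathcal{I})\mathcal{P}_T}$, incoherence bound $\fnorm{\mathcal{P}_T(\Vector{e}_i\Vector{e}_j^\top)}^2\le 2\mu r/\min\{d_1,d_2\}$, variance re-collapse to $\mathcal{P}_T$, operator Bernstein) and it is correct — this is exactly the route taken in \cite{recht2011simpler} and inherited by \cite{zheng2016convergence}. The only thing worth stating more explicitly is the identification $T=\{\Vector{X}_\star\Vector{Y}^\top+\Vector{X}\Vector{Y}_\star^\top\}=\{\Vector{U}_\star\Vector{A}^\top+\Vector{B}\Vector{V}_\star^\top\}$, which holds because $\Vector{\Sigma}_\star^{1/2}$ is invertible and justifies using the usual projector formula for $\mathcal{P}_T$.
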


\begin{lemma}[\cite{chen2019model, chen2020nonconvex}]
\label{lemma-rip-all-space}
If the matrix $\Vector{M}_{\star}$ is $\mu$-incoherent, and the sampling rate satisfies $p\gtrsim\frac{\log (\max\{d_{1}, d_{2}\})}{\min\{d_{1}, d_{2}\}}~$, then the following inequality holds with high probability
\begin{multline}
\left\vert\left\langle\left(p^{-1}\mathcal{P}_\Omega - \mathcal{I}\right)\left(\Vector{X}_{A}\Vector{Y}_{A}^\top\right), \Vector{X}_{B}\Vector{Y}_{B}^\top \right\rangle\right\vert \leq 
C_{2}\sqrt{\frac{\max\{d_{1}, d_{2}\}}{p}} \\ \cdot \min\left\{\fnorm{\Vector{X}_{A}}\twoinfnorm{\Vector{X}_{B}}, \twoinfnorm{\Vector{X}_{A}}\fnorm{\Vector{X}_{B}}\right\}\\
\cdot \min\left\{\fnorm{\Vector{Y}_{A}}\twoinfnorm{\Vector{Y}_{B}}, \twoinfnorm{\Vector{Y}_{A}}\fnorm{\Vector{Y}_{B}}\right\},
\end{multline}
where $\Vector{X}_{A},\Vector{X}_{B}\in\mathbb{R}^{d_{1}\times r}$, $\Vector{Y}_{A},\Vector{Y}_{B}\in\mathbb{R}^{d_{2}\times r}$ and $C_{2}>0$ is a constant.
\end{lemma}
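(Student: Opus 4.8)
The proof should keep all the randomness inside a single scalar --- the spectral norm of the rescaled, centered Bernoulli sampling matrix --- and make everything else a deterministic inequality, so that the bound automatically holds uniformly over all $\Vector{X}_A,\Vector{Y}_A,\Vector{X}_B,\Vector{Y}_B$. Let $\delta_{ij}$ denote the indicator that $(i,j)\in\Omega$, and let $\Vector{E}\in\mathbb{R}^{d_1\times d_2}$ be the matrix with entries $E_{ij}=p^{-1}\delta_{ij}-1$, i.e. $\Vector{E}=(p^{-1}\mathcal{P}_\Omega-\mathcal{I})(\Vector{1}\Vector{1}^\top)$ with $\Vector{1}\Vector{1}^\top$ the all-ones matrix. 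Since $(p^{-1}\mathcal{P}_\Omega-\mathcal{I})(\Vector{M})$ has $(i,j)$ entry $E_{ij}M_{ij}$, the quantity to be bounded is the weighted sum $\sum_{i,j}E_{ij}(\Vector{X}_A\Vector{Y}_A^\top)_{ij}(\Vector{X}_B\Vector{Y}_B^\top)_{ij}$, and the whole task reduces to pulling $\opnorm{\Vector{E}}$ out of it.

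The key algebraic step is to expand both products into their rank-one column pieces and regroup. Write $\Vector{X}_A\Vector{Y}_A^\top=\sum_{k=1}^r(\Vector{X}_A)_{\cdot,k}(\Vector{Y}_A)_{\cdot,k}^\top$, $\Vector{X}_B\Vector{Y}_B^\top=\sum_{l=1}^r(\Vector{X}_B)_{\cdot,l}(\Vector{Y}_B)_{\cdot,l}^\top$, and for $1\le k,l\le r$ form the Hadamard (entrywise) products of columns $\Vector{u}_{kl}=(\Vector{X}_A)_{\cdot,k}\odot(\Vector{X}_B)_{\cdot,l}\in\mathbb{R}^{d_1}$ and $\Vector{w}_{kl}=(\Vector{Y}_A)_{\cdot,k}\odot(\Vector{Y}_B)_{\cdot,l}\in\mathbb{R}^{d_2}$. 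A direct expansion then gives $\sum_{i,j}E_{ij}(\Vector{X}_A\Vector{Y}_A^\top)_{ij}(\Vector{X}_B\Vector{Y}_B^\top)_{ij}=\sum_{k,l}\Vector{u}_{kl}^\top\Vector{E}\Vector{w}_{kl}$. Bounding each of the $r^2$ terms by $|\Vector{u}_{kl}^\top\Vector{E}\Vector{w}_{kl}|\le\opnorm{\Vector{E}}\twonorm{\Vector{u}_{kl}}\twonorm{\Vector{w}_{kl}}$ and applying Cauchy--Schwarz over the pairs $(k,l)$ yields
\[
\Big|\ip{(p^{-1}\mathcal{P}_\Omega-\mathcal{I})(\Vector{X}_A\Vector{Y}_A^\top)}{\Vector{X}_B\Vector{Y}_B^\top}\Big|\le\opnorm{\Vector{E}}\Big(\textstyle\sum_{k,l}\twonorm{\Vector{u}_{kl}}^2\Big)^{1/2}\Big(\textstyle\sum_{k,l}\twonorm{\Vector{w}_{kl}}^2\Big)^{1/2}.
\]
Finally $\sum_{k,l}\twonorm{\Vector{u}_{kl}}^2=\sum_{i=1}^{d_1}\twonorm{(\Vector{X}_A)_{i,\cdot}}^2\twonorm{(\Vector{X}_B)_{i,\cdot}}^2$, which is at most $\twoinfnorm{\Vector{X}_B}^2\fnorm{\Vector{X}_A}^2$ and also at most $\twoinfnorm{\Vector{X}_A}^2\fnorm{\Vector{X}_B}^2$, hence at most $\min\{\fnorm{\Vector{X}_A}\twoinfnorm{\Vector{X}_B},\twoinfnorm{\Vector{X}_A}\fnorm{\Vector{X}_B}\}^2$; the same holds for $\sum_{k,l}\twonorm{\Vector{w}_{kl}}^2$ with $\Vector{Y}_A,\Vector{Y}_B$. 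Substituting reproduces exactly the product of minima on the right-hand side. Note that $\mu$-incoherence of $\Vector{M}_\star$ plays no role here, and that the inequality is uniform in the four matrices since only the single random object $\opnorm{\Vector{E}}$ was used.

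It remains to establish $\opnorm{\Vector{E}}\lesssim\sqrt{\max\{d_1,d_2\}/p}$ with high probability. Since $\Vector{E}=p^{-1}\big(\mathcal{P}_\Omega(\Vector{1}\Vector{1}^\top)-p\Vector{1}\Vector{1}^\top\big)$ has independent, mean-zero, bounded entries of variance $p(1-p)$, this is the classical spectral-norm estimate for a rectangular centered Bernoulli (adjacency-type) matrix, which I would simply invoke via \cite[Lemma~3.2]{raghunandan2010matrix}, as it is already used elsewhere in the paper. The one genuinely delicate point --- the main obstacle if one insists on a self-contained argument --- is that plain matrix Bernstein only gives $\opnorm{\Vector{E}}\lesssim\sqrt{\max\{d_1,d_2\}\log(\cdot)/p}+\log(\cdot)/p$, losing a $\sqrt{\log}$ factor at the near-optimal rate $p\gtrsim\log(\max\{d_1,d_2\})/\min\{d_1,d_2\}$; recovering the sharp $\sqrt{\max\{d_1,d_2\}/p}$ at this rate requires a trimming argument that discards atypically dense rows and columns before bounding the spectrum. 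Combining that spectral bound with the deterministic inequality above finishes the proof, the failure probability being inherited directly from the spectral-norm estimate.
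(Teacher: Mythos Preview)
The paper does not supply its own proof of this lemma; it is stated as an auxiliary result cited from \cite{chen2019model,chen2020nonconvex} and invoked as a black box. Your sketch is correct and is in fact the standard argument underlying those references: rewrite the bilinear form as $\sum_{k,l}\Vector{u}_{kl}^\top\Vector{E}\Vector{w}_{kl}$ via the Hadamard-column decomposition, pull out $\opnorm{\Vector{E}}$, apply Cauchy--Schwarz over $(k,l)$, and finish with the row-norm identity $\sum_{k,l}\twonorm{\Vector{u}_{kl}}^{2}=\sum_i\twonorm{(\Vector{X}_A)_{i,\cdot}}^{2}\twonorm{(\Vector{X}_B)_{i,\cdot}}^{2}$ together with the Keshavan--Montanari--Oh spectral bound $\opnorm{(\mathcal{P}_\Omega-p\mathcal{I})(\Vector{1}\Vector{1}^\top)}\lesssim\sqrt{p\max\{d_1,d_2\}}$. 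Your remark that $\mu$-incoherence is irrelevant here and that the bound is automatically uniform in the four matrices is also correct, as is the caveat about the $\sqrt{\log}$ loss from a na\"ive matrix Bernstein application.
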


\begin{lemma}[\cite{chen2020nonconvex}]
\label{lemma-hessian}
If there exists a suitable constant $C_{3}>0$ such that the sampling probability $p$ satisfies
\[
p\geq \frac{C_{3}\mu r\kappa\log d_{1}}{d_{2}}~,
\]
then when the random event $\mathrm{E}_{\mathrm{RIP}}$ holds, the following inequalities concerning $\nabla^{2} f_{\mathrm{bal}}(\Vector{X}, \Vector{Y})$ are valid
\begin{align}
\mathrm{vec}\left(\begin{bmatrix}
\Vector{D}_{\Vector{X}} \\ \Vector{D}_{\Vector{Y}}
\end{bmatrix}\right)^\top\nabla^{2}f_{\mathrm{bal}}(\Vector{X}, \Vector{Y})\mathrm{vec}\left(\begin{bmatrix}
\Vector{D}_{\Vector{X}} \\ \Vector{D}_{\Vector{Y}}
\end{bmatrix}\right) \nonumber\\
\geq \frac{\sigma_{\min}}{5}\fnorm{\begin{bmatrix}
\Vector{D}_{\Vector{X}} \\ \Vector{D}_{\Vector{Y}}
\end{bmatrix}}^{2}, \\
\opnorm{\nabla^{2}f_{\mathrm{bal}}(\Vector{X}, \Vector{Y})} \leq 5\sigma_{\max},
\end{align}
where $\Vector{X}$ and $\Vector{Y}$ satisfy
\begin{equation}
 \twoinfnorm{\begin{bmatrix}
\Vector{X} - \Vector{X}_{\star} \\ \Vector{Y} - \Vector{Y}_{\star}
\end{bmatrix}} \leq \frac{1}{500\kappa\sqrt{d_{1}+d_{2}}}\sqrt{\sigma_{\max}};
\end{equation}
and $\Vector{D}_{\Vector{X}}$, $\Vector{D}_{\Vector{Y}}$ belong to the following set
\begin{multline}
    \Bigg\{\begin{bmatrix}
\widetilde{\Vector{X}}_{1} \\ \widetilde{\Vector{Y}}_{1}
\end{bmatrix}\widetilde{\Vector{O}} - \begin{bmatrix}
\widetilde{\Vector{X}}_{2} \\ \widetilde{\Vector{Y}}_{2}
\end{bmatrix}: \opnorm{\begin{bmatrix}
\widetilde{\Vector{X}}_{2} - \Vector{X}_{\star}\\ \widetilde{\Vector{Y}}_{2} - \Vector{Y}_{\star}
\end{bmatrix}}\leq \frac{\sqrt{\sigma_{\max}}}{500\kappa}, \\ \widetilde{\Vector{O}}=\mathop{\arg\min}_{\Vector{O}\in\mathcal{O}_r}\fnorm{\begin{bmatrix}
\widetilde{\Vector{X}}_{1} \\ \widetilde{\Vector{Y}}_{1}
\end{bmatrix}\Vector{O} - \begin{bmatrix}
\widetilde{\Vector{X}}_{2} \\ \widetilde{\Vector{Y}}_{2}
\end{bmatrix}}\Bigg\}.
\end{multline}
\end{lemma}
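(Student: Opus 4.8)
The plan is to expand $\nabla^{2}f_{\mathrm{bal}}(\Vector{X},\Vector{Y})$ in closed form, reduce every term involving $p^{-1}\mathcal{P}_{\Omega}$ to its population counterpart via the two RIP lemmas, replace the evaluation point $(\Vector{X},\Vector{Y})$ by the ground truth $(\Vector{X}_{\star},\Vector{Y}_{\star})$ at the cost of errors that the $\ell_{2,\infty}$-proximity hypothesis makes small relative to $\sigma_{\min}\fnorm{\Vector{D}}^{2}$, and finish with a deterministic restricted strong convexity / smoothness estimate for the population quadratic form over the aligned direction set. A direct computation shows that for $\Vector{D}=[\Vector{D}_{\Vector{X}}^{\top},\Vector{D}_{\Vector{Y}}^{\top}]^{\top}$ the quadratic form $Q(\Vector{D}):=\mathrm{vec}(\Vector{D})^{\top}\nabla^{2}f_{\mathrm{bal}}(\Vector{X},\Vector{Y})\mathrm{vec}(\Vector{D})$ is
\begin{align*}
Q(\Vector{D}) &= \tfrac{1}{p}\fnorm{\mathcal{P}_{\Omega}(\Vector{D}_{\Vector{X}}\Vector{Y}^{\top}+\Vector{X}\Vector{D}_{\Vector{Y}}^{\top})}^{2}\\
&\quad+\tfrac14\fnorm{\Vector{D}_{\Vector{X}}^{\top}\Vector{X}+\Vector{X}^{\top}\Vector{D}_{\Vector{X}}-\Vector{D}_{\Vector{Y}}^{\top}\Vector{Y}-\Vector{Y}^{\top}\Vector{D}_{\Vector{Y}}}^{2}\\
&\quad+\tfrac{2}{p}\ip{\mathcal{P}_{\Omega}(\Vector{X}\Vector{Y}^{\top}-\Vector{M}_{\star})}{\Vector{D}_{\Vector{X}}\Vector{D}_{\Vector{Y}}^{\top}}\\
&\quad+\tfrac12\ip{\Vector{X}^{\top}\Vector{X}-\Vector{Y}^{\top}\Vector{Y}}{\Vector{D}_{\Vector{X}}^{\top}\Vector{D}_{\Vector{X}}-\Vector{D}_{\Vector{Y}}^{\top}\Vector{D}_{\Vector{Y}}};
\end{align*}
I write $S(\Vector{D})$ for the first two (nonnegative) Frobenius-squared terms and $r(\Vector{D})$ for the last two inner products.

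I would first settle the routine estimates. The $\ell_{2,\infty}$ hypothesis yields $\opnorm{\Vector{X}-\Vector{X}_{\star}},\opnorm{\Vector{Y}-\Vector{Y}_{\star}}\le\tfrac{\sqrt{\sigma_{\max}}}{500\kappa}$, hence $\opnorm{\Vector{X}},\opnorm{\Vector{Y}}\le(1+o(1))\sqrt{\sigma_{\max}}$, both $\opnorm{\Vector{X}\Vector{Y}^{\top}-\Vector{M}_{\star}}$ and $\opnorm{\Vector{X}^{\top}\Vector{X}-\Vector{Y}^{\top}\Vector{Y}}$ are small multiples of $\sigma_{\min}$, and $\Vector{X},\Vector{Y}$ are incoherent. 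On the event $\mathrm{E}_{\mathrm{RIP}}$, Lemma~\ref{lemma-rip-all-space} lets me pass from $p^{-1}\mathcal{P}_{\Omega}$ to $\mathcal{I}$ in the cross term, so $|r(\Vector{D})|$ is a small multiple of $\sigma_{\min}\fnorm{\Vector{D}}^{2}$, while an operator-norm estimate gives $S(\Vector{D})\le(4+o(1))\sigma_{\max}\fnorm{\Vector{D}}^{2}$; this already yields the smoothness bound $\opnorm{\nabla^{2}f_{\mathrm{bal}}}\le5\sigma_{\max}$. For the lower bound I would discard $r(\Vector{D})$ and bound $S(\Vector{D})$ below: decomposing $\Vector{D}_{\Vector{X}}\Vector{Y}^{\top}+\Vector{X}\Vector{D}_{\Vector{Y}}^{\top}=(\Vector{D}_{\Vector{X}}\Vector{Y}_{\star}^{\top}+\Vector{X}_{\star}\Vector{D}_{\Vector{Y}}^{\top})+(\Vector{D}_{\Vector{X}}(\Vector{Y}-\Vector{Y}_{\star})^{\top}+(\Vector{X}-\Vector{X}_{\star})\Vector{D}_{\Vector{Y}}^{\top})$, the first summand lies in the subspace of Lemma~\ref{lemma-rip-subspace} --- on which $p^{-1}\fnorm{\mathcal{P}_{\Omega}(\cdot)}^{2}$ is within a $(1\pm o(1))$ factor of $\fnorm{\cdot}^{2}$ once $p\ge C_{3}\mu r\kappa\log d_{1}/d_{2}$ --- while the second summand, and the replacement of $\Vector{X},\Vector{Y}$ by $\Vector{X}_{\star},\Vector{Y}_{\star}$ in the balancing term, contribute only a small multiple of $\sigma_{\min}\fnorm{\Vector{D}}^{2}$ because of the factor $\kappa^{-1}$ in the $\ell_{2,\infty}$-radius. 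This reduces the claim to the deterministic bound $\fnorm{\Vector{D}_{\Vector{X}}\Vector{Y}_{\star}^{\top}+\Vector{X}_{\star}\Vector{D}_{\Vector{Y}}^{\top}}^{2}+\tfrac14\fnorm{\Vector{\Delta}+\Vector{\Delta}^{\top}}^{2}\ \ge\ c\,\sigma_{\min}\fnorm{\Vector{D}}^{2}$, where $\Vector{\Delta}:=\Vector{D}_{\Vector{X}}^{\top}\Vector{X}_{\star}-\Vector{D}_{\Vector{Y}}^{\top}\Vector{Y}_{\star}$ and $c$ is a constant strictly above $\tfrac15$, for every $\Vector{D}$ in the aligned direction set.

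The hard part will be this deterministic restricted strong convexity estimate, which is established in \cite{chen2020nonconvex}. Writing $\Vector{X}_{\star}=\Vector{U}_{\star}\Vector{\Sigma}_{\star}^{1/2}$, $\Vector{Y}_{\star}=\Vector{V}_{\star}\Vector{\Sigma}_{\star}^{1/2}$ and splitting $\Vector{D}_{\Vector{X}}=\Vector{U}_{\star}\Vector{\Phi}_{\Vector{X}}+\Vector{U}_{\star,\perp}\Vector{\Psi}_{\Vector{X}}$, $\Vector{D}_{\Vector{Y}}=\Vector{V}_{\star}\Vector{\Phi}_{\Vector{Y}}+\Vector{V}_{\star,\perp}\Vector{\Psi}_{\Vector{Y}}$, the components $\Vector{\Psi}_{\Vector{X}},\Vector{\Psi}_{\Vector{Y}}$ orthogonal to the ground-truth column spaces are pinned down by the first term of $S$ alone, which contains $\fnorm{\Vector{\Psi}_{\Vector{X}}\Vector{\Sigma}_{\star}^{1/2}}^{2}+\fnorm{\Vector{\Psi}_{\Vector{Y}}\Vector{\Sigma}_{\star}^{1/2}}^{2}\ge\sigma_{\min}(\fnorm{\Vector{\Psi}_{\Vector{X}}}^{2}+\fnorm{\Vector{\Psi}_{\Vector{Y}}}^{2})$. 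For the in-subspace component the first term of $S$ vanishes exactly on the $r^{2}$-dimensional set $\{(\Vector{X}_{\star}\Vector{A},-\Vector{Y}_{\star}\Vector{A}^{\top}):\Vector{A}\in\R^{r\times r}\}$; adding the balancing term of $S$ cuts this down to the rotation directions $\{(\Vector{X}_{\star}\Vector{\Omega},\Vector{Y}_{\star}\Vector{\Omega}):\Vector{\Omega}^{\top}=-\Vector{\Omega}\}$, and the optimality of the alignment $\widetilde{\Vector{O}}$ forces $\Vector{D}_{\Vector{X}}^{\top}\Vector{X}_{\star}+\Vector{D}_{\Vector{Y}}^{\top}\Vector{Y}_{\star}$ to be symmetric up to an error of order $\tfrac{\sqrt{\sigma_{\max}}}{500\kappa}\fnorm{\Vector{D}}$ (coming from $\opnorm{[\widetilde{\Vector{X}}_{2}-\Vector{X}_{\star};\widetilde{\Vector{Y}}_{2}-\Vector{Y}_{\star}]}$), which annihilates precisely those rotation directions. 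Making this chain quantitative --- a case split of the in-subspace part into its ``balancing'' and ``rotation'' pieces yielding a lower bound that is a fixed multiple of $\sigma_{\min}$ and, crucially, does not deteriorate with $\kappa$ --- gives the deterministic inequality with $c$ a little above $\tfrac15$. It then remains only to track that every slack accumulated in the RIP and $(\Vector{X},\Vector{Y})\mapsto(\Vector{X}_{\star},\Vector{Y}_{\star})$ reductions stays below the gap between the clean constants ($1$ and $4$) and the targets ($\tfrac15$ and $5$) under the stated conditions on $p$; the genuine obstacle is the positive-definiteness on the aligned subspace just described.
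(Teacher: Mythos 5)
The paper does not prove Lemma \ref{lemma-hessian}; it imports it verbatim from \cite{chen2020nonconvex}, so there is no in-paper argument to compare against. Your reconstruction is conceptually correct and tracks the structure of the argument in that reference: the Hessian quadratic form is expanded correctly (I verified the $\frac{1}{p}\fnorm{\mathcal{P}_\Omega(\Vector{D}_{\Vector{X}}\Vector{Y}^\top+\Vector{X}\Vector{D}_{\Vector{Y}}^\top)}^2$, $\frac14\fnorm{\cdot}^2$ balancing term, and the two sign-indefinite cross terms), the split into a nonnegative part $S$ and a residual $r$ that is a small multiple of $\sigma_{\min}\fnorm{\Vector{D}}^2$ is the right way to organize the estimate, and your null-space analysis of the in-subspace restriction --- the first term of $S$ vanishes on $\{(\Vector{X}_\star\Vector{A},-\Vector{Y}_\star\Vector{A}^\top)\}$, the balancing term then forces $\Vector{A}$ skew, and Procrustes first-order optimality transferred from $\widetilde{\Vector{F}}_2$ to $\Vector{F}_\star$ (with error $\lesssim\sqrt{\sigma_{\max}}\fnorm{\Vector{D}}/(500\kappa)$) kills the skew directions --- is exactly the right mechanism. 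You correctly defer the quantitative deterministic RSC bound to \cite{chen2020nonconvex}, which is consistent with what the paper itself does.

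Two small things worth flagging. First, your phrase ``an operator-norm estimate gives $S(\Vector{D})\le(4+o(1))\sigma_{\max}\fnorm{\Vector{D}}^2$'' understates the work: the first term of $S$ involves $p^{-1}\mathcal{P}_\Omega$, which has operator norm $1/p$, not $1$, so you still need the concentration identity $\frac1p\fnorm{\mathcal{P}_\Omega(\Vector{Z})}^2=\fnorm{\Vector{Z}}^2+\langle(p^{-1}\mathcal{P}_\Omega-\mathcal{I})\Vector{Z},\Vector{Z}\rangle$ together with Lemmas \ref{lemma-rip-subspace}--\ref{lemma-rip-all-space} after splitting $\Vector{Z}$ into the tangent-space piece $\Vector{D}_{\Vector{X}}\Vector{Y}_\star^\top+\Vector{X}_\star\Vector{D}_{\Vector{Y}}^\top$ and a remainder controlled through $\twoinfnorm{\Vector{X}-\Vector{X}_\star},\twoinfnorm{\Vector{Y}-\Vector{Y}_\star}$; note also that the lemma's operator-norm bound is stated for the full Hessian, not just on the aligned cone, so the remainder estimate must hold for general $\Vector{D}$. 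Second, the argument only produces a lower bound of the form $c\sigma_{\min}\fnorm{\Vector{D}}^2$ with a specific constant $c>1/5$ after a careful accumulation of slacks; you acknowledge this but do not carry it out, which is the genuine technical content of the reference's proof and is where a complete write-up would need to do the bookkeeping.
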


\begin{lemma}[\cite{chen2020noisy,ma2020implicit}]
\label{lemma-alignment-bound}
Let $\Vector{T}_{1}$ and $\Vector{T}_{2}$ the optimal rotation matrices between $\Vector{A}_{1} \in\mathbb{R}^{d\times r}$ and $\Vector{A}_{0} \in\mathbb{R}^{d\times r}$, and between $\Vector{A}_{2} \in\mathbb{R}^{d\times r} $ and $\Vector{A}_{0}$ respectively, i.e.,
\begin{align}
\Vector{T}_{1}\triangleq\mathop{\arg\min}_{\Vector{O}\in\mathcal{O}_r}\fnorm{\Vector{A}_{1}\Vector{O} - \Vector{A}_{0}}, \\ \Vector{T}_{2}\triangleq\mathop{\arg\min}_{\Vector{O}\in\mathcal{O}_r}\fnorm{\Vector{A}_{2}\Vector{O} - \Vector{A}_{0}}.
\end{align}
If $\Vector{A}_{0}, \Vector{A}_{1}$ and $\Vector{A}_{2}$ satisfy
\begin{align}
    \opnorm{\Vector{A}_{1} - \Vector{A}_{2}}\opnorm{\Vector{A}_{0}} &\leq \frac{\sigma_{r}^{2}(\Vector{A}_{0})}{4},\\ 
    \opnorm{\Vector{A}_{1} - \Vector{A}_{0}}\opnorm{\Vector{A}_{0}} &\leq \frac{\sigma_{r}^{2}(\Vector{A}_{0})}{2}~, 
\end{align}
then the following inequalities hold
\begin{align}
    \fnorm{\Vector{A}_{1}\Vector{T}_{1} - \Vector{A}_{2}\Vector{T}_{2}} &\leq 5\kappa\fnorm{\Vector{A}_{1} - \Vector{A}_{2}},\\ \opnorm{\Vector{A}_{1}\Vector{T}_{1} - \Vector{A}_{2}\Vector{T}_{2}} &\leq 5\kappa\opnorm{\Vector{A}_{1} - \Vector{A}_{2}}.
\end{align}
\end{lemma}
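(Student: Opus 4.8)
The plan is to reduce both claimed inequalities to one perturbation estimate for the Procrustes rotations $\Vector{T}_1,\Vector{T}_2$ and then settle that estimate by classical polar-factor stability. Recall first that, whenever $\Vector{A}_i^{\top}\Vector{A}_0$ is nonsingular, the minimizer $\Vector{T}_i$ appearing in the lemma is exactly the polar factor of $\Vector{A}_i^{\top}\Vector{A}_0$, i.e.\ $\Vector{T}_i=\Vector{U}_i\Vector{V}_i^{\top}$ for the SVD $\Vector{A}_i^{\top}\Vector{A}_0=\Vector{U}_i\Vector{\Sigma}_i\Vector{V}_i^{\top}$; equivalently, $\Vector{T}_i$ is the unique orthogonal matrix making $\Vector{T}_i^{\top}\Vector{A}_i^{\top}\Vector{A}_0$ symmetric positive semidefinite. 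Writing $\Vector{A}_1\Vector{T}_1-\Vector{A}_2\Vector{T}_2=(\Vector{A}_1-\Vector{A}_2)\Vector{T}_1+\Vector{A}_2(\Vector{T}_1-\Vector{T}_2)$ and using that right multiplication by an orthogonal matrix is an isometry for both $\opnorm{\cdot}$ and $\fnorm{\cdot}$, it suffices to prove $\opnorm{\Vector{A}_2}\,\|\Vector{T}_1-\Vector{T}_2\|\le(5\kappa-1)\|\Vector{A}_1-\Vector{A}_2\|$ in each of the two norms, so the whole problem collapses to bounding $\|\Vector{T}_1-\Vector{T}_2\|$.

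To bound $\|\Vector{T}_1-\Vector{T}_2\|$ I would first check that $\Vector{A}_1^{\top}\Vector{A}_0$ and $\Vector{A}_2^{\top}\Vector{A}_0$ are well conditioned. Since $\Vector{A}_i^{\top}\Vector{A}_0=\Vector{A}_0^{\top}\Vector{A}_0+(\Vector{A}_i-\Vector{A}_0)^{\top}\Vector{A}_0$, Weyl's inequality gives $\sigma_r(\Vector{A}_i^{\top}\Vector{A}_0)\ge\sigma_r^{2}(\Vector{A}_0)-\opnorm{\Vector{A}_i-\Vector{A}_0}\opnorm{\Vector{A}_0}$; the hypothesis $\opnorm{\Vector{A}_1-\Vector{A}_0}\opnorm{\Vector{A}_0}\le\tfrac{1}{2}\sigma_r^{2}(\Vector{A}_0)$, combined with $\opnorm{\Vector{A}_2-\Vector{A}_0}\le\opnorm{\Vector{A}_1-\Vector{A}_2}+\opnorm{\Vector{A}_1-\Vector{A}_0}$ and the hypothesis $\opnorm{\Vector{A}_1-\Vector{A}_2}\opnorm{\Vector{A}_0}\le\tfrac{1}{4}\sigma_r^{2}(\Vector{A}_0)$, yields $\sigma_r(\Vector{A}_1^{\top}\Vector{A}_0)\ge\tfrac{1}{2}\sigma_r^{2}(\Vector{A}_0)$ and $\sigma_r(\Vector{A}_2^{\top}\Vector{A}_0)\ge\tfrac{1}{4}\sigma_r^{2}(\Vector{A}_0)>0$, so both polar factors are well defined and unique. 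Now I would invoke the standard stability bound for the orthogonal polar factor, valid in every unitarily invariant norm, $\|\Vector{T}_1-\Vector{T}_2\|\le\frac{2}{\sigma_r(\Vector{A}_1^{\top}\Vector{A}_0)+\sigma_r(\Vector{A}_2^{\top}\Vector{A}_0)}\,\|\Vector{A}_1^{\top}\Vector{A}_0-\Vector{A}_2^{\top}\Vector{A}_0\|$. (A self-contained derivation: set $\Vector{Z}=\Vector{T}_1^{\top}\Vector{T}_2-\Vector{I}$ and $\Vector{P}=\Vector{T}_2^{\top}\Vector{A}_2^{\top}\Vector{A}_0\succeq\sigma_r(\Vector{A}_2^{\top}\Vector{A}_0)\Vector{I}$; taking the skew-symmetric part of $\Vector{T}_1^{\top}(\Vector{A}_1^{\top}\Vector{A}_0-\Vector{A}_2^{\top}\Vector{A}_0)$, using symmetry of $\Vector{T}_i^{\top}\Vector{A}_i^{\top}\Vector{A}_0$ and orthogonality of $\Vector{T}_1^{\top}\Vector{T}_2$, produces a Sylvester identity $\Vector{Z}\Vector{P}+\Vector{P}\Vector{Z}=\Vector{E}-\Vector{P}\Vector{Z}^{\top}\Vector{Z}$ with $\|\Vector{E}\|\le2\|\Vector{A}_1^{\top}\Vector{A}_0-\Vector{A}_2^{\top}\Vector{A}_0\|$; since $\Vector{Z}\mapsto\Vector{Z}\Vector{P}+\Vector{P}\Vector{Z}$ is bounded below by $2\sigma_r(\Vector{A}_2^{\top}\Vector{A}_0)$ and $\Vector{Z}$ is a priori small by the hypotheses, absorbing the quadratic term recovers the same estimate.)

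To finish, use $\|\Vector{A}_1^{\top}\Vector{A}_0-\Vector{A}_2^{\top}\Vector{A}_0\|=\|(\Vector{A}_1-\Vector{A}_2)^{\top}\Vector{A}_0\|\le\opnorm{\Vector{A}_0}\,\|\Vector{A}_1-\Vector{A}_2\|$ (in both norms) and $\opnorm{\Vector{A}_2}\le\opnorm{\Vector{A}_0}+\opnorm{\Vector{A}_2-\Vector{A}_0}\le\tfrac{7}{4}\opnorm{\Vector{A}_0}$ (via $\sigma_r(\Vector{A}_0)\le\opnorm{\Vector{A}_0}$), together with $\sigma_r(\Vector{A}_1^{\top}\Vector{A}_0)+\sigma_r(\Vector{A}_2^{\top}\Vector{A}_0)\ge\tfrac{3}{4}\sigma_r^{2}(\Vector{A}_0)$, to obtain $\opnorm{\Vector{A}_2}\,\|\Vector{T}_1-\Vector{T}_2\|\le\tfrac{14}{3}\,\opnorm{\Vector{A}_0}^{2}/\sigma_r^{2}(\Vector{A}_0)\,\|\Vector{A}_1-\Vector{A}_2\|$; reading $\kappa$ as the squared condition number $\opnorm{\Vector{A}_0}^{2}/\sigma_r^{2}(\Vector{A}_0)$ that the paper uses throughout (for $\Vector{A}_0=\Vector{F}_{\star}$ the singular values are $\sqrt{\sigma_i}$, so this equals $\sigma_{\max}/\sigma_{\min}=\kappa$) and adding back the isometric term $\|\Vector{A}_1-\Vector{A}_2\|$ gives both bounds after tidying constants. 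The main difficulty I anticipate is not conceptual but quantitative: the crude accounting above lands near $6\kappa$, so one must exploit the slack in the $\tfrac{1}{4}$ and $\tfrac{1}{2}$ thresholds — for instance bounding $\opnorm{\Vector{A}_2}$ through $\opnorm{\Vector{A}_1}$ and using that $\opnorm{\Vector{A}_1-\Vector{A}_2}$ is only a $\tfrac{1}{4}$-fraction of $\sigma_r^{2}(\Vector{A}_0)/\opnorm{\Vector{A}_0}$ — and, in the self-contained route, track the $\Vector{T}_1^{\top}\Vector{T}_2$-orthogonality correction carefully, so that the final constant genuinely comes in at $5\kappa$ uniformly over $\kappa\ge1$.
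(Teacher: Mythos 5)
The paper does not prove this lemma; it imports it verbatim from \cite{chen2020noisy,ma2020implicit}, so there is no ``paper proof'' to compare against. What follows is an assessment of your argument on its own merits.

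Your route --- identifying $\Vector{T}_i$ as the orthogonal polar factor of $\Vector{A}_i^\top\Vector{A}_0$, deriving lower bounds $\sigma_r(\Vector{A}_1^\top\Vector{A}_0)\ge\tfrac12\sigma_r^2(\Vector{A}_0)$ and $\sigma_r(\Vector{A}_2^\top\Vector{A}_0)\ge\tfrac14\sigma_r^2(\Vector{A}_0)$ from Weyl's inequality and the two hypotheses, and then invoking the polar-factor perturbation bound $\|\Vector{T}_1-\Vector{T}_2\|\le\frac{2}{\sigma_r(\Vector{A}_1^\top\Vector{A}_0)+\sigma_r(\Vector{A}_2^\top\Vector{A}_0)}\|(\Vector{A}_1-\Vector{A}_2)^\top\Vector{A}_0\|$ in both unitarily invariant norms --- is sound, and the interpretation of $\kappa$ as $\opnorm{\Vector{A}_0}^2/\sigma_r^2(\Vector{A}_0)$ is the right one for how the lemma is used in the paper (with $\Vector{A}_0=\Vector{F}_\star$ this equals $\sigma_{\max}/\sigma_{\min}$).

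The one genuine gap is the constant, and you yourself flag it without closing it. Your decomposition $\Vector{A}_1\Vector{T}_1-\Vector{A}_2\Vector{T}_2=(\Vector{A}_1-\Vector{A}_2)\Vector{T}_1+\Vector{A}_2(\Vector{T}_1-\Vector{T}_2)$ forces you to pay $\opnorm{\Vector{A}_2}\le\tfrac74\opnorm{\Vector{A}_0}$, and with $\sigma_r(\Vector{A}_1^\top\Vector{A}_0)+\sigma_r(\Vector{A}_2^\top\Vector{A}_0)\ge\tfrac34\sigma_r^2(\Vector{A}_0)$ the accounting gives $\tfrac74\cdot\tfrac{8}{3}\kappa+1=\tfrac{14}{3}\kappa+1$, which exceeds $5\kappa$ at $\kappa=1$ ($\tfrac{17}{3}\approx5.67>5$). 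The fix is exactly the one you gesture at: use the symmetric decomposition
\begin{equation*}
\Vector{A}_1\Vector{T}_1-\Vector{A}_2\Vector{T}_2=\Vector{A}_1(\Vector{T}_1-\Vector{T}_2)+(\Vector{A}_1-\Vector{A}_2)\Vector{T}_2,
\end{equation*}
so the prefactor on $\|\Vector{T}_1-\Vector{T}_2\|$ is $\opnorm{\Vector{A}_1}$, which the stronger hypothesis $\opnorm{\Vector{A}_1-\Vector{A}_0}\opnorm{\Vector{A}_0}\le\tfrac12\sigma_r^2(\Vector{A}_0)$ bounds by $\tfrac32\opnorm{\Vector{A}_0}$ (rather than $\tfrac74$). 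This yields $\tfrac32\cdot\tfrac{8}{3}\kappa+1=4\kappa+1\le5\kappa$ for all $\kappa\ge1$, closing the gap in both norms. So the proposal is conceptually correct and essentially complete once this rearrangement is made, but as written the final inequality is not established.
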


\begin{lemma}[\cite{chen2020nonconvex}] 
\label{lemma-initial-property}  
If there exists a suitbale constant $ C_3 > 0 $ such that the sampling probability $ p $ satisfies  
\begin{equation}
    p \geq C_3 \frac{\mu^2 r^2 \kappa^6 \log d_1}{d_2},
\end{equation}
and we define the random event $ \mathrm{E}_{\mathrm{init}} $ as the occurrence of the following inequalities:  
\begin{align}
\opnorm{\Vector{F}_0 \Vector{O}_0 - \Vector{F}_\star} &\leq C_6 \sqrt{\frac{\mu r \kappa^6 \log d_1}{p d_2}} \sqrt{\sigma_{\max}}, \label{eqn-op-norm-initial} \\
\twonorm{\left(\Vector{F}_0^{(l)} \Vector{O}_0^{(l)} - \Vector{F}_\star\right)_{l, \cdot}} &\leq 10^2 C_6 \sqrt{\frac{\mu^2 r^2 \kappa^7 \log d_1}{p d_2^2}} \sqrt{\sigma_{\max}}, \nonumber\\
&\qquad  \qquad \forall 1 \leq l \leq d_1 + d_2, \label{eqn-l-2-norm-initial} \\
\fnorm{\Vector{F}_0 \Vector{O}_0 - \Vector{F}_0^{(l)} \Vector{R}_0^{(l)}} &\leq C_6 \sqrt{\frac{\mu^2 r^2 \kappa^{10} \log d_1}{p d_2^2}} \sqrt{\sigma_{\max}}, \nonumber\\
&\qquad \qquad \forall 1 \leq l \leq d_1 + d_2, \label{eqn-f-norm-loo-initial}
\end{align}  
where $ C_6 $ is a fixed constant, then $ \mathrm{E}_{\mathrm{init}} \subset \mathrm{E}_{\mathrm{RIP}} $, and  
\begin{align}
    \mathbb{P}\left[\mathrm{E}_{\mathrm{init}}\right] \geq 1 - (d_1 + d_2)^{-10}.
\end{align}
\end{lemma}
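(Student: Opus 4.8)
The plan is to view the spectral initialization as the best rank-$r$ approximation of the perturbed matrix
\[
\Vector{M}_{0} \;=\; \tfrac{1}{p}\mathcal{P}_{\Omega}(\Vector{M}_{\star}) \;=\; \Vector{M}_{\star} + \Vector{E}, \qquad \Vector{E} \triangleq \bigl(\tfrac{1}{p}\mathcal{P}_{\Omega}-\mathcal{I}\bigr)(\Vector{M}_{\star}),
\]
and, for each $l$, of $\Vector{M}_{0}^{(l)} = \Vector{M}_{\star} + \Vector{E}^{(l)}$ with $\Vector{E}^{(l)} = \bigl(\tfrac{1}{p}\mathcal{P}_{\Omega_{-l,\cdot}}+\mathcal{P}_{l,\cdot}\bigr)(\Vector{M}_{\star}) - \Vector{M}_{\star}$. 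Once a spectral-norm bound $\opnorm{\Vector{E}}\le\delta\sigma_{\min}$, with $\delta$ as small as the sampling rate in \eqref{eqn-assumption-p-s} permits, is in place, the three displayed estimates follow from standard matrix perturbation theory: Weyl's inequality gives $\opnorm{\Vector{\Sigma}_{0}-\Vector{\Sigma}_{\star}}\le\opnorm{\Vector{E}}$, hence $\Vector{\Sigma}_{0}\approx\Vector{\Sigma}_{\star}$, while a Davis--Kahan/Wedin argument---equivalently the packaged factor-perturbation bound already used in \cite{chen2020nonconvex,ma20121beyond}---yields $\opnorm{\Vector{F}_{0}\Vector{O}_{0}-\Vector{F}_{\star}}\lesssim\opnorm{\Vector{E}}/\sqrt{\sigma_{\min}}$, which is \eqref{eqn-op-norm-initial} after collecting the $\mu,r,\kappa$ factors.

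First I would bound $\opnorm{\Vector{E}}$. Under $\mu$-incoherence one has $\twoinfnorm{\Vector{M}_{\star}}\le\sqrt{\mu r/d_{1}}\,\sqrt{\sigma_{\max}}$ and $\|\Vector{M}_{\star}\|_{\infty}\le\mu r\sigma_{\max}/\sqrt{d_{1}d_{2}}$, and matrix Bernstein (or the concentration inputs behind Lemmas \ref{lemma-rip-subspace}--\ref{lemma-rip-all-space}) gives, on an event of probability at least $1-(d_{1}+d_{2})^{-12}$, a bound of the form $\opnorm{\Vector{E}}\lesssim\sqrt{\mu r\log d_{1}/(pd_{2})}\,\sigma_{\max}$; the sampling-rate hypothesis then forces $\delta$ below any prescribed constant. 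For the leave-one-out Frobenius bound \eqref{eqn-f-norm-loo-initial} the decisive observation is that, for $1\le l\le d_{1}$,
\[
\Vector{M}_{0}-\Vector{M}_{0}^{(l)} \;=\; \mathcal{P}_{l,\cdot}(\Vector{E}),
\]
a rank-one matrix supported on row $l$; hence $\opnorm{\Vector{M}_{0}-\Vector{M}_{0}^{(l)}}=\twonorm{(\Vector{E})_{l,\cdot}}$, which depends only on the Bernoulli indicators of row $l$ and is controlled by a vector Bernstein inequality. Feeding this into the same factor-perturbation bound applied to the pair $(\Vector{M}_{0},\Vector{M}_{0}^{(l)})$ gives $\fnorm{\Vector{F}_{0}\Vector{O}_{0}-\Vector{F}_{0}^{(l)}\Vector{R}_{0}^{(l)}}\lesssim\twonorm{(\Vector{E})_{l,\cdot}}/\sqrt{\sigma_{\min}}$, which is \eqref{eqn-f-norm-loo-initial}; the case $d_{1}+1\le l\le d_{1}+d_{2}$ is symmetric, and a union bound over $l$ costs only a polynomial factor.

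The genuinely delicate estimate is the row bound \eqref{eqn-l-2-norm-initial}, because the crude split $\twonorm{(\Vector{X}_{0}^{(l)}\Vector{O}_{0}^{(l)})_{l,\cdot}}+\twonorm{(\Vector{X}_{\star})_{l,\cdot}}$ is far too lossy. Here I would exploit two structural features of the leave-one-out problem: $\Vector{M}_{0}^{(l)}$ is independent of the sampling in row $l$, and its $l$-th row equals $(\Vector{M}_{\star})_{l,\cdot}$ exactly. Writing $\Vector{X}_{0}^{(l)}=\mathcal{T}_{r}(\Vector{M}_{0}^{(l)})\Vector{Y}_{0}^{(l)}(\Vector{\Sigma}_{0}^{(l)})^{-1}$ and $\Vector{X}_{\star}=\Vector{M}_{\star}\Vector{Y}_{\star}\Vector{\Sigma}_{\star}^{-1}$, taking $l$-th rows, and substituting $[\mathcal{T}_{r}(\Vector{M}_{0}^{(l)})]_{l,\cdot}=(\Vector{M}_{\star})_{l,\cdot}-\bigl[\Vector{M}_{0}^{(l)}-\mathcal{T}_{r}(\Vector{M}_{0}^{(l)})\bigr]_{l,\cdot}$, one obtains
\begin{multline*}
\bigl(\Vector{X}_{0}^{(l)}\Vector{O}_{0}^{(l)}-\Vector{X}_{\star}\bigr)_{l,\cdot}
= (\Vector{M}_{\star})_{l,\cdot}\bigl(\Vector{Y}_{0}^{(l)}(\Vector{\Sigma}_{0}^{(l)})^{-1}\Vector{O}_{0}^{(l)}-\Vector{Y}_{\star}\Vector{\Sigma}_{\star}^{-1}\bigr)\\
-\bigl[\Vector{M}_{0}^{(l)}-\mathcal{T}_{r}(\Vector{M}_{0}^{(l)})\bigr]_{l,\cdot}\Vector{Y}_{0}^{(l)}(\Vector{\Sigma}_{0}^{(l)})^{-1}\Vector{O}_{0}^{(l)}.
\end{multline*}
The first term I would bound by $\twonorm{(\Vector{M}_{\star})_{l,\cdot}}\le\sqrt{\mu r/d_{1}}\,\sigma_{\max}$ times the closeness of the two ``whitening'' maps, which follows from the operator-norm bound on $\Vector{F}_{0}^{(l)}\Vector{O}_{0}^{(l)}-\Vector{F}_{\star}$ (available from the previous step applied to $\Vector{M}_{0}^{(l)}$), the identity $(\Vector{F}_{0}^{(l)})^{\top}\Vector{F}_{0}^{(l)}=2\Vector{\Sigma}_{0}^{(l)}$, and Weyl's inequality; the second term by $\opnorm{\Vector{M}_{0}^{(l)}-\mathcal{T}_{r}(\Vector{M}_{0}^{(l)})}\le\opnorm{\Vector{E}^{(l)}}$ and $\opnorm{(\Vector{\Sigma}_{0}^{(l)})^{-1}}\lesssim\sigma_{\min}^{-1}$. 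A union bound over $1\le l\le d_{1}+d_{2}$, intersected with $\mathrm{E}_{\mathrm{RIP}}$, then yields all three inequalities simultaneously with probability at least $1-(d_{1}+d_{2})^{-10}$; since the entire derivation is carried out on $\mathrm{E}_{\mathrm{RIP}}$, we also get $\mathrm{E}_{\mathrm{init}}\subset\mathrm{E}_{\mathrm{RIP}}$.

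The hard part will be precisely this last estimate: one must avoid the lossy triangle inequality and instead expose the cancellation between $(\Vector{X}_{0}^{(l)}\Vector{O}_{0}^{(l)})_{l,\cdot}$ and $(\Vector{X}_{\star})_{l,\cdot}$ coming from the exact row equality in $\Vector{M}_{0}^{(l)}$, while keeping simultaneous control of the three data-dependent objects $\Vector{O}_{0}^{(l)},\Vector{\Sigma}_{0}^{(l)},\Vector{V}_{0}^{(l)}$ (in particular verifying that $\Vector{O}_{0}^{(l)}$ approximately conjugates $\Vector{\Sigma}_{0}^{(l)}$ to $\Vector{\Sigma}_{\star}$) and of the vector-Bernstein bounds on $\twonorm{(\Vector{E})_{l,\cdot}}$ and on the $l$-th row of the spectral residual, uniformly in $l$ and with small enough failure probability. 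Matching the resulting right-hand sides to the stated powers of $\kappa$ in \eqref{eqn-op-norm-initial}--\eqref{eqn-f-norm-loo-initial} is then routine but lengthy bookkeeping.
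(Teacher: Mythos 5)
The paper does not prove Lemma~\ref{lemma-initial-property}; it is imported verbatim from \cite{chen2020nonconvex}, so there is no in-paper proof for your sketch to be compared against. Judged on its own, your roadmap reproduces the architecture one would expect from that reference: a matrix-Bernstein bound on $\opnorm{\Vector{E}}$ feeding into a Wedin-type factor-perturbation estimate for \eqref{eqn-op-norm-initial}, a rank-one-perturbation argument for the leave-one-out Frobenius bound \eqref{eqn-f-norm-loo-initial}, and a row-decomposition exploiting the exact equality $(\Vector{M}_{0}^{(l)})_{l,\cdot}=(\Vector{M}_{\star})_{l,\cdot}$ for \eqref{eqn-l-2-norm-initial}. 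The decomposition you wrote out for the row bound is algebraically correct (I checked that $\Vector{X}_{0}^{(l)}=\mathcal{T}_{r}(\Vector{M}_{0}^{(l)})\Vector{Y}_{0}^{(l)}(\Vector{\Sigma}_{0}^{(l)})^{-1}$ and $\Vector{X}_{\star}=\Vector{M}_{\star}\Vector{Y}_{\star}\Vector{\Sigma}_{\star}^{-1}$).

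Two points are, however, stronger than ``routine bookkeeping.'' First, for \eqref{eqn-f-norm-loo-initial} you invoke ``the same factor-perturbation bound applied to the pair $(\Vector{M}_{0},\Vector{M}_{0}^{(l)})$,'' but what the rank-one bound $\opnorm{\Vector{M}_{0}-\Vector{M}_{0}^{(l)}}=\twonorm{(\Vector{E})_{l,\cdot}}$ controls is the difference of the full matrices, not the difference of their truncated SVDs $\mathcal{T}_{r}(\Vector{M}_{0})-\mathcal{T}_{r}(\Vector{M}_{0}^{(l)})$; passing from one to the other (and then to the balanced factors under the best rotation $\Vector{R}_{0}^{(l)}$) requires a Frobenius-norm Wedin estimate plus a nontrivial factor-perturbation lemma, and this is where the extra $\kappa$ factors in \eqref{eqn-f-norm-loo-initial} actually come from. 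Second, in the row decomposition the ``whitening map'' difference $\opnorm{\Vector{Y}_{0}^{(l)}(\Vector{\Sigma}_{0}^{(l)})^{-1}\Vector{O}_{0}^{(l)}-\Vector{Y}_{\star}\Vector{\Sigma}_{\star}^{-1}}$ is a pseudo-inverse perturbation quantity, and it must be shown that $\Vector{O}_{0}^{(l)}$, which is chosen to Procrustes-align $\Vector{F}_{0}^{(l)}$ with $\Vector{F}_{\star}$, also approximately aligns the pseudo-inverse factors; that is not automatic and again contributes $\kappa$ powers. You correctly flag this as the hard step, but it is the step that needs to be carried out, not summarized. Finally, a minor slip: $\twoinfnorm{\Vector{M}_{\star}}\leq\sqrt{\mu r/d_{1}}\,\sigma_{\max}$, not $\sqrt{\sigma_{\max}}$ (you use the correct version later).
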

 
\begin{lemma}[\cite{ma20121beyond}]
\label{lemma-bam-exist-sufficient}  
For a matrix $ \Vector{F} = [\Vector{X}^\top, \Vector{Y}^\top]^\top $, if there exists an invertible matrix $ \Vector{P} \in \mathbb{R}^{r \times r} $ satisfying $ \frac{1}{2} \leq \sigma_{\min}(\Vector{P}) \leq \sigma_{\max}(\Vector{P}) \leq \frac{3}{2} $, and a $ \delta > 0 $ such that  
\begin{equation}
    \max\left\{\fnorm{\Vector{X} \Vector{P} - \Vector{X}_\star}, \fnorm{\Vector{Y} \Vector{P}^{-\top} - \Vector{Y}_\star}\right\} \leq \delta \leq \frac{\sqrt{\sigma_{\min}}}{80},
\end{equation}
then the optimal alignment matrix $ \Vector{Q} $ between $ \Vector{F} $ and $ \Vector{F}_\star $ exists, and  
\begin{equation}
    \opnorm{\Vector{Q} - \Vector{P}} \leq \fnorm{\Vector{Q} - \Vector{P}} \leq \frac{5 \delta}{\sqrt{\sigma_{\min}}}.
\end{equation}
\end{lemma}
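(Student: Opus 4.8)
The plan is to analyze the alignment objective $g(\Vector{Q}) := \fnorm{\Vector{X}\Vector{Q} - \Vector{X}_{\star}}^{2} + \fnorm{\Vector{Y}\Vector{Q}^{-\top} - \Vector{Y}_{\star}}^{2}$ on $\text{GL}(r)$: first I would show it attains its infimum, so that the optimal alignment matrix $\Vector{Q}$ is well defined, and then bound $\opnorm{\Vector{Q} - \Vector{P}}$ by a short triangle-inequality argument anchored at $\Vector{P}$. The preliminary step is a spectral lower bound for $\Vector{X}$ and $\Vector{Y}$: from $\fnorm{\Vector{X}\Vector{P} - \Vector{X}_{\star}} \leq \delta$ we get $\opnorm{\Vector{X}\Vector{P} - \Vector{X}_{\star}} \leq \delta$, so Weyl's inequality gives $\sigma_{\min}(\Vector{X}\Vector{P}) \geq \sigma_{\min}(\Vector{X}_{\star}) - \delta = \sqrt{\sigma_{\min}} - \delta$ (recall $\Vector{X}_{\star} = \Vector{U}_{\star}\Vector{\Sigma}_{\star}^{1/2}$, so $\sigma_{\min}(\Vector{X}_{\star}) = \sqrt{\sigma_{\min}}$); since $\Vector{X} = (\Vector{X}\Vector{P})\Vector{P}^{-1}$ and $\sigma_{\max}(\Vector{P}) \leq 3/2$, this gives $\sigma_{\min}(\Vector{X}) \geq (\sqrt{\sigma_{\min}} - \delta)/\sigma_{\max}(\Vector{P}) \geq \tfrac{79}{120}\sqrt{\sigma_{\min}} > 0$ after invoking $\delta \leq \sqrt{\sigma_{\min}}/80$, and the identical bound holds for $\sigma_{\min}(\Vector{Y})$.

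For existence, I would observe that $g(\Vector{P}) \leq 2\delta^{2}$ and that any $\Vector{Q}$ with $g(\Vector{Q}) \leq 2\delta^{2}$ satisfies both $\fnorm{\Vector{Q}} \leq \fnorm{\Vector{X}\Vector{Q}}/\sigma_{\min}(\Vector{X}) \leq (\sqrt{2}\,\delta + \fnorm{\Vector{X}_{\star}})/\sigma_{\min}(\Vector{X})$ and $1/\sigma_{\min}(\Vector{Q}) \leq \fnorm{\Vector{Q}^{-1}} \leq \fnorm{\Vector{Y}\Vector{Q}^{-\top}}/\sigma_{\min}(\Vector{Y}) \leq (\sqrt{2}\,\delta + \fnorm{\Vector{Y}_{\star}})/\sigma_{\min}(\Vector{Y})$. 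Hence the sublevel set $\{\Vector{Q} \in \text{GL}(r) : g(\Vector{Q}) \leq 2\delta^{2}\}$ sits inside a compact subset of $\text{GL}(r)$ on which $g$ is continuous, so $g$ attains its minimum there; since $g > 2\delta^{2} \geq g(\Vector{P})$ outside that set, the minimizer $\Vector{Q}$ is a global minimizer over $\text{GL}(r)$, which is the optimal alignment matrix.

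For the distance bound, optimality of $\Vector{Q}$ gives $g(\Vector{Q}) \leq g(\Vector{P}) \leq 2\delta^{2}$, hence $\fnorm{\Vector{X}\Vector{Q} - \Vector{X}_{\star}} \leq \sqrt{2}\,\delta$, and therefore
\[
\sigma_{\min}(\Vector{X})\,\fnorm{\Vector{Q} - \Vector{P}} \leq \fnorm{\Vector{X}\Vector{Q} - \Vector{X}\Vector{P}} \leq \fnorm{\Vector{X}\Vector{Q} - \Vector{X}_{\star}} + \fnorm{\Vector{X}_{\star} - \Vector{X}\Vector{P}} \leq (1 + \sqrt{2})\delta .
\]
Dividing through by $\sigma_{\min}(\Vector{X}) \geq \tfrac{79}{120}\sqrt{\sigma_{\min}}$ yields $\fnorm{\Vector{Q} - \Vector{P}} \leq \tfrac{120(1+\sqrt{2})}{79}\cdot\tfrac{\delta}{\sqrt{\sigma_{\min}}} \leq \tfrac{5\delta}{\sqrt{\sigma_{\min}}}$, and $\opnorm{\Vector{Q} - \Vector{P}} \leq \fnorm{\Vector{Q} - \Vector{P}}$ is immediate, completing the proof.

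The only step that is not entirely routine is the existence argument, because $\text{GL}(r)$ is open and non-compact: one must rule out minimizing sequences that blow up in norm (excluded by $\sigma_{\min}(\Vector{X}) > 0$) or degenerate toward a singular limit (excluded by $\sigma_{\min}(\Vector{Y}) > 0$). Once both spectral lower bounds are in hand, the compact-sublevel-set argument and the closing triangle inequality are routine, and the generous slack in the final constant ($\tfrac{120(1+\sqrt{2})}{79} \approx 3.67$ versus the target $5$) makes the numerology unproblematic.
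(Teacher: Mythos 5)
The paper itself does not prove this lemma---it is imported as a known result from \cite{ma20121beyond}---so there is no in-paper proof to check your argument against; I can only assess the proof on its own terms, and it is correct. Your route is the natural (and, as far as I can tell, the same) one: Weyl's inequality together with $\frac12\leq\sigma_{\min}(\Vector{P})\leq\sigma_{\max}(\Vector{P})\leq\frac32$ gives $\sigma_{\min}(\Vector{X})$, $\sigma_{\min}(\Vector{Y})$ bounded away from $0$; this traps the sublevel set $\{\Vector{Q}\in\mathrm{GL}(r):g(\Vector{Q})\leq 2\delta^2\}$ in a compact subset of $\mathrm{GL}(r)$ (bounded above by the $\fnorm{\Vector{Q}}$ estimate and away from the singular boundary by the $\fnorm{\Vector{Q}^{-1}}$ estimate), so the minimizer exists and is global; and the triangle inequality through the anchor $\Vector{P}$, divided by $\sigma_{\min}(\Vector{X})$, yields the quantitative bound with comfortable slack in the constant.

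One small numerical slip: you assert ``the identical bound holds for $\sigma_{\min}(\Vector{Y})$,'' but from $\Vector{Y}=(\Vector{Y}\Vector{P}^{-\top})\Vector{P}^\top$ the relevant factor is $\sigma_{\min}(\Vector{P})\geq\frac12$, not $1/\sigma_{\max}(\Vector{P})\geq\frac23$, so one obtains $\sigma_{\min}(\Vector{Y})\geq\frac{79}{160}\sqrt{\sigma_{\min}}$ rather than $\frac{79}{120}\sqrt{\sigma_{\min}}$. This is harmless: only positivity of $\sigma_{\min}(\Vector{Y})$ is used in the compactness step, and the final distance bound only invokes $\sigma_{\min}(\Vector{X})$, so the claimed constant $5$ is unaffected.
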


\section{Proofs of Lemmas in Theorem \ref{thm-linear-convergence}}

\subsection{Proof of Lemma \ref{lemma-relation-o-and-r}} \label{Appedd:relation-o-and-r}
Let $\Vector{A}_{0} = \Vector{F}_{\star}$, $\Vector{A}_{1} = \Vector{F}_{k}\Vector{O}_{k}$, and $\Vector{A}_{2}=\Vector{F}_{k}^{(l)}\Vector{R}_{k}^{(l)}$. By the definitions of $\Vector{O}_{k}$ and $\Vector{O}_{k}^{(l)}$, we have
\begin{align}
    \Vector{T}_{1} &= \mathop{\arg\min}_{\Vector{O}\in\mathcal{O}_r}\fnorm{\Vector{A}_{1}\Vector{O} - \Vector{A}_{0}} = \Vector{I}_{r},\\
    \Vector{T}_{2}& =\mathop{\arg\min}_{\Vector{O}\in\mathcal{O}_r}\fnorm{\Vector{A}_{2}\Vector{O} - \Vector{A}_{0}}=\left(\Vector{R}_{k}^{(l)}\right)^{-1}\Vector{O}_{k}^{(l)},
\end{align}
where $\Vector{I}_{r}$ denotes the $r\times r$ identity matrix. Furthermore, from the definition of $\Vector{F}_{\star}$, it follows that $\opnorm{\Vector{A}_{0}}=\sqrt{2\sigma_{\max}}$ and $\sigma_{r}(\Vector{A}_{0})=\sqrt{2\sigma_{\min}}$. Combining the induction hypotheses \ref{induction-op-norm} and \ref{induction-origin-loo-relation} with assumption \eqref{eqn-assumption-p-s}, we obtain
\begin{align}
&\opnorm{\Vector{A}_{1} - \Vector{A}_{0}}\opnorm{\Vector{A}_{0}} \nonumber \\ \leq &\left(s\sigma_{\min} + \sqrt{\frac{\mu r\kappa^{6}\log d_{1}}{pd_{2}}}\right)\sqrt{\sigma_{\max}}\sqrt{2\sigma_{\max}} \nonumber\\
\leq& \sigma_{\min} = \frac{\sigma_{r}^{2}(\Vector{A}_{0})}{2}, \\
&\opnorm{\Vector{A}_{1} - \Vector{A}_{2}}\opnorm{\Vector{A}_{0}} 
\leq \fnorm{\Vector{A}_{1} - \Vector{A}_{2}}\opnorm{\Vector{A}_{0}} \\
\leq& \left(\frac{s\sigma_{\min}}{\kappa} + \sqrt{\frac{\mu^{2}r^{2}\kappa^{10}\log d_{1}}{pd_{2}^{2}}}\right)\sqrt{\sigma_{\max}}\sqrt{2\sigma_{\max}} \nonumber\\
\leq& \frac{\sigma_{\min}}{2} = \frac{\sigma_{r}^{2}(\Vector{A}_{0})}{4}.
\end{align}
The conclusion then follows directly from Lemma \ref{lemma-alignment-bound}.

\subsection{Proof of Lemma \ref{lemma-incoherence}} \label{Append:lemma-incoherence}

For $1\leq l\leq d_{1}$, by the triangle inequality we have
\begin{multline}
\twonorm{\left(\Vector{X}_{k}\Vector{O}_{k} - \Vector{X}_{\star}\right)_{l,\cdot}} \leq \twonorm{\left(\Vector{X}_{k}\Vector{O}_{k} - \Vector{X}_{k}^{(l)}\Vector{O}_{k}^{(l)}\right)_{l,\cdot}} \\+ \twonorm{\left(\Vector{X}_{k}^{(l)}\Vector{O}_{k}^{(l)} - \Vector{X}_{\star}\right)_{l,\cdot}}. \label{eqn:triangle-ineq}
\end{multline}
Moreover, Lemma \ref{lemma-relation-o-and-r} yields
\begin{multline}
\twonorm{\left(\Vector{X}_{k}\Vector{O}_{k} - \Vector{X}_{k}^{(l)}\Vector{O}_{k}^{(l)}\right)_{l,\cdot}} \leq \fnorm{\Vector{F}_{k}\Vector{O}_{k} - \Vector{F}_{k}^{(l)}\Vector{O}_{k}^{(l)}} \\ \leq 5\kappa\fnorm{\Vector{F}_{k}\Vector{O}_{k} - \Vector{F}_{k}^{(l)}\Vector{R}_{k}^{(l)}}. \label{eqn:lemma-bound}
\end{multline}

Therefore, combining induction hypotheses \ref{induction-l-2-norm} and \ref{induction-origin-loo-relation} with $\mu r\geq 1$, $\kappa \geq 1$ and $d_{1}\geq d_{2}$, we obtain
\begin{align}
&\twonorm{\left(\Vector{X}_{k}\Vector{O}_{k} - \Vector{X}_{\star}\right)_{l,\cdot}} \nonumber\\
\leq &5\kappa\fnorm{\Vector{F}_{k}\Vector{O}_{k} - \Vector{F}_{k}^{(l)}\Vector{R}_{k}^{(l)}} + \twonorm{\left(\Vector{X}_{k}^{(l)}\Vector{O}_{k}^{(l)} - \Vector{X}_{\star}\right)_{l,\cdot}} \nonumber\\
\leq &\bigg((10^{3} + 5)s\kappa^{2}\sigma_{\min} + \nonumber\\
&~~~~~~~(10^{2} + 5)\sqrt{\frac{\mu^{2}r^{2}\kappa^{14}\log d_{1}}{pd_{2}}}\bigg)\sqrt{\frac{\mu r\sigma_{\max}}{d_{2}}}, \label{eqn:final-X-bound}
\end{align}
which holds for all $1\leq l\leq d_{1}$. The upper bound for $\twoinfnorm{\Vector{Y}_{k}\Vector{O}_{k} - \Vector{Y}_{\star}}$ can be derived similarly.

\subsection{Proof of Lemma \ref{coro-bounded-2-inf-norm}} \label{Append:coro-bounded-2-inf-norm}

First observe that
\begin{align}
\twoinfnorm{\Vector{X}_{k}} &\leq \twoinfnorm{\Vector{X}_{k}\Vector{O}_{k}}\opnorm{\Vector{O}_{k}^\top}  \nonumber\\
&\leq \twoinfnorm{\Vector{X}_{k}\Vector{O}_{k} - \Vector{X}_{\star}} + \twoinfnorm{\Vector{X}_{\star}}. \label{eqn:X-decomp}
\end{align}
From the definition of $\Vector{X}_{\star}$, we have
\begin{align}
\twoinfnorm{\Vector{X}_{\star}} &\leq \twoinfnorm{\Vector{U}_{\star}}\opnorm{\Vector{\Sigma}_{\star}^{\frac{1}{2}}} \leq \sqrt{\frac{\mu r\sigma_{\max}}{d_{1}}}. \label{eqn:Xstar-bound}
\end{align}
Under suitable $C_3$ and $C_4$ in Eq. \eqref{eqn-assumption-p-s}, combining these inequalities with Lemma \ref{lemma-incoherence} and Eq. \eqref{eqn-assumption-p-s} yields
\begin{align}
\twoinfnorm{\Vector{X}_{k}} &\leq \frac{17}{16}\sqrt{\frac{\mu r\sigma_{\max}}{d_{1}}}. \label{eqn:X-final}
\end{align}

On the other hand, the triangle inequality gives
\begin{align}
\twoinfnorm{\Vector{X}_{k}\Vector{Q}_{k} - \Vector{X}_{\star}} &\leq \twoinfnorm{\Vector{X}_{k}}\opnorm{\Vector{Q}_{k}} + \twoinfnorm{\Vector{X}_{\star}}. \label{eqn:XQ-decomp}
\end{align}
From induction hypothesis \ref{induction-bam-exist}, we obtain
\begin{align}
\opnorm{\Vector{Q}_{k}} &\leq \opnorm{\Vector{O}_{k}} + \opnorm{\Vector{Q}_{k} - \Vector{O}_{k}} \leq 1 + \frac{1}{400}, \label{eqn:Q-bound}
\end{align}
and consequently
\begin{align}
\twoinfnorm{\Vector{X}_{k}\Vector{Q}_{k} - \Vector{X}_{\star}} &\leq \frac{401}{400}\frac{17}{16}\sqrt{\frac{\mu r\sigma_{\max}}{d_{1}}} + \sqrt{\frac{\mu r\sigma_{\max}}{d_{1}}} \nonumber \\
&\leq \frac{5}{2}\sqrt{\frac{\mu r\sigma_{\max}}{d_{1}}}. \label{eqn:XQ-final}
\end{align}

Repeating this derivation and noting that
\begin{align}
\opnorm{\Vector{Q}_{k}^{-\top}} &= \opnorm{\Vector{Q}_{k}^{-1}} = \frac{1}{\sigma_{\min}(\Vector{Q_{k}})}, \label{eqn:Qinv-norm} \\
\sigma_{\min}(\Vector{Q}_{k}) &\geq \sigma_{\min}(\Vector{O}_{k}) - \opnorm{\Vector{Q}_{k} - \Vector{O}_{k}} \geq 1 - \frac{1}{400}, \label{eqn:Q-sigma-min}
\end{align}
we obtain the corresponding upper bounds for $\twoinfnorm{\Vector{Y}_{k}}$ and $\twoinfnorm{\Vector{Y}_{k}\Vector{Q}_{k}^{-\top} - \Vector{Y}_{\star}}$.

\subsection{Proof of Lemma \ref{lemma-small-balance}} \label{Apend:lemma-small-balance}

Let $\Vector{B}_{k} \triangleq \Vector{X}_{k}^\top\Vector{X}_{k} - \Vector{Y}_{k}^\top\Vector{Y}_{k}$. From the iteration formulas \eqref{eqn-gd-mc_1} and \eqref{eqn-gd-mc_2}, we have
\begin{align}
 &\Vector{X}_{k+1}^\top \Vector{X}_{k+1} = \Vector{X}_{k}^\top\Vector{X}_{k} + s^{2}\nabla_{\Vector{X}}f(\Vector{X}_{k}, \Vector{Y}_{k})^\top\nabla_{\Vector{X}}f(\Vector{X}_{k}, \Vector{Y}_{k}) \nonumber\\
 & \qquad- s(\Vector{X}_{k}^\top\nabla_{\Vector{X}}f(\Vector{X}_{k}, \Vector{Y}_{k}) + \nabla_{\Vector{X}}f(\Vector{X}_{k}, \Vector{Y}_{k})^\top\Vector{X}_{k}), \\
&\Vector{Y}_{k+1}^\top \Vector{Y}_{k+1} = \Vector{Y}_{k}^\top\Vector{Y}_{k} + s^{2}\nabla_{\Vector{Y}}f(\Vector{X}_{k}, \Vector{Y}_{k})^\top\nabla_{\Vector{Y}}f(\Vector{X}_{k}, \Vector{Y}_{k}) \nonumber\\
& \qquad- s\left(\Vector{Y}_{k}^\top\nabla_{\Vector{Y}}f(\Vector{X}_{k}, \Vector{Y}_{k}) + \nabla_{\Vector{Y}}f(\Vector{X}_{k}, \Vector{Y}_{k})^\top\Vector{Y}_{k}\right).
\end{align}

Thus, the relationship between $\Vector{B}_{k+1}$ and $\Vector{B}_{k}$ is
\begin{align}
\Vector{B}_{k+1} = \Vector{B}_{k} - s \Vector{C}_{k} + s^{2}\Vector{D}_{k},
\end{align}
where
\begin{align}
\Vector{C}_{k} &= \Vector{X}_{k}^\top\nabla_{\Vector{X}}f(\Vector{X}_{k}, \Vector{Y}_{k}) + \nabla_{\Vector{X}}f(\Vector{X}_{k}, \Vector{Y}_{k})^\top\Vector{X}_{k} \nonumber \\
&\quad + \Vector{Y}_{k}^\top\nabla_{\Vector{Y}}f(\Vector{X}_{k}, \Vector{Y}_{k}) + \nabla_{\Vector{Y}}f(\Vector{X}_{k}, \Vector{Y}_{k})^\top\Vector{Y}_{k}, \\
\Vector{D}_{k} &= \nabla_{\Vector{X}}f(\Vector{X}_{k}, \Vector{Y}_{k})^\top\nabla_{\Vector{X}}f(\Vector{X}_{k}, \Vector{Y}_{k}) \nonumber \\
&\quad + \nabla_{\Vector{Y}}f(\Vector{X}_{k}, \Vector{Y}_{k})^\top\nabla_{\Vector{Y}}f(\Vector{X}_{k}, \Vector{Y}_{k}).
\end{align}

Substituting $\nabla f(\Vector{X}_{k}, \Vector{Y}_{k})$ into $\Vector{C}_{k}$ verifies that $\Vector{C}_{k}\equiv0$. 

By the triangle inequality, we obtain
\begin{align}
\fnorm{\Vector{D}_{k}} &\leq \fnorm{p^{-1}\mathcal{P}_{\Omega}(\Vector{X}_{k}\Vector{Y}_{k}^\top- \Vector{M}_{\star})\Vector{Y}_{k}}^{2} \nonumber \\
&\quad + \fnorm{p^{-1}\mathcal{P}_{\Omega}(\Vector{X}_{k}\Vector{Y}_{k}^\top- \Vector{M}_{\star})^\top\Vector{X}_{k}}^{2}.
\end{align}

Note that
\begin{align}
&\fnorm{p^{-1}\mathcal{P}_{\Omega}(\Vector{X}_{k}\Vector{Y}_{k}^\top- \Vector{M}_{\star})\Vector{Y}_{k}}^{2} \nonumber \\
&\leq 2\underbrace{\fnorm{\left(p^{-1}\mathcal{P}_{\Omega} - \mathcal{I}\right)(\Vector{X}_{k}\Vector{Y}_{k}^\top- \Vector{M}_{\star})\Vector{Y}_{k}}^{2}}_{\gamma_{1}} \nonumber \\
&\quad + 2\underbrace{\fnorm{(\Vector{X}_{k}\Vector{Y}_{k}^\top- \Vector{M}_{\star})\Vector{Y}_{k}}^{2}}_{\gamma_{2}}.
\end{align}

For $\gamma_{1}$, we have
\begin{align}
\sqrt{\gamma_{1}} &=\fnorm{\Vector{\Lambda}_k}
= \left\langle \Vector{\Lambda}_k, \widehat{\Vector{X}}_k\right\rangle,
\end{align}
where
\begin{align}
\Vector{\Lambda}_k &= \left(p^{-1}\mathcal{P}_{\Omega} - \mathcal{I}\right)\Big(\big(\Vector{X}_{k}\Vector{Q}_{k}\big)\left(\Vector{Y}_{k}\Vector{Q}_{k}^{-\top}\right)^\top- \Vector{M}_{\star}\Big) \nonumber\\
&\qquad \cdot \Vector{Y}_{k}\Vector{Q}_{k}^{-\top}\left(\Vector{Q}_{k}^\top\Vector{Q}_{k}\right),\\
\widehat{\Vector{X}}_k &= \frac{\Vector{\Lambda}_k}{\fnorm{\Vector{\Lambda}_k}}.
\end{align}
So $\|\widehat{\Vector{X}}_k\|_{\mathrm{F}}=1$. For convenience, let
\begin{align}
\overline{\Vector{X}}_k&=\Vector{X}_{k}\Vector{Q}_{k}, \quad \overline{\Vector{Y}}_k=\Vector{Y}_{k}\Vector{Q}_{k}^{-\top}, \quad \Vector{\Gamma}_k=\Vector{Q}_{k}^\top\Vector{Q}_{k}, \nonumber \\
\Matrix{\Pi}^k_{\Vector{X}} &=\overline{\Vector{X}}_k-\Vector{X}_{\star}, \quad \Matrix{\Pi}^k_{\Vector{Y}} = \overline{\Vector{Y}}_k-\Vector{Y}_{\star}.
\end{align}

From Hypothesis \ref{hypothesis-induction}\ref{induction-bam-exist}, we have
\begin{align}
\opnorm{\Vector{\Gamma}_k - \Vector{I}_{r}} &\leq \opnorm{\Vector{Q}_{k}^\top\Vector{Q}_{k} - \Vector{Q}_{k}^\top\Vector{O}_{k}} + \opnorm{\Vector{Q}_{k}^\top\Vector{O}_{k} - \Vector{O}_{k}^\top\Vector{O}_{k}} \nonumber \\
&\leq \frac{3}{400}.
\end{align}
Thus, we get
\begin{align} \label{eqn-op-gamma}
\opnorm{\Vector{\Gamma}_k} \leq 1 + \frac{3}{400} \leq \frac{3}{2}.
\end{align}

By using the fact that $\overline{\Vector{X}}_k \overline{\Vector{Y}}_k^\top- \Vector{M}_{\star}=\overline{\Vector{X}}_k(\Matrix{\Pi}^k_{\Vector{Y}})^\top+\Matrix{\Pi}^k_{\Vector{X}}\Vector{Y}_{\star}^\top$, we decompose $\sqrt{\gamma_{1}}$ as follows
{\begin{align} 
\sqrt{\gamma_{1}} &= \left\langle\left(p^{-1}\mathcal{P}_{\Omega} - \mathcal{I}\right)\left(\overline{\Vector{X}}_k \overline{\Vector{Y}}_k^\top- \Vector{M}_{\star}\right)\Vector{Y}\Vector{\Gamma}_k, \widehat{\Vector{X}}_k\right\rangle \nonumber \\
&\leq \underbrace{\left\vert\left\langle\left(p^{-1}\mathcal{P}_{\Omega} - \mathcal{I}\right)\left(\Matrix{\Pi}^k_{\Vector{X}}\Vector{Y}_{\star}^\top\right), \widehat{\Vector{X}}_k\Vector{\Gamma}_k\Vector{Y}_{\star}^\top\right\vert\right\rangle}_{\gamma_{11}} \nonumber \\
&\quad + \underbrace{\left\vert\left\langle\left(p^{-1}\mathcal{P}_{\Omega} - \mathcal{I}\right)\left(\Matrix{\Pi}^k_{\Vector{X}}\Vector{Y}_{\star}^\top\right), \widehat{\Vector{X}}_k\Vector{\Gamma}_k(\Matrix{\Pi}^k_{\Vector{Y}})^\top\right\vert\right\rangle}_{\gamma_{12}} \nonumber \\
&\quad + \underbrace{\left\vert\left\langle\left(p^{-1}\mathcal{P}_{\Omega} - \mathcal{I}\right)\left(\overline{\Vector{X}}_k(\Matrix{\Pi}^k_{\Vector{Y}})^\top\right), \widehat{\Vector{X}}_k\Vector{\Gamma}_k\overline{\Vector{Y}}_k^\top\right\vert\right\rangle}_{\gamma_{13}}.
\end{align}

From Lemma \ref{lemma-rip-subspace}, we have
\begin{align} \label{eqn-gamma-11}
\gamma_{11} &\leq C_{1}\sqrt{\frac{\mu r\log d_{1}}{pd_{2}}}\fnorm{\Matrix{\Pi}^k_{\Vector{X}}\Vector{Y}_{\star}^\top}\fnorm{\widehat{\Vector{X}}_k\Vector{\Gamma}_k\Vector{Y}_{\star}^\top} \nonumber \\
&\leq C_{1}\sqrt{\frac{\mu r\log d_{1}}{pd_{2}}}\opnorm{\Vector{Y}_{\star}}^{2}\opnorm{\Vector{\Gamma}_k}\fnorm{\Matrix{\Pi}^k_{\Vector{X}}} \nonumber \\
&\leq \frac{3C_{1}\sigma_{\max}}{2}\sqrt{\frac{\mu r\log d_{1}}{pd_{2}}}\fnorm{\Matrix{\Pi}^k_{\Vector{X}}},
\end{align}
where the last inequality follows from \eqref{eqn-op-gamma}. From Lemma \ref{lemma-rip-all-space}, we obtain
\begin{align} \label{eqn-gamma-12}
\gamma_{12} &\leq C_{2}\sqrt{\frac{d_{1}}{p}}\twoinfnorm{\Matrix{\Pi}^k_{\Vector{X}}}\fnorm{\widehat{\Vector{X}}_k\Vector{\Gamma}_k}\fnorm{\Matrix{\Pi}^k_{\Vector{Y}}}\twoinfnorm{\Vector{Y}_{\star}} \nonumber \\
&\leq \frac{15C_{2}\mu r\sigma_{\max}}{4\sqrt{pd_{2}}}\fnorm{\Matrix{\Pi}^k_{\Vector{Y}}},
\end{align}
where the second inequality follows from Lemma \ref{coro-bounded-2-inf-norm} and \eqref{eqn-op-gamma}. Similarly for $\gamma_{13}$, we get
\begin{align}
\gamma_{13} &\leq C_{2}\sqrt{\frac{d_{1}}{p}}\twoinfnorm{\Vector{X}} \fnorm{\widehat{\Vector{X}}_k\Vector{\Gamma}_k}\fnorm{\Matrix{\Pi}^k_{\Vector{Y}}}\twoinfnorm{\overline{\Vector{Y}}_k} \nonumber \\
&\leq \frac{27C_{2}\mu r\sigma_{\max}}{8\sqrt{pd_{2}}}\fnorm{\Matrix{\Pi}^k_{\Vector{Y}}}.
\label{eqn-gamma-13}
\end{align}

Combining inequalities \eqref{eqn-gamma-11}, \eqref{eqn-gamma-12}, and \eqref{eqn-gamma-13} yields
\begin{align}
\gamma_{1} &\leq \Bigg(\frac{3C_{1}\sigma_{\max}}{2}\sqrt{\frac{\mu r\log d_{1}}{pd_{2}}}\fnorm{\Matrix{\Pi}^k_{\Vector{X}}} \nonumber \\
&\qquad \qquad \qquad \qquad \quad + \frac{57C_{2}\mu r\sigma_{\max}}{8\sqrt{pd_{2}}}\fnorm{\Matrix{\Pi}^k_{\Vector{Y}}}\Bigg)^{2} \nonumber\\
&\leq \frac{9C_{1}^{2}\sigma_{\max}^{2}\mu r\log d_{1}}{2pd_{2}}\fnorm{\Matrix{\Pi}^k_{\Vector{X}}}^{2} + \frac{57^{2}C_{2}^{2}\mu^{2} r^{2}\sigma_{\max}^{2}}{32pd_{2}}\fnorm{\Matrix{\Pi}^k_{\Vector{Y}}}^{2}.
\end{align}

Thus, from the assumption on $p$ in \eqref{eqn-assumption-p-s}, we have
\begin{align}
\gamma_{1} \leq \sigma_{\max}^{2}\left(\fnorm{\Matrix{\Pi}^k_{\Vector{X}}}^{2} + \fnorm{\Matrix{\Pi}^k_{\Vector{Y}}}^{2}\right).
\end{align}

From the definition of $\Vector{Y}_{\star}$, we have $\opnorm{\Vector{Y}_{\star}}=\sqrt{\sigma_{\max}}$. From Hypothesis \ref{hypothesis-induction}\ref{induction-op-norm}, we get
\begin{align}
\opnorm{\Vector{Y}_{k}} &\leq \opnorm{\Vector{Y}_{k} - \Vector{Y}_{\star}} + \opnorm{\Vector{Y}_{\star}} \nonumber \\
&\leq \opnorm{\Vector{F}_{k} - \Vector{F}_{\star}} + \opnorm{\Vector{Y}_{\star}} \leq \frac{5\sqrt{\sigma_{\max}}}{4}.
\end{align}
Thus we have
\begin{align}
\opnorm{\overline{\Vector{X}}_k} \leq \opnorm{\Vector{Y}_{k}}\opnorm{\Vector{Q}_{k}^\top} \leq 2\sqrt{\sigma_{\max}}.
\end{align}
Similarly, $\opnorm{X}\leq2\sqrt{\sigma_{\max}}$. For $\gamma_{2}$, we have
\begin{align}
\gamma_{2} &= \fnorm{\left(\overline{\Vector{X}}_k\overline{\Vector{Y}}_k^\top- \Vector{M}_{\star}\right)\overline{\Vector{Y}}_k\Vector{\Gamma}_k}^{2} \nonumber \\
&\leq \fnorm{\overline{\Vector{X}}_k (\Matrix{\Pi}^k_{\Vector{Y}})^\top+ \Matrix{\Pi}^k_{\Vector{X}}\Vector{Y}_{\star}^\top}^{2}\opnorm{\overline{\Vector{Y}}_k}^{2}\opnorm{\Vector{\Gamma}_k}^{2} \nonumber \\
&\leq 36\sigma_{\max}^{2}\left(\fnorm{\Matrix{\Pi}^k_{\Vector{X}}}^{2} + \fnorm{\Matrix{\Pi}^k_{\Vector{Y}}}^{2}\right).
\end{align}

Therefore we obtain
\begin{multline}
\fnorm{p^{-1}\mathcal{P}_{\Omega}(\Vector{X}_{k}\Vector{Y}_{k}^\top- \Vector{M}_{\star})\Vector{Y}_{k}}^{2} \\\leq 37\sigma_{\max}^{2}\left(\fnorm{\Matrix{\Pi}^k_{\Vector{X}}}^{2} + \fnorm{\Matrix{\Pi}^k_{\Vector{Y}}}^{2}\right), 
\end{multline}
\begin{multline}
\fnorm{p^{-1}\mathcal{P}_{\Omega}(\Vector{X}_{k}\Vector{Y}_{k}^\top- \Vector{M}_{\star})^\top\Vector{X}_{k}}^{2} \\\leq 37\sigma_{\max}^{2}\left(\fnorm{\Matrix{\Pi}^k_{\Vector{X}}}^{2} + \fnorm{\Matrix{\Pi}^k_{\Vector{Y}}}^{2}\right).
\end{multline}

Thus, we have
\begin{align}
\fnorm{\Vector{B}_{k}} &\leq s^{2}\sum_{t=0}^{k-1}\fnorm{\Vector{D}_{t}} \nonumber \\
&\leq 74s^{2}\sigma_{\max}^{2}\sum_{t=0}^{k-1}\left(1 - \frac{s\sigma_{\min}}{100}\right)^{2t}\dist{\Vector{F}_{0}}{\Vector{F}_{\star}}^{2} \nonumber \\
&\leq 7400\kappa s\sigma_{\max}\dist{\Vector{F}_{0}}{\Vector{F}_{\star}}^{2} \leq \frac{s\sigma_{\min}^{2}}{10^{2}\kappa},
\end{align}
where the first inequality holds because the spectral initialization leads to zero initial balancing term $\Vector{B}_{0} = \Vector{X}_{0}^\top\Vector{X}_{0} - \Vector{Y}_{0}^\top\Vector{Y}_{0} = \Vector{\Sigma}_{0} - \Vector{\Sigma}_{0} = 0$, 
and the last inequality follows from \eqref{eqn-f-norm-initial}. Therefore, the conclusion holds.

\subsection{Proof of Lemma \ref{lemma-linear-convergence}} \label{append:lemma-linear-convergence}

By the definition of $\dist{\Vector{F}_{k+1}}{\Vector{F}_{\star}}$, we have
\begin{multline}
\dist{\Vector{F}_{k+1}}{\Vector{F}_{\star}} \\ \leq \fnorm{\Vector{X}_{k+1}\Vector{Q}_{k} - \Vector{X}_{\star}}^{2} + \fnorm{\Vector{Y}_{k+1}\Vector{Q}_{k}^{-\top} - \Vector{Y}_{\star}}^{2}.
\end{multline}
From the update rules \eqref{eqn-gd-mc_1} and \eqref{eqn-gd-mc_2}, it follows that
\begin{align}
&\fnorm{\Vector{X}_{k+1}\Vector{Q}_{k} - \Vector{X}_{\star}}^{2} \nonumber\\
= & \fnorm{\left(\Vector{X}_{k} - \frac{s}{p}\mathcal{P}_{\Omega}\left(\Vector{X}_{k}\Vector{Y}_{k}^\top- \Vector{M}_{\star}\right)\Vector{Y}_{k}\right)\Vector{Q}_{k} - \Vector{X}_{\star}}^{2} \nonumber \\
= & \Big{\|}\Vector{X}_{k}\Vector{Q}_{k} - \Vector{X}_{\star} - s\big(\overline{\Vector{X}}_k \overline{\Vector{Y}}_k^\top- \Vector{M}_{\star}\big)\overline{\Vector{Y}}_k \Vector{\Gamma}_k \nonumber\\
& \qquad - s\big(p^{-1}\mathcal{P}_{\Omega} - \mathcal{I}\big)\big(\overline{\Vector{X}}_k \overline{\Vector{Y}}_k^\top-\Vector{M}_{\star} \big) \overline{\Vector{Y}}_k \Vector{\Gamma}_k \Big{\|}_{\mathrm{F}}^{2},
\end{align}
where
\begin{align*}
\overline{\Vector{X}}_k&=\Vector{X}_{k}\Vector{Q}_{k}, \quad \overline{\Vector{Y}}_k=\Vector{Y}_{k}\Vector{Q}_{k}^{-\top}, \quad \Vector{\Gamma}_k=\Vector{Q}_{k}^\top\Vector{Q}_{k}, \nonumber\\
\Matrix{\Delta}^k_{\Vector{X}}&= \overline{\Vector{X}}_k-\Vector{X}_{\star}, \quad\Matrix{\Delta}^k_{\Vector{Y}}= \overline{\Vector{Y}}_k-\Vector{Y}_{\star}.
\end{align*}
Using these notations, we derive 
\begin{align*}
&\fnorm{\Vector{X}_{k+1}\Vector{Q}_{k} - \Vector{X}_{\star}}^{2} \nonumber\\
= & \fnorm{\Matrix{\Delta}_{\Vector{X}} - s\left(\overline{\Vector{X}}_k\overline{\Vector{Y}}_k^\top- \Vector{M}_{\star}\right)\overline{\Vector{Y}}_k\Vector{\Gamma}_k}^{2} \\
& - 2s \big{\langle} \Matrix{\Delta}_{\Vector{X}} - s\left(\overline{\Vector{X}}_k\overline{\Vector{Y}}_k^\top- \Vector{M}_{\star}\right)\overline{\Vector{Y}}_k\Vector{\Gamma}_k, \nonumber \\
&\qquad \qquad \left(p^{-1}\mathcal{P}_{\Omega} - \mathcal{I}\right)\left(\overline{\Vector{X}}_k\overline{\Vector{Y}}_k^\top- \Vector{M}_{\star}\right)\overline{\Vector{Y}}_k\Vector{\Gamma}_k \big{\rangle} \\
& + s^{2}\fnorm{\left(p^{-1}\mathcal{P}_{\Omega} - \mathcal{I}\right)\left(\overline{\Vector{X}}_k\overline{\Vector{Y}}_k^\top- \Vector{M}_{\star}\right)\overline{\Vector{Y}}_k\Vector{\Gamma}_k}^{2}.
\end{align*}
Noting that
\begin{multline*}
\overline{\Vector{X}}_k\overline{\Vector{Y}}_k^\top- \Vector{M}_{\star} = \Matrix{\Delta}_{\Vector{X}}\overline{\Vector{Y}}_k^\top+ \Vector{X}_{\star}\Matrix{\Delta}_{\Vector{Y}}^\top= \Matrix{\Delta}_{\Vector{X}}\Vector{Y}_{\star}^\top+ \overline{\Vector{X}}_k\Matrix{\Delta}_{\Vector{Y}}^\top,
\end{multline*}
we decompose the expression into Eq. \eqref{eq:x_update_alpha}. Similarly, for the $\Vector{Y}$-update, we have Eq. \eqref{eq:y_update_beta}.
\begin{table*}
\begin{align} \label{eq:x_update_alpha}
\fnorm{\Vector{X}_{k+1}\Vector{Q}_{k} - \Vector{X}_{\star}}^{2} 
= & \underbrace{\fnorm{\Matrix{\Delta}_{\Vector{X}} - s\left(\Vector{X}\Vector{Y}^\top- \Vector{M}_{\star}\right)\Vector{Y}\Vector{\Gamma}_k}^{2}}_{\alpha_{1}}  - 2s\underbrace{\left\langle\Matrix{\Delta}_{\Vector{X}}\left(\Vector{I}_{r} - s \Vector{Y}^\top\Vector{Y}\Vector{\Gamma}_k\right), \left(p^{-1}\mathcal{P}_{\Omega} - \mathcal{I}\right)\left(\Vector{X}\Vector{Y}^\top- \Vector{M}_{\star}\right)\Vector{Y}\Vector{\Gamma}_k\right\rangle}_{\alpha_{2}} \nonumber\\
& + 2s^{2}\underbrace{\left\langle \Vector{X}_{\star}\Matrix{\Delta}_{\Vector{Y}}^\top\Vector{Y}\Vector{\Gamma}_k, \left(p^{-1}\mathcal{P}_{\Omega} - \mathcal{I}\right)\left(\Vector{X}\Vector{Y}^\top- \Vector{M}_{\star}\right)\Vector{Y}\Vector{\Gamma}_k\right\rangle}_{\alpha_{3}}  + s^{2}\underbrace{\fnorm{\left(p^{-1}\mathcal{P}_{\Omega} - \mathcal{I}\right)\left(\Vector{X}\Vector{Y}^\top- \Vector{M}_{\star}\right)\Vector{Y}\Vector{\Gamma}_k}^{2}}_{\alpha_{4}}.
\end{align}
\vspace{-0.1in}
\begin{align} \label{eq:y_update_beta}
&\fnorm{\Vector{Y}_{k+1}\Vector{Q}_{k}^{-\top} - \Vector{Y}_{\star}}^{2} 
=  \underbrace{\fnorm{\Matrix{\Delta}_{\Vector{Y}} - s\left(\Vector{X}\Vector{Y}^\top- \Vector{M}_{\star}\right)^\top\Vector{X}\Vector{\Gamma}_k^{-1}}^{2}}_{\beta_{1}} - 2s\underbrace{\left\langle\Matrix{\Delta}_{\Vector{Y}}\left(\Vector{I}_{r} - s \Vector{X}^\top\Vector{X}\Vector{\Gamma}_k^{-1}\right), \left(p^{-1}\mathcal{P}_{\Omega} - \mathcal{I}\right)\left(\Vector{X}\Vector{Y}^\top- \Vector{M}_{\star}\right)^{-\top}\Vector{X}\Vector{\Gamma}_k^{-1}\right\rangle}_{\beta_{2}} \nonumber\\
& \qquad \qquad \qquad \qquad + 2s^{2}\underbrace{\left\langle \Vector{Y}_{\star}\Matrix{\Delta}_{\Vector{X}}^\top\Vector{X}\Vector{\Gamma}_k^{-1}, \left(p^{-1}\mathcal{P}_{\Omega} - \mathcal{I}\right)\left(\Vector{X}\Vector{Y}^\top- \Vector{M}_{\star}\right)^\top\Vector{X}\Vector{\Gamma}_k^{-1}\right\rangle}_{\beta_{3}} + s^{2}\underbrace{\fnorm{\left(p^{-1}\mathcal{P}_{\Omega} - \mathcal{I}\right)\left(\Vector{X}\Vector{Y}^\top- \Vector{M}_{\star}\right)^\top\Vector{X}\Vector{\Gamma}_k^{-1}}^{2}}_{\beta_{4}}.
\end{align}
\vspace{-0.1in}
\begin{multline}
\label{eq-alpha-2}
|\alpha_{2}| = \left\vert\left\langle\Matrix{\Delta}_{\Vector{X}}\left(\Vector{I}_{r} - s \overline{\Vector{Y}}_k^{\top}\overline{\Vector{Y}}_k\Vector{\Gamma}\right), \left(p^{-1}\mathcal{P}_{\Omega} - \mathcal{I}\right)\left(\Matrix{\Delta}_{\Vector{X}}\Vector{Y}_{\star}^{\top} + \overline{\Vector{X}}_k\Matrix{\Delta}_{\Vector{Y}}^{\top}\right)\overline{\Vector{Y}}_k\Vector{\Gamma}\right\rangle\right\vert
\leq \underbrace{\left\vert\left\langle\Matrix{\Delta}_{\Vector{X}}\left(\Vector{I}_{r} - s \overline{\Vector{Y}}_k^{\top}\overline{\Vector{Y}}_k\Vector{\Gamma}\right), \left(p^{-1}\mathcal{P}_{\Omega} - \mathcal{I}\right)\left(\Matrix{\Delta}_{\Vector{X}}\Vector{Y}_{\star}^{\top} \right)\Vector{Y}_{\star}\Vector{\Gamma}\right\rangle\right\vert}_{\alpha_{21}} \\
+ \underbrace{\left\vert\left\langle\Matrix{\Delta}_{\Vector{X}}\left(\Vector{I}_{r} - s \overline{\Vector{Y}}_k^{\top}\overline{\Vector{Y}}_k\Vector{\Gamma}\right), \left(p^{-1}\mathcal{P}_{\Omega} - \mathcal{I}\right)\left(\Matrix{\Delta}_{\Vector{X}}\Vector{Y}_{\star}^{\top} \right)\Matrix{\Delta}_{\Vector{Y}}\Vector{\Gamma}\right\rangle\right\vert}_{\alpha_{22}} + \underbrace{\left\vert\left\langle\Matrix{\Delta}_{\Vector{X}}\left(\Vector{I}_{r} - s \overline{\Vector{Y}}_k^{\top}\overline{\Vector{Y}}_k\Vector{\Gamma}\right), \left(p^{-1}\mathcal{P}_{\Omega} - \mathcal{I}\right)\left(\overline{\Vector{X}}_k\Matrix{\Delta}_{\Vector{Y}}^{\top}\right)\overline{\Vector{Y}}_k\Vector{\Gamma}\right\rangle\right\vert}_{\alpha_{23}}.
\end{multline}
\vspace{-0.1in}
\end{table*}



By Lemma \ref{lemma-initial-property} and induction hypotheses \ref{induction-linear-convergence}, \ref{induction-bam-exist}, there exists sufficiently large $C_{1}$ such that when $p\geq\frac{\mu r^{2}\kappa^{10}\log d_{1}}{d_{2}}$, the conditions of Lemma \ref{lemma-mf-linear-convergence} hold with high probability. Thus for $0<s\leq\frac{1}{24\sigma_{\max}}$, we have
\begin{equation}
\label{eqn-alpha-1}
\alpha_{1} + \beta_{1} \leq \left(1 - \frac{s\sigma_{\min}}{24}\right)\dist{\Vector{F}_{k}}{F_{\star}}^{2}.
\end{equation}
For $\alpha_{2}$, it can be split as \eqref{eq-alpha-2} shows. 

By Lemma \ref{lemma-rip-subspace}, it holds that
\begin{align*}
\alpha_{21} \leq & C_{1}\sqrt{\frac{\mu r\log d_{1}}{d_{2}}}\fnorm{\Matrix{\Delta}_{\Vector{X}}\Vector{Y}_{\star}}\fnorm{\Matrix{\Delta}_{\Vector{X}}\left(\Vector{I}_{r} - s \overline{\Vector{Y}}_k^{\top}\overline{\Vector{Y}}_k\Vector{\Gamma}\right)\Vector{\Gamma}\Vector{Y}_{\star}} \\
\leq & C_{1}\sqrt{\frac{\mu r\log d_{1}}{d_{2}}}\opnorm{\Vector{Y}_{\star}}^{2}\opnorm{\Vector{\Gamma}}\opnorm{\Vector{I}_{r} - s \overline{\Vector{Y}}_k^{\top}\overline{\Vector{Y}}_k\Vector{\Gamma}}\fnorm{\Matrix{\Delta}_{\Vector{X}}}^{2}.
\end{align*}
By induction hypothesis \ref{induction-op-norm}, we have 
\begin{multline*}
\opnorm{\Vector{Y}_{k}} \leq \opnorm{\Vector{Y}_{k} - \Vector{Y}_{\star}} + \opnorm{\Vector{Y}_{\star}} \\
\leq \opnorm{\Vector{F}_{k} - \Vector{F}_{\star}} + \opnorm{\Vector{Y}_{\star}} \leq \frac{5\sqrt{\sigma_{\max}}}{4}~.
\end{multline*}
Hence
\begin{equation}
\label{eqn-bounded-op-norm}
\opnorm{\overline{\Vector{Y}}_k} \leq \opnorm{\Vector{Y}_{k}}\opnorm{\Vector{Q}_{k}^{\top}} \leq 2\sqrt{\sigma_{\max}}~.
\end{equation}
Similarly we can know $\opnorm{\overline{\Vector{X}}_k}\leq2\sqrt{\sigma_{\max}}$.
When $0<s\leq\frac{8}{27\sigma_{\max}}$, we have the upper bound of $\alpha_{21}$ by \eqref{eqn-bounded-op-norm}:
\begin{equation}
\label{eqn-alpha-21}
\alpha_{21} \leq \frac{3C_{1}}{2}\sigma_{\max}\sqrt{\frac{\mu r\log d_{1}}{pd_{2}}}\fnorm{\Matrix{\Delta}_{\Vector{X}}}^{2}.
\end{equation}
By Lemma \ref{lemma-rip-all-space}, we have the following inequality for $\alpha_{22}$
\begin{align}
\alpha_{22} \leq & C_{2}\sqrt{\frac{d_{1}}{p}}\fnorm{\Matrix{\Delta}_{\Vector{X}}}\twoinfnorm{\Matrix{\Delta}_{\Vector{X}}}\twoinfnorm{\Vector{Y}_{\star}}\fnorm{\Matrix{\Delta}_{\Vector{Y}}} \nonumber\\
\leq & \frac{5C_{2}\mu r\sigma_{\max}}{2\sqrt{pd_{2}}}\fnorm{\Matrix{\Delta}_{\Vector{X}}}\fnorm{\Matrix{\Delta}_{\Vector{Y}}}, \label{eqn-alpha-22}
\end{align}
The second inequality is due to Lemma \ref{coro-bounded-2-inf-norm} and $\mu$-incoherence of $\Vector{M}_{\star}$. Utilizing Lemma \ref{lemma-rip-all-space} and Lemma \ref{coro-bounded-2-inf-norm}, for $\alpha_{23}$, we have 
\begin{align}
\alpha_{23} \leq & C_{2}\sqrt{\frac{d_{1}}{p}}\twoinfnorm{\overline{\Vector{X}}_k}\fnorm{\Matrix{\Delta}_{\Vector{X}}}\fnorm{\Matrix{\Delta}_{\Vector{Y}}} \nonumber\\
& \cdot \twoinfnorm{\overline{\Vector{Y}}_k\Vector{\Gamma}\left(\Vector{I}_{r} - s\overline{\Vector{Y}}_k^{\top}\overline{\Vector{Y}}_k\Vector{\Gamma}\right)} \nonumber\\
\leq & \frac{27C_{2}\mu r\sigma_{\max}}{8\sqrt{pd_{2}}}\fnorm{\Matrix{\Delta}_{\Vector{X}}}\fnorm{\Matrix{\Delta}_{\Vector{Y}}}. \label{eqn-alpha-23}
\end{align}
Combining \eqref{eqn-alpha-21}, \eqref{eqn-alpha-22} and \eqref{eqn-alpha-23}, we get
\begin{align*}
\alpha_{2} \leq & \frac{3C_{1}}{2}\sigma_{\max}\sqrt{\frac{\mu r\log d_{1}}{pd_{2}}}\fnorm{\Matrix{\Delta}_{\Vector{X}}}^{2} \\
&+ \frac{47C_{2}\mu r\sigma_{\max}}{8\sqrt{pd_{2}}}\fnorm{\Matrix{\Delta}_{\Vector{X}}}\fnorm{\Matrix{\Delta}_{\Vector{Y}}} \\
\leq & \left(\frac{3C_{1}}{2}\sigma_{\max}\sqrt{\frac{\mu r\log d_{1}}{pd_{2}}} + \frac{47C_{2}\mu r\sigma_{\max}}{18\sqrt{pd_{2}}}\right)\fnorm{\Matrix{\Delta}_{\Vector{X}}}^{2} \\
& + \frac{47C_{2}\mu r\sigma_{\max}}{8\sqrt{pd_{2}}}\fnorm{\Matrix{\Delta}_{\Vector{Y}}}^{2}.
\end{align*}
The upper bound of $\beta_{2}$ can be derived by the same method. Combining the estimation of $\alpha_{2}$ and $\beta_2$, we have
\begin{multline}
\label{eqn-alpha-beta-2}
\alpha_{2} + \beta_{2} \leq \left(\frac{3C_{1}}{2}\sigma_{\max}\sqrt{\frac{\mu r\log d_{1}}{pd_{2}}} + \frac{47C_{2}\mu r\sigma_{\max}}{18\sqrt{pd_{2}}}\right)\\
\times\left(\fnorm{\Matrix{\Delta}_{\Vector{X}}}^{2} + \fnorm{\Matrix{\Delta}_{\Vector{Y}}}^{2}\right).
\end{multline}

Using the similar method to split $\alpha_{3}$, we get
\begin{align*}
\vert\alpha_{3}\vert \leq & \underbrace{\left\vert\left\langle\Vector{X}_{\star}\Matrix{\Delta}_{\Vector{Y}}^{\top}\overline{\Vector{Y}}_k\Vector{\Gamma}^{2}\overline{\Vector{Y}}_k^{\top}, \left(p^{-1}\mathcal{P}_{\Omega}\left(\Vector{X}_{\star}\Matrix{\Delta}_{\Vector{Y}}^{\top}\right)\right)\right\rangle\right\vert}_{\alpha_{31}} \\
& + \underbrace{\left\vert\left\langle\Vector{X}_{\star}\Matrix{\Delta}_{\Vector{Y}}^{\top}\overline{\Vector{Y}}_k\Vector{\Gamma}^{2}\overline{\Vector{Y}}_k^{\top}, \left(p^{-1}\mathcal{P}_{\Omega}\left(\Matrix{\Delta}_{\Vector{X}}\overline{\Vector{Y}}_k^{\top}\right)\right)\right\rangle\right\vert}_{\alpha_{32}}.
\end{align*}
By Lemma \ref{lemma-rip-subspace}, we have
\begin{align}
\alpha_{31} \leq & C_{1}\sqrt{\frac{\mu r\log d_{1}}{pd_{2}}} \fnorm{\Vector{X}_{\star}\Matrix{\Delta}_{\Vector{Y}}^{\top}}\fnorm{\Vector{X}_{\star}\Matrix{\Delta}_{\Vector{Y}}^{\top}\overline{\Vector{Y}}_k\Vector{\Gamma}^{2}\overline{\Vector{Y}}_k^{\top}} \nonumber\\
\leq & C_{1}\sqrt{\frac{\mu r\log d_{1}}{pd_{2}}}\opnorm{\Vector{X}_{\star}}^{2}\opnorm{\Vector{\Gamma}}^{2}\opnorm{\overline{\Vector{Y}}_k}^{2}\fnorm{\Matrix{\Delta}_{\Vector{Y}}}^{2} \nonumber\\
\leq & \frac{81C_{1}\sigma_{\max}^{2}}{16}\sqrt{\frac{\mu r\log d_{1}}{pd_{2}}}\fnorm{\Matrix{\Delta}_{\Vector{Y}}}^{2}, \label{eqn-alpha-31}
\end{align}
The last inequality is due to \eqref{eqn-bounded-op-norm}. According to Lemma \ref{lemma-rip-all-space}, for $\alpha_{32}$ we have
{\small \begin{align} 
\alpha_{32} \leq & C_{2}\sqrt{\frac{d_{1}}{p}}\fnorm{\Matrix{\Delta}_{\Vector{X}}}\twoinfnorm{\Vector{X}_{\star}}\twoinfnorm{\overline{\Vector{Y}}_k}\fnorm{\overline{\Vector{Y}}_k\Vector{\Gamma}\overline{\Vector{Y}}_k^{\top}\Matrix{\Delta}_{\Vector{Y}}} \nonumber\\
\leq & C_{2}\sqrt{\frac{d_{1}}{p}}\fnorm{\Matrix{\Delta}_{\Vector{X}}}\twoinfnorm{\Vector{X}_{\star}}\twoinfnorm{\overline{\Vector{Y}}_k}\opnorm{\Vector{\Gamma}}\opnorm{\overline{\Vector{Y}}_k}^{2}\fnorm{\Matrix{\Delta}_{\Vector{Y}}} \nonumber\\
\leq & \frac{243C_{2}\mu r\sigma_{\max}^{2}}{32\sqrt{pd_{2}}}\fnorm{\Matrix{\Delta}_{\Vector{X}}}\fnorm{\Matrix{\Delta}_{\Vector{Y}}}, \label{eqn-alpha-32}
\end{align}}

\noindent where the last inequality is by Lemma \ref{coro-bounded-2-inf-norm}, \eqref{eqn-bounded-op-norm} and \eqref{eqn-op-gamma}. Repeating the process for $\beta_{3}$ and utilizing mean value inequality, we establish
\begin{multline}
\label{eqn-alpha-beta-3}
\alpha_{3} + \beta_{3} \leq \left(\frac{81C_{1}\sigma_{\max}^{2}}{16}\sqrt{\frac{\mu r\log d_{1}}{pd_{2}}} + \frac{243C_{2}\mu r\sigma_{\max}^{2}}{64\sqrt{pd_{2}}} \right)\\
\times\left(\fnorm{\Matrix{\Delta}_{\Vector{X}}}^{2} + \fnorm{\Matrix{\Delta}_{\Vector{Y}}}^{2}\right).
\end{multline}

Finally using the same method of estimating $\gamma_{1}$ in Lemma \ref{lemma-small-balance}, we have
\begin{align*}
\alpha_{4} \leq & \left(\frac{3C_{1}\sigma_{\max}}{2}\sqrt{\frac{\mu r\log d_{1}}{pd_{2}}}\fnorm{\Matrix{\Delta}_{\Vector{X}}} + \frac{27C_{2}\mu r\sigma_{\max}}{8\sqrt{pd_{2}}}\fnorm{\Matrix{\Delta}_{\Vector{R}}}\right)^{2} \\
\leq & \frac{9C_{1}^{2}\sigma_{\max}^{2}\mu r\log d_{1}}{pd_{2}}\fnorm{\Matrix{\Delta}_{\Vector{X}}}^{2} + \frac{27^{2}C_{2}^{2}\mu^{2} r^{2}\sigma_{\max}^{2}}{32pd_{2}}\fnorm{\Matrix{\Delta}_{\Vector{R}}}^{2}.
\end{align*}
The upper bound of $\beta_{4}$ can also be derived. Combining $\alpha_{4}$ and $\beta_{4}$, we have
\begin{multline}
\label{eqn-alpha-beta-4}
\alpha_{4} + \beta_{4} \leq \left(\frac{9C_{1}^{2}\sigma_{\max}^{2}\mu r\log d_{1}}{pd_{2}} + \frac{27^{2}C_{2}^{2}\mu^{2} r^{2}\sigma_{\max}^{2}}{32pd_{2}}\right)\\
\cdot \left(\fnorm{\Matrix{\Delta}_{\Vector{X}}}^{2} + \fnorm{\Matrix{\Delta}_{\Vector{R}}}^{2}\right).
\end{multline}

Combining \eqref{eqn-alpha-1}, \eqref{eqn-alpha-beta-2}, \eqref{eqn-alpha-beta-3} and \eqref{eqn-alpha-beta-4}, we establish
\begin{multline*}
\fnorm{\Vector{X}_{k+1}\Vector{Q}_{k} - \Vector{X}_{\star}}^{2} + \fnorm{\Vector{Y}_{k+1}\Vector{Q}_{k}^{-\top} - \Vector{Y}_{\star}}^{2} \\
\leq \left(1 - C(p, s)s\sigma_{\min}\right)\left(\fnorm{\Matrix{\Delta}_{\Vector{X}}}^{2} + \fnorm{\Matrix{\Delta}_{\Vector{R}}}^{2}\right),
\end{multline*}
where $C(p, s)$ is a constant depending on $p$ and $s$:
\begin{multline*}
C(p, s) = \frac{1}{24} - \bigg(3C_{1}\kappa\sqrt{\frac{\mu r\log d_{1}}{pd_{2}}} + \frac{47C_{2}\mu r\kappa}{9\sqrt{pd_{2}}}  \\
+ \frac{81C_{1}\kappa s\sigma_{\max}}{8}\sqrt{\frac{\mu r\log d_{1}}{pd_{2}}} + \frac{243C_{2}\mu r\kappa s\sigma_{\max}}{32\sqrt{pd_{2}}}  \\
 + \frac{9C_{1}^{2}\mu r\kappa s\sigma_{\max}\log d_{1}}{pd_{2}} + \frac{27^{2}C_{2}^{2}\mu^{2}r^{2}\kappa s\sigma_{\max}}{32pd_{2}} \bigg).
\end{multline*}
Since $p$ and $s$ satisfy \eqref{eqn-assumption-p-s}, we have \[
C(p, s) \geq \frac{1}{50}~.
\]
Consequently, we get
\begin{align*}
&\dist{\Vector{F}_{k+1}}{\Vector{F}_{\star}}^{2} \\
\leq & \fnorm{\Vector{X}_{k+1}\Vector{Q}_{k} - \Vector{X}_{\star}}^{2} + \fnorm{\Vector{Y}_{k+1}\Vector{Q}_{k}^{-\top} - \Vector{Y}_{\star}}^{2} \\
\leq & \left(1 - \frac{s\sigma_{\min}}{50}\right)\left(\fnorm{\Matrix{\Delta}_{\Vector{X}}}^{2} + \fnorm{\Matrix{\Delta}_{\Vector{R}}}^{2}\right) \\
\leq & \left(1 - \frac{s\sigma_{\min}}{100}\right)^{2}\dist{\Vector{F}_{k}}{\Vector{F}_{\star}}^{2}.
\end{align*}



\begin{thebibliography}{10}
	\providecommand{\url}[1]{#1}
	\csname url@samestyle\endcsname
	\providecommand{\newblock}{\relax}
	\providecommand{\bibinfo}[2]{#2}
	\providecommand{\BIBentrySTDinterwordspacing}{\spaceskip=0pt\relax}
	\providecommand{\BIBentryALTinterwordstretchfactor}{4}
	\providecommand{\BIBentryALTinterwordspacing}{\spaceskip=\fontdimen2\font plus
		\BIBentryALTinterwordstretchfactor\fontdimen3\font minus
		\fontdimen4\font\relax}
	\providecommand{\BIBforeignlanguage}[2]{{%
			\expandafter\ifx\csname l@#1\endcsname\relax
			\typeout{** WARNING: IEEEtran.bst: No hyphenation pattern has been}%
			\typeout{** loaded for the language `#1'. Using the pattern for}%
			\typeout{** the default language instead.}%
			\else
			\language=\csname l@#1\endcsname
			\fi
			#2}}
	\providecommand{\BIBdecl}{\relax}
	\BIBdecl
	
	\bibitem{candes2010power}
	E.~J. Cand{\`e}s and T.~Tao, ``The power of convex relaxation: Near-optimal
	matrix completion,'' \emph{IEEE Trans. Inf. Theory}, vol.~56, no.~5, pp.
	2053--2080, 2010.
	
	\bibitem{candes2012exact}
	E.~Cand\`{e}s and B.~Recht, ``Exact matrix completion via convex
	optimization,'' \emph{Communications of the ACM}, vol.~55, no.~6, p.
	111–119, jun 2012.
	
	\bibitem{ramlatchan2018survey}
	A.~Ramlatchan, M.~Yang, Q.~Liu, M.~Li, J.~Wang, and Y.~Li, ``A survey of matrix
	completion methods for recommendation systems,'' \emph{Big Data Mining and
		Analytics}, vol.~1, no.~4, pp. 308--323, 2018.
	
	\bibitem{chen2022review}
	Z.~Chen and S.~Wang, ``A review on matrix completion for recommender systems,''
	\emph{Knowledge and Information Systems}, vol.~64, no.~1, pp. 1--34, 2022.
	
	\bibitem{xue2017depth}
	H.~Xue, S.~Zhang, and D.~Cai, ``Depth image inpainting: Improving low rank
	matrix completion with low gradient regularization,'' \emph{IEEE Transactions
		on Image Processing}, vol.~26, no.~9, pp. 4311--4320, 2017.
	
	\bibitem{cai2024restoration}
	J.-F. Cai, J.~K. Choi, J.~Li, and G.~Yin, ``Restoration guarantee of image
	inpainting via low rank patch matrix completion,'' \emph{SIAM Journal on
		Imaging Sciences}, vol.~17, no.~3, pp. 1879--1908, 2024.
	
	\bibitem{xiao2017noise}
	F.~Xiao, W.~Liu, Z.~Li, L.~Chen, and R.~Wang, ``Noise-tolerant wireless sensor
	networks localization via multinorms regularized matrix completion,''
	\emph{IEEE Transactions on Vehicular Technology}, vol.~67, no.~3, pp.
	2409--2419, 2017.
	
	\bibitem{kim2021deep}
	S.~Kim, L.~T. Nguyen, J.~Kim, and B.~Shim, ``Deep learning based low-rank
	matrix completion for iot network localization,'' \emph{IEEE Wireless
		Communications Letters}, vol.~10, no.~10, pp. 2115--2119, 2021.
	
	\bibitem{raghunandan2010matrix}
	R.~H. Keshavan, A.~Montanari, and S.~Oh, ``Matrix completion from a few
	entries,'' \emph{IEEE Transactions on Information Theory}, vol.~56, no.~6,
	pp. 2980--2998, 2010.
	
	\bibitem{koren2009matrix}
	Y.~Koren, R.~Bell, and C.~Volinsky, ``Matrix factorization techniques for
	recommender systems,'' \emph{Computer}, vol.~42, no.~8, pp. 30--37, 2009.
	
	\bibitem{chen2015fast}
	Y.~Chen and M.~J. Wainwright, ``Fast low-rank estimation by projected gradient
	descent: General statistical and algorithmic guarantees,'' \emph{arXiv
		preprint arXiv:1509.03025}, 2015.
	
	\bibitem{sun2016guaranteed}
	R.~Sun and Z.-Q. Luo, ``Guaranteed matrix completion via non-convex
	factorization,'' \emph{IEEE Transactions on Information Theory}, vol.~62,
	no.~11, pp. 6535--6579, 2016.
	
	\bibitem{chen2020noisy}
	Y.~Chen, Y.~Chi, J.~Fan, C.~Ma, and Y.~Yan, ``Noisy matrix completion:
	Understanding statistical guarantees for convex relaxation via nonconvex
	optimization,'' \emph{SIAM Journal on Optimization}, vol.~30, no.~4, pp.
	3098--3121, 2020.
	
	\bibitem{chen2020nonconvex}
	J.~Chen, D.~Liu, and X.~Li, ``Nonconvex rectangular matrix completion via
	gradient descent without $\ell_{2,\infty}$ regularization,'' \emph{IEEE
		Transactions on Information Theory}, vol.~66, no.~9, pp. 5806--5841, 2020.
	
	\bibitem{davenport2016overview}
	M.~A. Davenport and J.~Romberg, ``An overview of low-rank matrix recovery from
	incomplete observations,'' \emph{IEEE Journal of Selected Topics in Signal
		Processing}, vol.~10, no.~4, pp. 608--622, 2016.
	
	\bibitem{recht2011simpler}
	B.~Recht, ``A simpler approach to matrix completion,'' \emph{J. Mach. Learn.
		Res.}, vol.~12, no. Dec, pp. 3413--3430, 2011.
	
	\bibitem{jain2013low}
	P.~Jain, P.~Netrapalli, and S.~Sanghavi, ``Low-rank matrix completion using
	alternating minimization,'' in \emph{Proceedings of the Forty-Fifth Annual
		ACM Symposium on Theory of Computing}.\hskip 1em plus 0.5em minus 0.4em\relax
	New York, NY, USA: Association for Computing Machinery, 2013, p. 665–674.
	
	\bibitem{nie2018matrix}
	F.~Nie, Z.~Hu, and X.~Li, ``Matrix completion based on non-convex low-rank
	approximation,'' \emph{IEEE Transactions on Image Processing}, vol.~28,
	no.~5, pp. 2378--2388, 2018.
	
	\bibitem{ma2020implicit}
	C.~Ma, K.~Wang, Y.~Chi, and Y.~Chen, ``Implicit regularization in nonconvex
	statistical estimation: Gradient descent converges linearly for phase
	retrieval, matrix completion, and blind deconvolution,'' \emph{Foundations of
		Computational Mathematics}, vol.~20, pp. 451--632, 2020.
	
	\bibitem{ma2024convergence}
	J.~Ma and S.~Fattahi, ``Convergence of gradient descent with small
	initialization for unregularized matrix completion,'' in \emph{The Thirty
		Seventh Annual Conference on Learning Theory}.\hskip 1em plus 0.5em minus
	0.4em\relax PMLR, 2024, pp. 3683--3742.
	
	\bibitem{ye2021global}
	T.~Ye and S.~S. Du, ``Global convergence of gradient descent for asymmetric
	low-rank matrix factorization,'' in \emph{Advances in Neural Information
		Processing Systems: Volume 34}, 2021, pp. 1429--1439.
	
	\bibitem{ma20121beyond}
	C.~Ma, Y.~Li, and Y.~Chi, ``Beyond procrustes: Balancing-free gradient descent
	for asymmetric low-rank matrix sensing,'' \emph{IEEE Transactions on Signal
		Processing}, vol.~69, pp. 867--877, 2021.
	
	\bibitem{soltanolkotabi2025implicit}
	M.~Soltanolkotabi, D.~St{\"o}ger, and C.~Xie, ``Implicit balancing and
	regularization: Generalization and convergence guarantees for
	overparameterized asymmetric matrix sensing,'' \emph{IEEE Transactions on
		Information Theory}, 2025.
	
	\bibitem{zheng2016convergence}
	Q.~Zheng and J.~Lafferty, ``Convergence analysis for rectangular matrix
	completion using {B}urer-{M}onteiro factorization and gradient descent,''
	\emph{arXiv preprint arXiv:1605.07051}, 2016.
	
	\bibitem{chen2019model}
	J.~Chen and X.~Li, ``Model-free nonconvex matrix completion: Local minima
	analysis and applications in memory-efficient kernel {PCA},'' \emph{Journal
		of Machine Learning Research}, vol.~20, no. 142, pp. 1--39, 2019.
	
\end{thebibliography}
\end{document}